\documentclass[journal]{IEEE}

\usepackage[colorlinks,urlcolor=blue,linkcolor=blue,citecolor=blue]{hyperref}

\usepackage{color,array}

\usepackage{graphicx}
\usepackage{amsmath}
\usepackage{multirow}
\usepackage{booktabs}
\usepackage{amsfonts} 
\usepackage{algorithm}
\usepackage{algorithmicx}
\usepackage{algpseudocode}
 \usepackage{caption}
\usepackage{subcaption}
\usepackage{bm}
\usepackage{booktabs}
\usepackage{orcidlink}
\usepackage{xcolor}
\usepackage{amssymb}

\hypersetup{draft}
\usepackage{titletoc}

\newcommand{\R}{\mathbb{R}}
\newcommand{\vx}{\mathbf{x}}
\newcommand{\vy}{\mathbf{y}}

\newcommand{\vw}{\mathbf{w}}
\newcommand{\vtheta}{{\boldsymbol{\theta}}}
\newcommand{\calS}{\mathcal{S}}

\DeclareMathOperator*{\argmin}{arg\,min}

\newcommand{\diam}{\mathrm{diam}}

\begin{document}

\title{Hinge Regression Trees and HRT-Boost: Newton-Optimized Oblique Learning for Compact Tabular Models}

\author{Hongyi Li \orcidlink{0000-0003-3585-7365}, Jun Xu \orcidlink{0000-0002-2934-4814},~\IEEEmembership{Senior Member,~IEEE}, Hong Yan  \orcidlink{0000-0001-9661-3095},~\IEEEmembership{Life Fellow,~IEEE}

\thanks{A preliminary version of this work was presented at the International Conference on Learning Representations (ICLR) 2026 \cite{li2026hinge}. This manuscript substantially extends the conference version by introducing HRT-Boost, a synergistic ensemble extension, along with new theoretical analysis on stage-wise risk reduction and comprehensive ensemble-level empirical evaluations.}

\thanks{
Hongyi Li and Jun Xu are with the School of Intelligence Science and Engineering, Harbin Institute of Technology, Shenzhen, 518055, China 
and with the Shenzhen Key Lab for Advanced Motion Control and Modern Automation Equipments, 
Shenzhen, 518055, China
(email: 23b904015@stu.hit.edu.cn; xujunqgy@hit.edu.cn).

Hong Yan is with the Department of Electrical Engineering, City University of Hong Kong, Kowloon, Hong Kong (e-mail: h.yan@cityu.edu.hk)
}

\thanks{This work has been submitted to the IEEE for possible publication. Copyright may be transferred without notice, after which this version may no longer be accessible.}
}

\markboth{IEEE Transactions on xxxx, Vol. 00, No. 0, Month 2026}
{H. Li \MakeLowercase{\textit{et al.}}: Hinge Regression Trees and HRT-Boost}

\maketitle

\newtheorem{lemma}{Lemma}
\newtheorem{remark}{Remark}
\newtheorem{theorem}{Theorem}
\newtheorem{corollary}{Corollary}
\newtheorem{assumption}{Assumption}
\newtheorem{proposition}{Proposition}

\begin{abstract}
Learning high-quality oblique decision trees remains a significant challenge due to the discrete and non-convex nature of split optimization. We present the Hinge Regression Tree (HRT) framework, which reframes each oblique split as a nonlinear least-squares problem over two linear predictors whose max/min envelope induces ReLU-like representation capacity. We show that the resulting node-level optimization can be interpreted as a damped Newton method, and we establish the monotonic decrease of the node objective for its backtracking line-search variant. We establish, theoretically, that HRT is a universal approximator with an explicit $O(\delta^2)$ approximation rate. \textcolor{black}{Building upon this base learner, we propose HRT-Boost, a mathematically synergistic ensemble extension that couples node-level Newton updates with stage-wise functional gradient descent. We show that this ensemble construction admits a stage-wise empirical risk reduction guarantee under the squared loss.} Empirical evaluations on synthetic and real-world benchmarks show that HRT is highly competitive with established single-tree baselines, and HRT-Boost compares favorably with strong ensemble baselines and often yields substantially more compact models.
The code is publicly available at \url{https://github.com/Hongyi-Li-sz/HRT-Boost}.
\end{abstract}

\begin{IEEEkeywords}
Decision trees, Machine learning, Oblique trees, Tabular data analysis.
\end{IEEEkeywords}

\section{Introduction}

\IEEEPARstart{D}{ecision} trees are among the most influential models in supervised learning due to their interpretability and ability to capture nonlinear relationships. 
The classical CART framework \cite{breiman1984cart} introduced axis-aligned recursive partitioning, which remains a cornerstone of modern tree-based methods. 
However, such axis-aligned trees often require deep structures to approximate even simple relationships in high-dimensional or correlated settings, limiting efficiency and generalization \cite{hastie2009elements}.

To address these issues, oblique regression trees extend splitting criteria from axis-aligned thresholds to hyperplanes defined by linear combinations of features. This formulation yields more compact structures and improved predictive performance, particularly when features are correlated \cite{murthy1994oc1}. Nonetheless, finding the optimal oblique hyperplane is inherently challenging due to the discrete and non-convex nature of the split optimization \cite{laurent1976constructing, hancock1996lower}. Practical algorithms have traditionally relied on greedy heuristics, evolutionary methods, or convex surrogates \cite{loh2014fifty, panda2024vanilla, karthikeyanlearning}, but efficient and theoretically sound solutions that scale well remain limited.

We introduce a novel oblique regression tree framework, termed the Hinge Regression Tree (HRT), that fundamentally redefines the node-splitting problem. Specifically, we propose to learn each node split as a nonlinear least squares optimization involving two distinct linear models. The basis function adopts a hinge formulation, intrinsically endowing HRT with ReLU-like representation capacity. We characterize this iterative optimization as a damped Newton method and prove the monotonic decrease of the objective under a backtracking line search. \textcolor{black}{While a single HRT provides a powerful and structurally efficient predictor, relying on a single tree to capture all complex, multi-scale interactions in tabular data can be limiting. Functional gradient boosting elegantly addresses this by constructing additive models that sequentially correct residual errors. However, rather than treating boosting merely as a generic wrapper, we demonstrate that HRT serves as a mathematically synergistic base learner for this framework.}
\textcolor{black}{
The core advantage of HRT-Boost lies in its intrinsic two-level optimization scheme: at the node level, damped Newton updates are employed to capture complex local data geometries, while at the ensemble level, stage-wise gradient descent is used to globally minimize the empirical risk. Because each HRT base learner is highly expressive and optimized directly for the residual landscape, it provides a more informative descent direction in the function space. This synergy allows HRT-Boost to achieve strong predictive performance while efficiently reducing the empirical risk at each stage, resulting in ensemble models that are structurally parsimonious compared to traditional boosting approaches.}

Our contributions can be summarized as follows:
\begin{enumerate}
    \item \textcolor{black}{We introduce the Hinge Regression Tree (HRT), a novel tree-based modeling framework that reformulates node splitting as a nonlinear least squares problem. By hierarchically composing max/min envelope-based splits, HRT essentially forms a circuit of ReLU-like operations, unlocking powerful representation capacity while maintaining structural parsimony.}
    \item We characterize the node-level alternating optimization as a damped Newton method and establish the monotonic decrease of the node objective for its line-search variant. Under a fixed partition, the iteration converges to the corresponding OLS solution.
    \item We establish that the HRT model class is a universal approximator with an explicit $O(\delta^2)$ approximation rate, linking its approximation error directly to the partition granularity.
    \item \textcolor{black}{We propose HRT-Boost, a boosting extension that uses Newton-optimized HRTs as base learners for stage-wise residual fitting. Under the squared loss, we establish a stage-wise empirical risk reduction property and empirically show that HRT-Boost achieves highly competitive predictive performance with compact ensemble structures.}
\end{enumerate}

\textcolor{black}{\emph{Differences from the Conference Version:} 
A preliminary version of this work was presented at ICLR 2026 \cite{li2026hinge}, which focused primarily on the foundational formulation of the Hinge Regression Tree (HRT) as a standalone base learner. In this journal submission, we substantially extend the methodology and theoretical analysis to the ensemble learning paradigm. The key new contributions specific to this manuscript include: (1) the proposal of HRT-Boost, which couples node-level Newton updates with stage-wise functional gradient descent; (2) a formal theoretical proof establishing the stage-wise empirical risk reduction guarantee for HRT-Boost under the squared loss; and (3) an extensively expanded empirical evaluation demonstrating that HRT-Boost matches or exceeds the predictive accuracy of highly optimized ensemble baselines while yielding substantially more compact models.}

\section{Related Work}

\paragraph{Oblique regression trees}  
The quest for optimal oblique splits dates back decades, driven by their ability to form more compact and expressive decision boundaries than axis-aligned trees \cite{murthy1994oc1, loh2014fifty}. Early approaches, such as OC1 \cite{murthy1994oc1}, largely relied on greedy heuristics or statistical tests for feature selection, like GUIDE \cite{loh2009improving}. However, the NP-hard nature of finding optimal oblique hyperplanes \cite{laurent1976constructing} motivated a shift towards more sophisticated optimization-based methodologies. These included alternating optimization strategies, such as TAO \cite{carreira2018alternating}, and gradient-based techniques like DGT \cite{karthikeyanlearning} and DTSemNet \cite{panda2024vanilla}, which treat trees as differentiable architectures. While these methods significantly advanced the field, they often relied on specific neural network approximations or complex search heuristics.
Our work formulates the core splitting problem as a direct, nonlinear least squares optimization with a clear equivalence to a damped Newton method, providing a theoretically grounded and efficient alternative that serves as a robust base learner for ensembles.

\paragraph{Ensemble learning and gradient boosting} 
Gradient boosting algorithms, notably XGBoost \cite{chen2016xgboost} and LightGBM \cite{ke2017lightgbm}, have become highly competitive methods for supervised learning on tabular data. These methods build on foundational frameworks that combine multiple weak learners—such as the early boosting algorithms \cite{freund1997decision} and the random subspace method \cite{ho1998random}—to form strong predictors. Gradient boosting was originally formulated as a functional gradient descent approach that constructs additive models by sequentially fitting base learners to the negative gradient of a loss function \cite{friedman2001greedy}. 
Explicit optimization of the underlying tree structures—such as through regularized greedy formulations \cite{johnson2014learning}—has yielded more compact and accurate models than treating base learners as black boxes. 
Alongside tree-based ensembles, deep learning approaches have also been actively adapted for tabular data. Representative methods include TabNet \cite{arik2021tabnet}, which utilizes sequential attention for interpretable feature selection, and TabM \cite{gorishniy2025tabm}, which introduces a parameter-efficient MLP ensemble.

\paragraph{Piecewise linear models and hinge functions}  
Decision trees are essentially piecewise constant or linear approximations that recursively partition the input space. The universal approximation property of piecewise linear functions is a cornerstone of approximation theory \cite{cybenko1989approximation, hornik1991approximation}. Recent developments have confirmed explicit convergence rates for oblique trees \cite{cattaneo2024convergence}. Hinge functions, initially utilized in hinging hyperplanes \cite{breiman1993hinging}, provide piecewise linear primitives. Their use extends from the hinge loss in SVMs \cite{cortes1995support} to ReLU activations in modern neural networks \cite{nair2010rectified, glorot2011deep}. Our method directly leverages hinge formulations to optimize local node splits, thereby improving structural clarity while maintaining ReLU-like representation capacity.

\section{Methodology: Construction of a Hinge Regression Tree}
\label{sec:methodology}

This section describes the construction of a HRT. We first define the global construction objective, then reformulate the oblique split optimization as a nonlinear least-squares problem. We demonstrate that our iterative splitting procedure is equivalent to a damped Newton method, providing a foundation, in principle, for the tree's local and global optimization. Fig.~\ref{fig:hrt_overview} illustrates the proposed HRT framework and its boosting extension, HRT-Boost.

\begin{figure*}[htbp]
\vspace{-0.1in}
\begin{center}
\centerline{\includegraphics[width=0.99\textwidth]{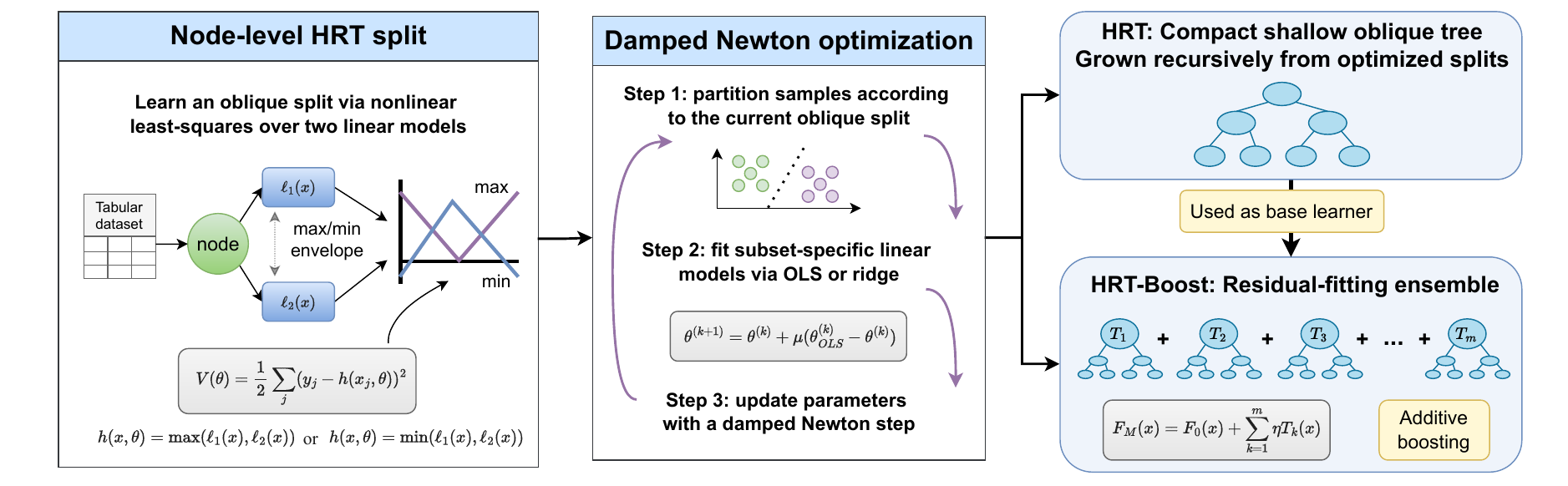}}
\caption{Overview of the proposed HRT framework and its boosting extension, HRT-Boost. 
The node-level HRT split formulates each decision node as a nonlinear least-squares problem over two linear predictors. 
The split parameters are optimized through a damped Newton procedure that alternates between sample partitioning and subset-specific least-squares fitting. 
The resulting compact shallow oblique tree (HRT) can serve as the base learner in the residual-fitting ensemble HRT-Boost.}
\label{fig:hrt_overview}
\end{center}
\vskip -0.2in
\end{figure*}

\subsection{Global Construction Objective}
\label{sec:construction}

Consider a dataset \(\mathcal{S} = \{(\vx_i, y_i)\}_{i=1}^N
\), where \(\vx_i \in \mathbb{R}^{d}\) and \(y_i \in \mathbb{R}\). Let \(\tilde{\vx} \in \R^{d+1}\) be the augmented feature vector, i.e., \(\tilde{\vx} = [x_{1}, \dots, x_{d}, 1]^T\). The tree consists of internal nodes \(t \in \mathbb{T}_{\mathrm{I}}\) for branching tests and leaf nodes \(t \in \mathbb{T}_{\mathrm{L}}\) for prediction. For each node \(\mathbb{D}_t\), there is an associated parameter vector \(\vtheta_t \in \mathbb{R}^{d+1}\). Let \(t_{l}(\vx)\) be the leaf index reached by input \(\vx\), and its linear predictor be \(\ell_{t_l}(\vx)=\tilde{\vx}^T\vtheta_{t_l(\vx)}\). 
The global objective is to minimize the total sum of squared errors over the entire tree:
\begin{equation}\label{eq:global_objective}
   J(\Theta) = \frac{1}{2}\sum_{i=1}^N \bigl(y_i-\hat{y}(\vx_i)\bigr)^2,
\end{equation}
where $\Theta$ collects the parameters of all leaf predictors and $\hat{y}(\vx_i)=\ell_{t_l(\vx_i)}(\vx_i)$.

\subsection{Node Splitting as Nonlinear Least Squares}
\label{sec:local_optimization}

At any internal node \(\mathbb{D}_t\), our goal is to find two sets of parameters \(\vtheta_{t_1}, \vtheta_{t_2} \in \R^{d+1}\) that define two linear functions \(\ell_{t_1}(\vx)\) and \(\ell_{t_2}(\vx)\). We minimize the following nonlinear least squares objective:
\begin{equation}
V({\vtheta}) = \frac{1}{2} \sum_{\vx_j\in \mathbb{D}_t} \left( y_j - h(\vx_j, {\vtheta}) \right)^2,
\label{eq:objective_en}
\end{equation}
with respect to \({\vtheta} = [\vtheta_{t_1}^T, \vtheta_{t_2}^T]^T\). The basis function \(h(\vx_j, {\vtheta})\) is a hinge function defined as:
\[
h(\vx_j, {\vtheta}) = \max \left( \tilde{\vx}_j^T \vtheta_{t_1}, \tilde{\vx}_j^T \vtheta_{t_2} \right) \quad \text{or}  \quad \min \left( \tilde{\vx}_j^T \vtheta_{t_1}, \tilde{\vx}_j^T \vtheta_{t_2} \right).
\]
This hinge formulation intrinsically defines an oblique decision boundary \(\tilde{\vx}^T (\vtheta_{t_1} - \vtheta_{t_2}) = 0\). Depending on which side of this hyperplane a point \(\vx_j\) lies, the model selects either \(\ell_{t_1}\) or \(\ell_{t_2}\). Minimizing \(V(\vtheta)\) thus optimizes a structure with one root and two leaves, allowing the split to adapt to local convex or concave data structures.

\subsection{Optimization via a Damped Newton Method}
\label{newton}

\textcolor{black}{Directly minimizing \eqref{eq:objective_en} is non-trivial due to the non-differentiable hinge. However, by introducing dynamic partitions based on the current parameters \(\vtheta\):}
\begin{align*}
    \calS_1(\vtheta) &= \{\vx_j \in \mathbb{D}_t \mid \tilde{\vx}_j^T \vtheta_{t_1} \ge \tilde{\vx}_j^T \vtheta_{t_2} \}, \quad \calS_2(\vtheta) = \mathbb{D}_t \setminus \calS_1(\vtheta),
\end{align*}
\textcolor{black}{the objective becomes differentiable within each partition. To derive an efficient update, we characterize the optimization as a Newton method. For a fixed partition, the gradient \(\nabla V\) and the Hessian \(\nabla^2 V\) are:}
\begin{equation}
\begin{split}
\nabla V(\vtheta) &= - \begin{pmatrix}
\sum_{\vx_j \in \calS_1} \tilde{\vx}_j (y_j - \tilde{\vx}_j^T \vtheta_{t_1}) \\
\sum_{\vx_j \in \calS_2} \tilde{\vx}_j (y_j - \tilde{\vx}_j^T \vtheta_{t_2})
\end{pmatrix}, \\[1ex]
\nabla^2 V(\vtheta) &= \begin{pmatrix}
\sum_{\vx_j \in \calS_1} \tilde{\vx}_j \tilde{\vx}_j^T & \mathbf{0} \\
\mathbf{0} & \sum_{\vx_j \in \calS_2} \tilde{\vx}_j \tilde{\vx}_j^T
\end{pmatrix}.
\end{split}
\label{eq:grad_hess_full}
\end{equation}
\textcolor{black}{Note that because \(h(\vx_j, \vtheta)\) is locally linear, the Gauss-Newton approximation of the Hessian is exact. The standard Newton update \(\vtheta^{(k+1)} = \vtheta^{(k)} - \mu [\nabla^2 V]^{-1} \nabla V\) simplifies to:}
\begin{equation}
\vtheta^{(k+1)} = \vtheta^{(k)} + \mu (\vtheta_{\text{OLS}}^{(k)} - \vtheta^{(k)}),
\label{eq:newton_update_en}
\end{equation}
\textcolor{black}{where \(\vtheta_{\text{OLS}}^{(k)}\) is the Ordinary Least Squares solution independently computed for each subset \(\calS_i\). 
This reformulation also explains the computational efficiency of HRT: under fixed partitions, the hinge-based split decomposes into two linear least-squares subproblems, so the Newton/Gauss–Newton direction can be obtained in closed form from the corresponding OLS solutions, avoiding a generic second-order optimization step.
When the damping factor \(\mu=1\), the update reduces to \(\vtheta^{(k+1)} = \vtheta_{\text{OLS}}^{(k)}\), meaning the next iteration's parameters are simply the OLS fits of the current partition. In practice, we employ both fixed damping and backtracking line search (\emph{auto}) to ensure stable convergence, especially in non-convex landscapes (see Appendix~A for the full algebraic proof of this equivalence).}

\subsection{Recursive Construction and Robustness}
\label{sec:tree_construction}

The construction follows a recursive routine. Starting from the root, each internal node applies the Newton-based splitting optimization. Data points are assigned to child nodes based on the hinge test, and the process repeats until a stopping criterion (e.g., max depth) is met. 

To enhance robustness against multicollinearity, ridge regression (L2 regularization) can, optionally, be incorporated into the OLS fitting steps. If the iterative optimization at a node fails to converge within a fixed budget, a fallback mechanism ensures consistent tree growth. 
Detailed formulations of these mechanisms, including the ridge-regularized fitting and the fallback strategy, are provided in Appendix~B. The complete pseudocode and complexity analysis are deferred to Appendices~C and D, respectively.

\subsection{Hierarchical ReLU-like Representation Capacity}
\label{sec:relu_logic}

The HRT's representation capacity stems from its hierarchical composition of hinge operations. Each split routes data via a hinge test \(s_t(\vx)=\ell_{t_1}(\vx)-\ell_{t_2}(\vx) \ge 0\). Using the identity \(\max(a,b)=a+\max(0, b-a)\), a node split is essentially a single-unit ReLU gate. A tree of depth \(k\) thus generates a computational circuit of nested ReLU operations, allowing the model to approximate complex nonlinearities with a significantly more compact structure than axis-aligned trees. This compositional view will be useful to understand the boosting extension developed next. While a single HRT realizes a hierarchical hinge composition, HRT-Boost enhances the model class through an additive expansion of these hierarchical modules.

\subsection{HRT-Boost: Functional Gradient Boosting with Hinge Regression Trees}
\label{sec:hrt_boost}

\textcolor{black}{The preceding subsections established that a single HRT forms a compact, piecewise linear predictor through hierarchical hinge compositions, and that each node split can be optimized via a damped Newton procedure. Building on this foundation, we extend HRT from a standalone predictor to an additive ensemble model. Specifically, we propose \emph{HRT-Boost}, which uses the HRT as the base learner in a stage-wise functional gradient boosting framework under the squared loss.}

\textcolor{black}{Given the training set $\mathcal{S}=\{(\vx_i,y_i)\}_{i=1}^N$, we define the ensemble predictor after $m$ boosting stages as
\begin{equation}
F_m(\vx) = F_0(\vx) + \sum_{k=1}^{m} \eta\, T_k(\vx),
\label{eq:hrtboost_ensemble}
\end{equation}
where $T_k$ denotes the $k$-th HRT base learner, and $\eta \in (0,1]$ is the learning rate. The initialization is chosen as the constant predictor
\begin{equation}
F_0(\vx)=\frac{1}{N}\sum_{i=1}^N y_i,
\label{eq:hrtboost_init}
\end{equation}
which minimizes the squared loss over all constants.}

\textcolor{black}{To formalize the boosting procedure, consider the empirical risk
\begin{equation}
\mathcal{L}(F) = \frac{1}{2}\sum_{i=1}^N \bigl(y_i - F(\vx_i)\bigr)^2.
\label{eq:hrtboost_loss}
\end{equation}
Under the squared loss, the negative functional gradient evaluated at the current ensemble $F_{m-1}$ is given by
\begin{equation}
-\left.\frac{\partial \mathcal{L}(F)}{\partial F(\vx_i)}\right|_{F=F_{m-1}}
= y_i - F_{m-1}(\vx_i).
\label{eq:hrtboost_neggrad}
\end{equation}
Hence, the pseudo-residual at stage $m$ coincides exactly with the ordinary residual:
\begin{equation}
r_{i,m} = y_i - F_{m-1}(\vx_i), \qquad i=1,\dots,N.
\label{eq:hrtboost_residual}
\end{equation}}

\textcolor{black}{At the $m$-th stage, we construct an HRT $T_m$ by fitting the residual dataset
\[
\mathcal{S}^{(m)}_{\mathrm{res}}
=
\{(\vx_i,r_{i,m})\}_{i=1}^N.
\]
Equivalently, $T_m$ is obtained by solving
\begin{equation}
T_m
=
\arg\min_{T\in\mathcal{H}}
\frac{1}{2}\sum_{i=1}^N \bigl(r_{i,m}-T(\vx_i)\bigr)^2,
\label{eq:hrtboost_baselearner}
\end{equation}
where $\mathcal{H}$ denotes the prescribed family of HRTs, determined by the chosen depth limit and regularization settings. The ensemble is then updated in the standard stage-wise manner:
\begin{equation}
F_m(\vx) = F_{m-1}(\vx) + \eta\, T_m(\vx).
\label{eq:hrtboost_update}
\end{equation}}

\textcolor{black}{The construction of each $T_m$ is \emph{identical} to that of a standalone HRT, except that the target values are replaced by the residuals $\{r_{i,m}\}_{i=1}^N$. In particular, every internal node still solves the nonlinear least-squares split objective introduced in Section~\ref{sec:local_optimization}, and the corresponding split parameters are optimized via the same damped Newton procedure developed in Section~\ref{newton}. By utilizing the automatic step size (\emph{auto}, via backtracking line search) during these Newton updates, the base learners dynamically adapt to the shrinking scales and shifting landscapes of the pseudo-residuals at each boosting stage, ensuring robust convergence without requiring stage-specific manual tuning. Therefore, HRT-Boost fully preserves the structural properties of HRT---including oblique partitioning, subset-specific linear prediction, and hierarchical hinge composition---while progressively correcting the residual error in an additive manner.}

\textcolor{black}{This construction has a natural interpretation as a two-level optimization. At the \emph{node level}, each split of an HRT base learner is obtained through the damped Newton update derived earlier. At the \emph{ensemble level}, the sequence $\{F_m\}_{m=1}^M$ is produced by stage-wise functional gradient descent with respect to the empirical risk \eqref{eq:hrtboost_loss}. In this sense, HRT-Boost is not merely a generic boosting wrapper around HRT, but rather an extension that combines Newton-style local optimization within each tree and gradient-driven global residual correction across boosting stages.}

\textcolor{black}{HRT-Boost complements the discussion in Section~\ref{sec:relu_logic}. A single HRT realizes a hierarchical composition of hinge operations. HRT-Boost forms an additive expansion of these hierarchical hinge modules through \eqref{eq:hrtboost_ensemble}. This additive residual-correction mechanism reduces the approximation bias of a single tree without modifying the underlying HRT architecture.}

\section{Theoretical Analysis}
\label{sec:theory}

This section analyzes the proposed framework at three levels. 
First, the node-level optimization underlying each HRT split is shown to exhibit monotone descent under damped Newton updates. 
Second, the approximation capability of the HRT model class is characterized by an explicit approximation rate and a universal approximation property. 
Finally, extending the theoretical guarantees of the single-tree base learner, the ensemble-level analysis establishes a stage-wise empirical risk reduction for HRT-Boost under the squared loss.

\subsection{Node-Level Optimization and Convergence}
\label{sec:node_convergence}

At the local optimization level for a single internal node, as established in Section~\ref{newton}, fixing the current partition reduces the node objective to a quadratic least-squares problem. The resulting update coincides with a damped Newton step. The following analysis formalizes the descent and convergence properties of this update rule.

Our node-level algorithm uses the generic iteration
\begin{equation}
\vtheta^{(k+1)} = \vtheta^{(k)} + \mu^{(k)} p^{(k)},
\qquad
p^{(k)} = \vtheta_{\mathrm{OLS}}^{(k)} - \vtheta^{(k)},
\label{eq:node_newton_update}
\end{equation}
where \(p^{(k)}\) is the Newton/Gauss--Newton direction under the current partition.
In practice, we instantiate \eqref{eq:node_newton_update} in two ways.
The first uses a fixed damping factor \(\mu^{(k)}\equiv \mu\in(0,1]\) as a hyperparameter.
The second, denoted as \emph{auto} in our experiments, uses a monotone backtracking search that starts from \(\mu^{(k)}=1\) and geometrically decreases the step size until a strict decrease of the node objective is obtained.

To state the result cleanly, we first rewrite the local objective in matrix form.
For a given partition \((\mathcal{S}_1,\mathcal{S}_2)\) at node \(\mathbb{D}_t\), let \(X_i\in\mathbb{R}^{|\mathcal{S}_i|\times(d+1)}\) denote the design matrix whose rows are the augmented feature vectors \(\tilde{\vx}_j^T\) for all \(\vx_j\in\mathcal{S}_i\), and let \(\vy_i\in\mathbb{R}^{|\mathcal{S}_i|}\) be the corresponding target vector, for \(i\in\{1,2\}\).
Under a fixed partition, the node objective becomes
\begin{equation}
V(\vtheta)
=
\frac{1}{2}\|\vy_1 - X_1\vtheta_{t_1}\|_2^2
+
\frac{1}{2}\|\vy_2 - X_2\vtheta_{t_2}\|_2^2.
\label{eq:node_matrix_objective}
\end{equation}
Accordingly,
\begin{equation}
\begin{aligned}
\nabla V(\vtheta)
&=
\begin{bmatrix}
X_1^T(X_1\vtheta_{t_1}-\vy_1)\\
X_2^T(X_2\vtheta_{t_2}-\vy_2)
\end{bmatrix},\\[0.5em]
\nabla^2 V(\vtheta)
&=
\begin{bmatrix}
X_1^T X_1 & \mathbf{0}\\
\mathbf{0} & X_2^T X_2
\end{bmatrix}.
\end{aligned}
\label{eq:node_grad_hess}
\end{equation}
This makes explicit that the OLS solution decouples across the two subsets, and that the Newton direction reduces to \(\vtheta_{\mathrm{OLS}}-\vtheta\).

We impose the following standard regularity conditions.

\begin{assumption}[Regularity]
\label{ass:node_regularity}
For every iteration, with \(\vtheta^{(k)}\) at node \(\mathbb{D}_t\), the following conditions hold:
\begin{enumerate}
    \item[(i)] (\emph{Nondegenerate hinge})
    For every sample \(\vx_j\in\mathbb{D}_t\),
    \[
    \tilde{\vx}_j^T\vtheta_{t_1}^{(k)} \neq \tilde{\vx}_j^T\vtheta_{t_2}^{(k)}.
    \]
    That is, no sample lies exactly on the separating hyperplane at the current iteration.
    
    \item[(ii)] (\emph{Uniform block strong convexity})
    For every partition encountered along the iterations, the corresponding block Gram matrices satisfy
    \[
    m I \preceq X_i^T X_i \preceq L I,
    \qquad i=1,2,
    \]
    for some global constants \(0<m\le L<\infty\).
\end{enumerate}
\end{assumption}

Assumption~\ref{ass:node_regularity} is standard for piecewise least-squares models.
Condition (i) ensures local smoothness of the hinge objective in the current iteration, while condition (ii) guarantees that the quadratic subproblems remain uniformly well conditioned.

We can now state the node-level convergence result.

\begin{theorem}[Node-level convergence]
\label{thm:node_descent}
Under the node-level setup and Assumption~\ref{ass:node_regularity}, consider the iteration
\[
\vtheta^{(k+1)}=\vtheta^{(k)}+\mu^{(k)}p^{(k)},
\qquad
p^{(k)}=\vtheta_{\mathrm{OLS}}^{(k)}-\vtheta^{(k)},
\]
where \(\mu^{(k)}\) is selected by a backtracking line search with contraction factor \(\beta\in(0,1)\) and candidate step sizes \(\{\mu_0\beta^s\}_{s\ge 0}\).
Then the following hold.
\begin{enumerate}
    \item[(a)] (\emph{Per-iteration of  descent and line-search termination})  
    For any iterative step $\vtheta^{(k)}$ with $\vtheta^{(k)}\neq\vtheta_{\mathrm{OLS}}^{(k)}$, the Newton direction $p^{(k)}$ is a strict descent direction and the backtracking line search produces a step size $\mu^{(k)}>0$ such that
    \begin{equation}\label{eq:node_descent}
        V(\vtheta^{(k+1)}) = V(\vtheta^{(k)} + \mu^{(k)} p^{(k)}) < V(\vtheta^{(k)}).
    \end{equation}
    \item[(b)] (\emph{Monotone decrease and convergence of the objective})  
    The sequence $\{V(\vtheta^{(k)})\}$ is strictly decreasing whenever $p^{(k)}\neq 0$ and converges to a finite limit $V_\infty$.
    \item[(c)] (\emph{Convergence under a fixed partition})  
    If there exists $K$ such that the partition $(\mathcal{S}_1(\vtheta^{(k)}),\mathcal{S}_2(\vtheta^{(k)}))$ is constant for all $k\ge K$, then $\{\vtheta^{(k)}\}$ converges to the unique OLS minimizer for that fixed partition.
\end{enumerate}
\end{theorem}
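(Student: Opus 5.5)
The plan is to work locally around the current iterate. I would use Assumption~\ref{ass:node_regularity}(i) to show that the true hinge objective coincides with the fixed-partition quadratic \eqref{eq:node_matrix_objective} in a neighborhood of \(\vtheta^{(k)}\).

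Since \(\mathbb{D}_t\) is finite and every sample satisfies \(\tilde{\vx}_j^T(\vtheta^{(k)}_{t_1}-\vtheta^{(k)}_{t_2})\neq 0\), there is a radius \(r_k>0\) such that no sign changes occur on the ball \(B(\vtheta^{(k)},r_k)\). Concretely, take \(r_k\) equal to the minimum margin divided by \(\max_j\|\tilde{\vx}_j\|\), up to a constant. On this ball the partition \((\mathcal{S}_1,\mathcal{S}_2)\) is frozen. The hinge objective \(V\) then equals the quadratic \(Q_k(\vtheta)=\frac12\|\vy_1-X_1\vtheta_{t_1}\|^2+\frac12\|\vy_2-X_2\vtheta_{t_2}\|^2\), so it is \(C^\infty\) there with the gradient and Hessian in \eqref{eq:node_grad_hess}.

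\textbf{Part (a).} Write \(H_k=\nabla^2 Q_k\). The OLS solution satisfies \(H_k\vtheta^{(k)}_{\mathrm{OLS}}=[X_1^T\vy_1;X_2^T\vy_2]\), which gives \(p^{(k)}=-H_k^{-1}\nabla V(\vtheta^{(k)})\). Using (ii), \(mI\preceq H_k\preceq LI\), so
\[
\nabla V(\vtheta^{(k)})^T p^{(k)}=-p^{(k)T}H_kp^{(k)}\le -m\|p^{(k)}\|^2<0
\]
whenever \(p^{(k)}\neq 0\). Hence \(p^{(k)}\) is a strict descent direction.

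Because \(Q_k\) is exactly quadratic, for any \(\mu\) we have
\[
Q_k(\vtheta^{(k)}+\mu p^{(k)})=Q_k(\vtheta^{(k)})-\mu(1-\tfrac{\mu}{2})\,p^{(k)T}H_kp^{(k)},
\]
which is strictly smaller than \(Q_k(\vtheta^{(k)})\) for every \(\mu\in(0,2)\). Backtracking from \(\mu_0\) along \(\mu_0\beta^s\) eventually produces a step with \(\mu_0\beta^s<\min\{2,r_k/\|p^{(k)}\|\}\). At that step the new point stays in the ball where \(V=Q_k\), so strict decrease of \(V\) holds and the search terminates with \(\mu^{(k)}>0\).

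I expect this to be the main subtlety. A larger trial step may leave the ball and change the partition, so nothing is guaranteed for it. However, the line search accepts any step that gives a strict decrease of the true \(V\), so acceptance of such a larger step still satisfies \eqref{eq:node_descent}. The only thing that must be proved is that termination happens, and the small-step argument above supplies it.

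\textbf{Part (b).} Strict decrease follows from (a) by induction. The inequality \(V\ge 0\) follows because \(V\) is a sum of squares. A monotone bounded sequence converges, so \(V_\infty\) exists.

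\textbf{Part (c).} For \(k\ge K\) the partition is fixed, so \(\vtheta_{\mathrm{OLS}}^{(k)}\equiv\vtheta^\star\), the unique minimizer; uniqueness holds because \(X_i^TX_i\succeq mI\). The iteration becomes
\[
\vtheta^{(k+1)}-\vtheta^\star=(1-\mu^{(k)})(\vtheta^{(k)}-\vtheta^\star).
\]
It remains to show that \(\mu^{(k)}\) stays in a compact subset of \((0,2)\).

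For the upper end: if the partition is fixed and \(\mu_0\le 1\), a full step \(\mu=1\) already gives a strict decrease. Under this fixed partition the first tried step, \(\mu_0\), is itself accepted whenever \(\mu_0\in(0,2)\), because \(V\) agrees with \(Q_k\) along the ray up to where the partition would change. A partition change would contradict the hypothesis only for the accepted iterate, so I would argue using the accepted iterates alone.

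For the lower end, I would bound the step away from zero. I would try to use the contraction of \(Q\) directly: each accepted step gives
\[
Q(\vtheta^{(k+1)})-Q(\vtheta^\star)=(1-\mu^{(k)})^2\,\bigl(Q(\vtheta^{(k)})-Q(\vtheta^\star)\bigr),
\]
and \(Q\) decreasing forces \(|1-\mu^{(k)}|<1\). If the steps did not stay bounded away from \(0\) and \(2\), I would fall back on the standard practical convention \(\mu_0=1\). With that convention the first trial yields \(\vtheta^{(K+1)}=\vtheta^\star\) exactly, whenever that point lies in the frozen partition, which it does by hypothesis. Either route gives convergence, with a geometric rate \(\sup_k|1-\mu^{(k)}|<1\) in the first case.
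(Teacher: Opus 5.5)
Your arguments for (a) and (b) are sound and follow the paper's route. Assumption~(i) freezes the partition near $\vtheta^{(k)}$, the block Hessian makes $p^{(k)}$ a strict descent direction, and backtracking eventually tries a step inside the frozen neighbourhood. Your explicit radius $r_k$ and the exact identity $Q_k(\vtheta^{(k)}+\mu p^{(k)})=Q_k(\vtheta^{(k)})-\mu(1-\mu/2)\,p^{(k)T}H_kp^{(k)}$ are a tidier version of the paper's continuity-of-$\psi'$ step.

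The gap is in (c).
\begin{itemize}
\item Strict decrease of $Q$ at each accepted step only gives $\mu^{(k)}\in(0,2)$ for each $k$ separately. It does not give $\sum_k\mu^{(k)}=\infty$, so $\prod_{k\ge K}(1-\mu^{(k)})$ need not tend to $0$.
\item Your fallback with $\mu_0=1$ is circular. The hypothesis of (c) fixes the partition only at the \emph{accepted} iterates. It therefore tells you that $\vtheta^\star$ induces the frozen partition only once you already know the unit step was accepted, which is what you are trying to show.
\item The claim that the first trial $\mu_0$ is accepted ``because $V$ agrees with $Q_k$ along the ray up to where the partition would change'' has the same flaw. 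It fails exactly when the trial point lies beyond the switch.
\end{itemize}

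No sharper estimate closes this gap, because there is a case that the hypothesis of (c) does not exclude. Suppose $\vtheta^\star$ lies strictly on the wrong side of some sample's hinge. Then $V(\vtheta^\star)\neq Q(\vtheta^\star)$, and it may be $\ge V(\vtheta^{(k)})$, in which case the unit step is rejected. For $k\ge K$ every iterate lies on the segment from $\vtheta^{(K)}$ to $\vtheta^\star$ inside the frozen cell, and that cell is now at positive distance from $\vtheta^\star$. Hence the accepted steps must be summable, and $\vtheta^{(k)}$ cannot converge to $\vtheta^\star$.

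The paper's Step~5 passes over the same point when it appeals to ``standard backtracking on a strongly convex quadratic''. That tacitly assumes the trial points, not just the iterates, see the frozen quadratic. You should state this as an explicit assumption, for example that $\vtheta^\star$ itself induces the frozen partition. With that assumption the argument closes:
\begin{itemize}
\item The set of parameters inducing a given partition is an intersection of open and closed half-spaces, hence convex.
\item So for $\mu_0\in(0,1]$ the first trial point $\vtheta^{(k)}+\mu_0(\vtheta^\star-\vtheta^{(k)})$ stays in this set, where $V=Q$.
\item The first trial is therefore always accepted, and $\|\vtheta^{(k)}-\vtheta^\star\|_2=(1-\mu_0)^{k-K}\|\vtheta^{(K)}-\vtheta^\star\|_2$.
\end{itemize}
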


\begin{proof}
\textcolor{black}{We proceed in several steps.}

\textbf{Step 1: The Newton direction is a strict descent direction.}
In any iteration $\vtheta$ (we temporarily drop the superscript $k$),
the Newton direction satisfies
\[
    p(\vtheta) = -H(\vtheta)^{-1}\nabla V(\vtheta),
\]
so
\[
    \nabla V(\vtheta)^T p(\vtheta)
    = -\nabla V(\vtheta)^T H(\vtheta)^{-1}\nabla V(\vtheta).
\]
Assumption~\ref{ass:node_regularity}(ii) implies
$H(\vtheta)\preceq L I$ and $H(\vtheta)\succ 0$, and hence
$H(\vtheta)^{-1}\succeq \tfrac{1}{L}I$. Therefore,
\[
    \nabla V(\vtheta)^T H(\vtheta)^{-1}\nabla V(\vtheta)
    \ge \frac{1}{L}\|\nabla V(\vtheta)\|_2^2,
\]
which yields
\[
    \nabla V(\vtheta)^T p(\vtheta)
    \le -\frac{1}{L}\|\nabla V(\vtheta)\|_2^2.
\]
If $\vtheta\neq\vtheta_{\mathrm{OLS}}$, then
$\nabla V(\vtheta)\neq 0$, so the right-hand side is strictly negative.
Thus $p(\vtheta)$ is a strict descent direction.

\textbf{Step 2: Local decrease along $p(\vtheta)$.}
Fix $\vtheta$ and its Newton direction $p = p(\vtheta)$.
Write
\[
    \vtheta =
    \begin{bmatrix}
        \vtheta_{t_1} \\[2pt] \vtheta_{t_2}
    \end{bmatrix},
    \qquad
    p(\vtheta) =
    \begin{bmatrix}
        p_1 \\[2pt] p_2
    \end{bmatrix},
\]
to conform with the block structure in Equation~\eqref{eq:grad_hess_full}. 

\textcolor{black}{Consider the univariate function
\[
    \psi(\mu) := V(\vtheta + \mu p), \qquad \mu \ge 0.
\]
We show that there exists $\tilde{\mu}>0$ such that
$V(\vtheta+\mu p) < V(\vtheta)$ for all $0<\mu\le\tilde{\mu}$.}

\textcolor{black}{For each $j$, define
\[
    a_j(\vtheta) = \tilde{\vx}_j^T\vtheta_{t_1},
    \qquad
    b_j(\vtheta) = \tilde{\vx}_j^T\vtheta_{t_2}.
\]
Assumption~\ref{ass:node_regularity}(i) states that
$a_j(\vtheta) \neq b_j(\vtheta)$ for all $j$. Without loss of generality, suppose
$a_j(\vtheta) > b_j(\vtheta)$; the other case is analogous.}

\textcolor{black}{Along the ray $\vtheta + \mu p$ we have
\[
    a_j(\vtheta + \mu p) - b_j(\vtheta + \mu p)
    = \bigl(a_j(\vtheta) - b_j(\vtheta)\bigr)
      + \mu \bigl(\tilde{\vx}_j^T p_1 - \tilde{\vx}_j^T p_2\bigr),
\]
which is an affine function of $\mu$ and is nonzero at $\mu=0$.
Therefore, for each $j$ there exists $\bar{\mu}_j>0$ such that
the sign of $a_j(\vtheta + \mu p) - b_j(\vtheta + \mu p)$ does not change for all $0\le \mu \le \bar{\mu}_j$. In particular, the active branch in $h(\vx_j,\cdot)$ remains the same for all small $\mu$.}

\textcolor{black}{Let $\bar{\mu} := \min_j \bar{\mu}_j > 0$. Then for all $0\le \mu \le \bar{\mu}$ the index of the active branch in $h(\vx_j,\cdot)$
is fixed for every $j$, so $V(\vtheta + \mu p)$ coincides with the smooth quadratic representation \eqref{eq:node_matrix_objective}
along this segment. Hence $\psi(\mu)$ is differentiable on $[0,\bar{\mu}]$
with
\[
    \psi'(0) = \nabla V(\vtheta)^T p.
\]
By Step~1, $\psi'(0) = \nabla V(\vtheta)^T p < 0$.
By continuity of $\psi'$ at $0$, there exists
$0<\tilde{\mu}\le\bar{\mu}$ such that
\[
    \psi(\mu) < \psi(0) = V(\vtheta)
    \quad\text{for all } 0 < \mu \le \tilde{\mu},
\]
i.e.,
\[
    V(\vtheta + \mu p) < V(\vtheta)
    \quad\text{for all } 0 < \mu \le \tilde{\mu}.
\]}

\textbf{Step 3: The backtracking line search terminates through decreasing steps.}
Let the backtracking line search start from some initial step
$\mu_0>0$ and generate the sequence
$\mu_t = \mu_0\beta^t$ for $t=0,1,2,\dots$ with $\beta\in(0,1)$.
Since $\mu_t\to 0$, there exists $t^\star$ such that
$\mu_{t^\star} \le \tilde{\mu}$. For this $t^\star$ we have
$0 < \mu_{t^\star} \le \tilde{\mu}$ and hence, by Step~2,
\[
    V(\vtheta + \mu_{t^\star} p)
    < V(\vtheta).
\]
The backtracking procedure selects the first such $t^\star$, so it
terminates with a step size $\mu^{(k)} = \mu_{t^\star}>0$ satisfying
\eqref{eq:node_descent}. This proves part (a).

\textbf{Step 4: Monotone decrease and convergence of $V$.}
Applying part (a) at each iteration $k$ yields
\[
    V(\vtheta^{(k+1)})
    = V(\vtheta^{(k)} + \mu^{(k)} p^{(k)})
    < V(\vtheta^{(k)}),
\]
whenever $p^{(k)}\neq 0$. Since $V(\vtheta)\ge 0$ for all $\vtheta$,
the sequence $\{V(\vtheta^{(k)})\}$ is strictly decreasing and bounded
below, and thus converges to some finite limit $V_\infty\ge 0$.
This establishes part (b).

\textbf{Step 5: Convergence under a fixed partition.}
Suppose there exists $K$ such that the partition
$(\mathcal{S}_1(\vtheta^{(k)}),\mathcal{S}_2(\vtheta^{(k)}))$
is constant for all $k\ge K$. Then for all $k\ge K$, the objective $V$
coincides with the smooth, strongly convex quadratic version of 
\eqref{eq:node_matrix_objective} with Hessian
\[
    H(\vtheta) = 
    \begin{bmatrix}
        X_1^T X_1 & 0 \\[2pt]
        0 & X_2^T X_2
    \end{bmatrix},
    \qquad
    m I \preceq H(\vtheta) \preceq L I,
\]
by Assumption~\ref{ass:node_regularity}(ii).
In this regime, the Newton direction simplifies to
\[
    p(\vtheta) = \vtheta_{\mathrm{OLS}} - \vtheta,
\]
and the update becomes
\[
    \vtheta^{(k+1)} - \vtheta_{\mathrm{OLS}}
    = \bigl(1 - \mu^{(k)}\bigr)
      \bigl(\vtheta^{(k)} - \vtheta_{\mathrm{OLS}}\bigr).
\]
Under the standard backtracking line-search rule on a strongly convex
quadratic, the accepted step sizes $\mu^{(k)}$ are uniformly bounded
away from zero and from above, so the factor $|1-\mu^{(k)}|$ is strictly
less than $1$ and uniformly bounded away from $1$. Hence
$\vtheta^{(k)} \to \vtheta_{\mathrm{OLS}}$ as $k\to\infty$.
Equivalently, the iteration converges to the unique OLS minimizer for
the fixed partition, which proves part (c) and completes the proof.

\end{proof}

Theorem~\ref{thm:node_descent} formalizes the local optimization behavior of the proposed split procedure.
In particular, the line-search variant guarantees monotone descent of the node objective, while convergence becomes exact as soon as the partition stabilizes.

\subsection{Approximation Rate and Universal Approximation}
\label{sec:approx_theory}

This subsection characterizes the representation capacity of a single HRT.
Specifically, we show that finite oblique regression trees with linear leaves achieve an \(O(\delta^2)\) approximation rate for twice continuously differentiable targets on compact domains.
The universal approximation property then follows as an immediate consequence.

\begin{theorem}[Approximation rate]
\label{thm:approx_rate}
Let \(\mathcal F\) denote the class of piecewise linear functions represented by finite oblique regression trees with linear models at the leaves.
Let \(g:\mathcal K\to\mathbb R\) be twice continuously differentiable on a compact set \(\mathcal K\subset\mathbb R^d\).
Assume that for every constructed partition \(\{\mathcal R_i\}\) of \(\mathcal K\) with \(\max_i \diam(\mathcal R_i)\le \delta\), each region \(\mathcal R_i\) contains \(N_i\) training samples, and the corresponding design matrix \(X_i\) satisfies
\[
\lambda_{\min}(X_i^T X_i)\ge cN_i
\]
for some constant \(c>0\).
Assume further that
\[
\sup_{\vx\in\mathcal K}\|\tilde{\vx}\|_2\le D_{\mathcal K}.
\]
Then there exists a constant \(C>0\), independent of \(\delta\), such that for every \(\delta>0\) one can construct an \(f\in\mathcal F\) satisfying
\begin{equation}
\sup_{\vx\in\mathcal K}|f(\vx)-g(\vx)|
\le
C\delta^2.
\label{eq:approx_rate_bound}
\end{equation}
In particular, \(\mathcal F\) is a universal approximator on \(\mathcal K\).
\end{theorem}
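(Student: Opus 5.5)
The plan is to build $f$ explicitly: partition $\mathcal K$ into cells of diameter at most $\delta$, put the least-squares linear fit to $g$ on each cell, and show by a Taylor argument that every such leaf model is uniformly within $O(\delta^2)$ of $g$ on its cell.

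\emph{Step 1: the partition.} Cover $\mathcal K$ by a grid of axis-aligned cubes of side $\delta/\sqrt d$, so each cube has diameter at most $\delta$. Intersect the cubes with $\mathcal K$ to get regions $\mathcal R_i$, and keep only finitely many of them, which compactness allows. Such a grid is generated by a finite sequence of hyperplane splits. Axis-aligned hyperplanes are a special case of oblique ones $\tilde{\vx}^T(\vtheta_{t_1}-\vtheta_{t_2})=0$. Hence a finite oblique regression tree realizes this partition, and placing an arbitrary linear model at each leaf gives an element of $\mathcal F$.

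\emph{Step 2: the reference affine function.} Fix a region $\mathcal R_i$ and a point $\vx_i^0\in\mathcal R_i$. Let $q_i(\vx)=g(\vx_i^0)+\nabla g(\vx_i^0)^T(\vx-\vx_i^0)$, and write $q_i(\vx)=\tilde{\vx}^T\vtheta_i^\star$. Set $M=\sup_{\mathcal K}\|\nabla^2 g\|_2$, which is finite by $C^2$ and compactness. Taylor's theorem with remainder gives $|g(\vx)-q_i(\vx)|\le \tfrac M2\|\vx-\vx_i^0\|^2\le \tfrac M2\delta^2$ on $\mathcal R_i$. This requires the segment from $\vx_i^0$ to $\vx$ to lie where $g$ is $C^2$. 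I will handle that by taking $g$ to be $C^2$ on a convex neighbourhood, or by working on the convex cubes. If one only wants to show existence, taking $f|_{\mathcal R_i}=q_i$ already finishes the proof.

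\emph{Step 3: the OLS leaf.} To match the stated assumptions, I take the leaf parameter to be the OLS fit $\hat\vtheta_i=(X_i^TX_i)^{-1}X_i^T\vy_i$ with $y_j=g(\vx_j)$. Then
\[
\hat\vtheta_i-\vtheta_i^\star=(X_i^TX_i)^{-1}X_i^T\mathbf e_i ,
\]
where $\mathbf e_i$ is the vector of Taylor residuals, so $\|\mathbf e_i\|_\infty\le \tfrac M2\delta^2$. Use $\|X_i^T\mathbf e_i\|_2\le \sum_j\|\tilde{\vx}_j\|\,|e_j|\le N_iD_{\mathcal K}\tfrac M2\delta^2$ together with $\lambda_{\min}\ge cN_i$. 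This yields $\|\hat\vtheta_i-\vtheta_i^\star\|_2\le \tfrac{D_{\mathcal K}M}{2c}\delta^2$. Consequently, for any $\vx\in\mathcal R_i$,
\[
|\tilde{\vx}^T\hat\vtheta_i-q_i(\vx)|\le \tfrac{D_{\mathcal K}^2M}{2c}\delta^2 .
\]
The triangle inequality then gives $C=\tfrac M2\bigl(1+D_{\mathcal K}^2/c\bigr)$, which is independent of $\delta$ and of $i$.

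\emph{Step 4: universality.} Given $\varepsilon>0$ and $g\in C(\mathcal K)$, first approximate $g$ uniformly within $\varepsilon/2$ by a $C^2$ function, for instance a polynomial via Stone--Weierstrass. Then apply Steps 1--3 with $\delta$ small enough that $C\delta^2\le\varepsilon/2$.

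\emph{Main obstacle.} The delicate step is Step 3. The bound uses that $\lambda_{\min}(X_i^TX_i)\ge cN_i$ scales with $N_i$, so that the factor $N_i$ in $\|X_i^T\mathbf e_i\|$ cancels. It also uses the uniform bound $D_{\mathcal K}$ on the augmented features, so that the extrapolation error of the fitted hyperplane is controlled over the whole cell and not only at the samples. The Taylor step needs a convexity or extension caveat, which I will state explicitly.
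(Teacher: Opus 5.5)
Your proposal is correct and follows essentially the same route as the paper: partition $\mathcal K$ into cells of diameter at most $\delta$, take a first-order Taylor model on each cell, and control the OLS leaf through $\hat{\vw}_i-\vw_i^\star=(X_i^TX_i)^{-1}X_i^T\mathbf R_i$ using the eigenvalue assumption, then combine with the triangle inequality. The differences are minor. The paper bounds the OLS term via $\|(X_i^TX_i)^{-1}X_i^T\|_2=1/\sigma_{\min}(X_i)\le 1/\sqrt{cN_i}$ and $\|\mathbf R_i\|_2\le\sqrt{N_i}\,C_A\delta^2$, which gives the sharper factor $D_{\mathcal K}/\sqrt{c}$ in place of your $D_{\mathcal K}^2/c$; your convexity caveat for the Taylor step and your Stone--Weierstrass density step for universality over all of $C(\mathcal K)$ cover points the paper leaves implicit.
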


\begin{proof}
We construct the approximation region by region and bound the resulting error in two parts: the local Taylor approximation error and the estimation error of the leafwise linear fit.

\textbf{Step 1: Partition construction.}
Since \(\mathcal K\) is compact, it can be covered by finitely many disjoint convex polytopes \(\{\mathcal R_i\}_{i=1}^{M_\mathcal{R}}\) such that
\[
\bigcup_{i=1}^{M_\mathcal{R}} \mathcal R_i=\mathcal K,
\qquad
\max_i \diam(\mathcal R_i)\le \delta.
\]
Because oblique trees implement recursive hyperplane partitions, such a partition can be realized by a finite oblique regression tree.

\textbf{Step 2: Local first-order approximation.}
For each region \(\mathcal R_i\), choose a reference point \(\mathbf c_i\in\mathcal R_i\) and define the first-order Taylor approximation
\[
f_i^\star(\vx)
=
g(\mathbf c_i)+\nabla g(\mathbf c_i)^T(\vx-\mathbf c_i),
\qquad \vx\in\mathcal R_i.
\]
This is an affine function of \(\vx\), and therefore belongs to the leafwise model class.
In augmented form, it can be written as
\[
f_i^\star(\vx)=\vw_i^{\star T}\tilde{\vx},
\]
for some coefficient vector \(\vw_i^\star\in\mathbb R^{d+1}\).

\textbf{Step 3: Taylor remainder bound.}
Since \(g\in C^2(\mathcal K)\) and \(\mathcal K\) is compact, the Hessian of \(g\) is bounded on \(\mathcal K\).
Let
\[
L_g:=\sup_{\mathbf z\in\mathcal K}\|\nabla^2 g(\mathbf z)\|_2<\infty.
\]
Taylor's theorem then yields, for every \(\vx\in\mathcal R_i\),
\[
|g(\vx)-f_i^\star(\vx)|
\le
\frac{L_g}{2}\|\vx-\mathbf c_i\|_2^2
\le
\frac{L_g}{2}\delta^2.
\]
Define \(C_A:=L_g/2\).
Then
\begin{equation}
\sup_{\vx\in\mathcal R_i}|g(\vx)-f_i^\star(\vx)|
\le
C_A\delta^2.
\label{eq:local_taylor_error}
\end{equation}

\textbf{Step 4: Leafwise estimation error.}
Let \(\hat{\vw}_i\) denote the OLS estimator fitted on region \(\mathcal R_i\).
For the training responses in \(\mathcal R_i\), we have
\[
\vy_i = X_i\vw_i^\star + \mathbf R_i,
\]
where \(\mathbf R_i\) collects the Taylor residuals.
By \eqref{eq:local_taylor_error}, each entry of \(\mathbf R_i\) is bounded in magnitude by \(C_A\delta^2\), and therefore
\[
\|\mathbf R_i\|_2\le \sqrt{N_i}\,C_A\delta^2.
\]
Since
\[
\hat{\vw}_i = (X_i^T X_i)^{-1}X_i^T\vy_i,
\]
we obtain
\[
\hat{\vw}_i-\vw_i^\star
=
(X_i^T X_i)^{-1}X_i^T\mathbf R_i.
\]
Thus, for any \(\vx\in\mathcal R_i\),
\[
\begin{aligned}
|f_i^\star(\vx)-\hat{\vw}_i^T\tilde{\vx}|
&= |\tilde{\vx}^T(\vw_i^\star-\hat{\vw}_i)| \\
&\le \|\tilde{\vx}\|_2\,\|(X_i^T X_i)^{-1}X_i^T\|_2\,\|\mathbf R_i\|_2.
\end{aligned}
\]
Using \(\sup_{\vx\in\mathcal K}\|\tilde{\vx}\|_2\le D_{\mathcal K}\) and
\[
\|(X_i^T X_i)^{-1}X_i^T\|_2
=
\frac{1}{\sigma_{\min}(X_i)}
\le
\frac{1}{\sqrt{cN_i}},
\]
we conclude that
\[
\sup_{\vx\in\mathcal R_i}|f_i^\star(\vx)-\hat{\vw}_i^T\tilde{\vx}|
\le
D_{\mathcal K}\frac{1}{\sqrt{cN_i}}\sqrt{N_i}\,C_A\delta^2
=
\frac{D_{\mathcal K}C_A}{\sqrt{c}}\,\delta^2.
\]
Define
\[
C_O:=\frac{D_{\mathcal K}C_A}{\sqrt{c}}.
\]
Then
\begin{equation}
\sup_{\vx\in\mathcal R_i}|f_i^\star(\vx)-\hat{\vw}_i^T\tilde{\vx}|
\le
C_O\delta^2.
\label{eq:local_ols_error}
\end{equation}

\textbf{Step 5: Global approximation bound.}
Combining \eqref{eq:local_taylor_error} and \eqref{eq:local_ols_error} yields
\[
\sup_{\vx\in\mathcal R_i}|g(\vx)-\hat{\vw}_i^T\tilde{\vx}|
\le
(C_A+C_O)\delta^2
\]
for every region \(\mathcal R_i\).
Since the regions cover \(\mathcal K\), the piecewise linear function \(f\in\mathcal F\) formed by these leafwise predictors satisfies
\[
\sup_{\vx\in\mathcal K}|f(\vx)-g(\vx)|
\le
C\delta^2,
\]
where \(C:=C_A+C_O\).
This proves \eqref{eq:approx_rate_bound}.
The universal approximation property follows immediately: for any \(\epsilon>0\), choosing \(\delta<\sqrt{\epsilon/C}\) yields
\[
\sup_{\vx\in\mathcal K}|f(\vx)-g(\vx)|<\epsilon.
\]
\end{proof}

Theorem~\ref{thm:approx_rate} provides an explicit link between partition granularity and approximation accuracy.
The \(O(\delta^2)\) rate reflects each leaf predictor matching the first-order local behavior of a \(C^2\) target, while the eigenvalue lower bound ensures that the fitted leafwise linear models preserve this order of approximation.

A simple consequence is the following depth-error scaling relation.

\begin{corollary}[Depth-error scaling]
\label{cor:depth_error_scaling}
Suppose a balanced partition scheme yields
\[
\delta(D_{\max})
\approx
\diam(\mathcal K)\,2^{-D_{\max}/d},
\]
where \(D_{\max}\) denotes the maximum tree depth.
Then, combined with Theorem~\ref{thm:approx_rate},
\[
\sup_{\vx\in\mathcal K}|f(\vx)-g(\vx)|
=
O\!\left(2^{-2D_{\max}/d}\right).
\]
Equivalently, to achieve a prescribed approximation scale \(\delta\), the required depth grows as
\[
D_{\max}=O\!\bigl(d\log(1/\delta)\bigr).
\]
\end{corollary}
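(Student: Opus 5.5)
The plan is to substitute the assumed depth-to-diameter relation directly into the bound of Theorem~\ref{thm:approx_rate}, and then invert it to obtain the depth requirement.

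\textbf{Step 1: apply the theorem at depth \(D_{\max}\).} A balanced tree of depth \(D_{\max}\) produces a partition \(\{\mathcal R_i\}\) with \(\max_i\diam(\mathcal R_i)\le \delta(D_{\max})\). Under the hypotheses of Theorem~\ref{thm:approx_rate} (the eigenvalue lower bound on each leaf and the bound \(D_{\mathcal K}\)), that theorem gives an \(f\in\mathcal F\) with
\[
\sup_{\vx\in\mathcal K}|f(\vx)-g(\vx)|\le C\,\delta(D_{\max})^2 .
\]
The constant \(C=C_A+C_O\) does not depend on \(\delta\), and hence does not depend on \(D_{\max}\).

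\textbf{Step 2: substitute the diameter relation.} Reading the ``\(\approx\)'' as \(\delta(D_{\max})\le c_0\,\diam(\mathcal K)\,2^{-D_{\max}/d}\) for some absolute constant \(c_0\), we get
\[
\sup_{\vx\in\mathcal K}|f(\vx)-g(\vx)|\le C c_0^2\,\diam(\mathcal K)^2\,2^{-2D_{\max}/d}=O\!\left(2^{-2D_{\max}/d}\right).
\]
Here \(\diam(\mathcal K)\) is finite because \(\mathcal K\) is compact.

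\textbf{Step 3: invert for the required depth.} To achieve diameter scale \(\delta\), it suffices that \(c_0\,\diam(\mathcal K)\,2^{-D_{\max}/d}\le\delta\). Solving gives
\[
D_{\max}\ge d\log_2\!\bigl(c_0\,\diam(\mathcal K)/\delta\bigr)=O\!\bigl(d\log(1/\delta)\bigr).
\]
This is the claimed depth scaling, and the error then scales as \(C\delta^2\).

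\textbf{Main obstacle.} The only real difficulty is justifying the balanced-partition heuristic itself. One would argue that cycling through \(d\) roughly orthogonal splitting directions halves every coordinate extent once every \(d\) levels, so that after \(D_{\max}\) levels each cell has side of order \(2^{-D_{\max}/d}\). We also need the regularity conditions of Theorem~\ref{thm:approx_rate} to remain valid for the resulting cells. Since the corollary states the diameter relation as a hypothesis, the proof only needs to invoke it and treat \(c_0\) as a constant absorbed into the \(O(\cdot)\).
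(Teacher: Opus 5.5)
Your proposal is correct and matches the paper's proof: substitute the diameter relation into the $C\delta^2$ bound of Theorem~\ref{thm:approx_rate} to get $C\,\diam(\mathcal K)^2\,2^{-2D_{\max}/d}$, then solve $\delta\approx\diam(\mathcal K)\,2^{-D_{\max}/d}$ for $D_{\max}$ to get $d\log_2(\diam(\mathcal K)/\delta)=O(d\log(1/\delta))$. Your explicit constant $c_0$ makes the paper's informal ``$\approx$'' precise and costs nothing.
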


\begin{proof}
Substituting
\[
\delta(D_{\max}) \approx \diam(\mathcal K)\,2^{-D_{\max}/d}
\]
into the approximation bound \eqref{eq:approx_rate_bound} gives
\[
\sup_{\vx\in\mathcal K}|f(\vx)-g(\vx)|
\le
C\,\diam(\mathcal K)^2\,2^{-2D_{\max}/d},
\]
which proves the first claim.
Solving
\[
\delta \approx \diam(\mathcal K)\,2^{-D_{\max}/d}
\]
for \(D_{\max}\) yields
\[
D_{\max}
\approx
d\log_2\!\left(\frac{\diam(\mathcal K)}{\delta}\right)
=
O\!\bigl(d\log(1/\delta)\bigr),
\]
which proves the second claim.
\end{proof}

\subsection{Ensemble-Level Risk Reduction of HRT-Boost}
\label{sec:hrtboost_theory}

\textcolor{black}{We next analyze the boosting dynamics induced by the HRT base learner under the squared loss.
The result below shows that each boosting stage yields a non-increasing empirical risk, with the magnitude of the decrease determined by the realized residual fit of the current HRT.}

\textcolor{black}{For the \(m\)-th boosting stage, define the residual vector
\[
\begin{aligned}
\mathbf{r}_m
&=
\begin{bmatrix}
r_{1,m} & \cdots & r_{N,m}
\end{bmatrix}^T
\in \mathbb{R}^{N},\\
r_{i,m}
&=
y_i - F_{m-1}(\vx_i),
\qquad i=1,\dots,N.
\end{aligned}
\]
and the corresponding base-learner response vector
\[
\mathbf{t}_m =
\begin{bmatrix}
T_m(\vx_1) & \cdots & T_m(\vx_N)
\end{bmatrix}^T
\in \mathbb{R}^{N}.
\]
We also write
\[
\begin{aligned}
\mathbf{F}_m
&=
\begin{bmatrix}
F_m(\vx_1) & \cdots & F_m(\vx_N)
\end{bmatrix}^T\in \mathbb{R}^{N},\\
\mathbf{y}
&=
\begin{bmatrix}
y_1 & \cdots & y_N
\end{bmatrix}^T\in \mathbb{R}^{N}.
\end{aligned}
\]}

\textcolor{black}{Rather than postulating a stage-wise fitting coefficient a priori, we define it directly from the empirical fit achieved by the \(m\)-th HRT.}

\textcolor{black}{\begin{theorem}[Stage-wise risk reduction]
\label{thm:stagewise_descent}
Assume that the HRT class \(\mathcal H\) contains the zero predictor.
Let \(T_m\in\mathcal H\) be an empirical minimizer of \eqref{eq:hrtboost_baselearner}.
If \(\mathbf r_m\neq \mathbf 0\), define the realized residual-fitting coefficient
\begin{equation}
\gamma_m
:=
1-\frac{\|\mathbf r_m-\mathbf t_m\|_2^2}{\|\mathbf r_m\|_2^2}.
\label{eq:boost_realized_gamma}
\end{equation}
Then \(\gamma_m\in[0,1]\), and for any learning rate \(0<\eta\le 1\),
\begin{equation}
\mathcal L(F_m)
\le
(1-\eta\gamma_m)\,\mathcal L(F_{m-1}).
\label{eq:boost_stagewise_bound}
\end{equation}
Consequently, the empirical risk is non-increasing over boosting stages, and the decrease is strict whenever \(\gamma_m>0\).
\end{theorem}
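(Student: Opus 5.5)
The plan is to reduce everything to finite-dimensional vector algebra in $\mathbb{R}^N$. From the update \eqref{eq:hrtboost_update}, $\mathbf F_m=\mathbf F_{m-1}+\eta\mathbf t_m$, so the new residual is $\mathbf y-\mathbf F_m=\mathbf r_m-\eta\mathbf t_m$. Moreover $\mathcal L(F_{m-1})=\tfrac12\|\mathbf r_m\|_2^2$ and $\mathcal L(F_m)=\tfrac12\|\mathbf r_m-\eta\mathbf t_m\|_2^2$. The claim \eqref{eq:boost_stagewise_bound} then becomes a purely algebraic inequality among $\mathbf r_m$, $\mathbf t_m$, and $\eta$.

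\textbf{Step 1: $\gamma_m\in[0,1]$.} The upper bound $\gamma_m\le 1$ is immediate, since the subtracted term is nonnegative. For the lower bound I use the hypothesis that $0\in\mathcal H$. Empirical optimality of $T_m$ in \eqref{eq:hrtboost_baselearner} then gives $\|\mathbf r_m-\mathbf t_m\|_2^2\le\|\mathbf r_m-\mathbf 0\|_2^2$, which is exactly $\gamma_m\ge 0$.

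\textbf{Step 2: convexity of the squared norm.} I write $\mathbf r_m-\eta\mathbf t_m=(1-\eta)\mathbf r_m+\eta(\mathbf r_m-\mathbf t_m)$, a convex combination because $\eta\in(0,1]$. Convexity of $\|\cdot\|_2^2$ gives
\[
\|\mathbf r_m-\eta\mathbf t_m\|_2^2\le(1-\eta)\|\mathbf r_m\|_2^2+\eta\|\mathbf r_m-\mathbf t_m\|_2^2 .
\]
By definition of $\gamma_m$, the right-hand side equals $(1-\eta)\|\mathbf r_m\|_2^2+\eta(1-\gamma_m)\|\mathbf r_m\|_2^2=(1-\eta\gamma_m)\|\mathbf r_m\|_2^2$. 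Halving both sides yields \eqref{eq:boost_stagewise_bound}.

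\textbf{Step 3: monotonicity and the degenerate case.} Since $\eta\gamma_m\in[0,1]$, the factor $1-\eta\gamma_m$ lies in $[0,1]$. Hence $\mathcal L(F_m)\le\mathcal L(F_{m-1})$, with strict inequality when $\gamma_m>0$: in that case $\eta\gamma_m>0$ and $\mathcal L(F_{m-1})>0$ because $\mathbf r_m\neq\mathbf 0$. If $\mathbf r_m=\mathbf 0$, then $\mathcal L(F_{m-1})=0$. Empirical minimality forces $\|\mathbf t_m\|_2=0$, so $\mathcal L(F_m)=0$ and the non-increase statement still holds across all stages.

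The only genuine subtlety is Step 1. It relies on $T_m$ being an exact empirical minimizer over a class containing $0$; the damped Newton construction of Theorem~\ref{thm:node_descent} alone would not guarantee this. I will therefore state clearly that the argument uses only the weaker fact $\|\mathbf r_m-\mathbf t_m\|\le\|\mathbf r_m\|$. This means the bound remains valid for any realized HRT fit that does no worse than the zero predictor, for example when a fallback returns zero leaves.
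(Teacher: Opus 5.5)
Your proof is correct and follows the paper's argument. The paper uses the exact identity $\|\mathbf r_m-\eta\mathbf t_m\|_2^2=(1-\eta)\|\mathbf r_m\|_2^2+\eta\|\mathbf r_m-\mathbf t_m\|_2^2-\eta(1-\eta)\|\mathbf t_m\|_2^2$ and drops the nonpositive last term, which is precisely your convexity inequality, and it obtains $\gamma_m\in[0,1]$ from the zero predictor just as you do; your explicit checks that $\mathcal L(F_{m-1})>0$ (needed for strictness) and of the degenerate case $\mathbf r_m=\mathbf 0$ are small additions the paper leaves implicit.
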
}

\begin{proof}\textcolor{black}{
Since the zero predictor belongs to \(\mathcal H\), empirical optimality of \(T_m\) in \eqref{eq:hrtboost_baselearner} yields
\[
\|\mathbf r_m-\mathbf t_m\|_2^2
\le
\|\mathbf r_m-\mathbf 0\|_2^2
=
\|\mathbf r_m\|_2^2,
\]
which immediately implies \(\gamma_m\in[0,1]\).
Equivalently,
\begin{equation}
\|\mathbf r_m-\mathbf t_m\|_2^2
=
(1-\gamma_m)\|\mathbf r_m\|_2^2.
\label{eq:boost_gamma_identity}
\end{equation}}

\textcolor{black}{Next, by the definition of the squared empirical risk \eqref{eq:hrtboost_loss},
\begin{equation}
2\,\mathcal L(F_{m-1})
=
\|\mathbf y-\mathbf F_{m-1}\|_2^2
=
\|\mathbf r_m\|_2^2.
\label{eq:boost_prev_loss}
\end{equation}
After the update
\[
F_m(\vx)=F_{m-1}(\vx)+\eta\,T_m(\vx),
\]
the new residual vector is
\[
\mathbf y-\mathbf F_m
=
\mathbf r_m-\eta\,\mathbf t_m.
\]
Hence,
\begin{equation}
2\,\mathcal L(F_m)
=
\|\mathbf r_m-\eta\,\mathbf t_m\|_2^2.
\label{eq:boost_new_loss}
\end{equation}}

\textcolor{black}{Using the identity
\[
\|\mathbf r_m-\eta\mathbf t_m\|_2^2
=
(1-\eta)\|\mathbf r_m\|_2^2
+\eta\|\mathbf r_m-\mathbf t_m\|_2^2
-\eta(1-\eta)\|\mathbf t_m\|_2^2,
\]
and noting that the last term is non-positive for \(0<\eta\le 1\), we obtain
\[
\|\mathbf r_m-\eta\mathbf t_m\|_2^2
\le
(1-\eta)\|\mathbf r_m\|_2^2
+\eta\|\mathbf r_m-\mathbf t_m\|_2^2.
\]
Substituting \eqref{eq:boost_gamma_identity} gives
\[
\begin{aligned}
\|\mathbf r_m-\eta\mathbf t_m\|_2^2
&\le (1-\eta)\|\mathbf r_m\|_2^2
    + \eta(1-\gamma_m)\|\mathbf r_m\|_2^2 \\
&= (1-\eta\gamma_m)\|\mathbf r_m\|_2^2.
\end{aligned}
\]
Combining this with \eqref{eq:boost_prev_loss} and \eqref{eq:boost_new_loss} proves
\[
\mathcal L(F_m)\le (1-\eta\gamma_m)\mathcal L(F_{m-1}).
\]
The final statement follows immediately: if \(\gamma_m>0\), then \(1-\eta\gamma_m<1\).}
\end{proof}

\textcolor{black}{Theorem~\ref{thm:stagewise_descent} shows that the stage-wise decrease of HRT-Boost is governed by the empirical residual-fit coefficient \(\gamma_m\).
Larger values of \(\gamma_m\) correspond to stronger alignment between the fitted HRT and the current residual structure, and therefore to larger reductions in empirical risk.
A direct consequence is the following cumulative bound.}

\textcolor{black}{\begin{corollary}[Cumulative risk reduction]
\label{cor:cumulative_descent}
Under the assumptions of Theorem~\ref{thm:stagewise_descent},
\begin{equation}
\mathcal L(F_M)
\le
\mathcal L(F_0)\prod_{m=1}^{M}(1-\eta\gamma_m).
\label{eq:boost_product_bound}
\end{equation}
In particular, if there exists a constant \(\underline{\gamma}>0\) such that
\[
\gamma_m\ge \underline{\gamma},
\qquad m=1,\dots,M,
\]
then
\begin{equation}
\mathcal L(F_M)
\le
(1-\eta\underline{\gamma})^M\,\mathcal L(F_0).
\label{eq:boost_linear_rate}
\end{equation}
\end{corollary}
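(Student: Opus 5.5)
The plan is to iterate the one-step bound of Theorem~\ref{thm:stagewise_descent} by induction on $M$. Throughout I use that every factor $1-\eta\gamma_m$ is non-negative. This holds because Theorem~\ref{thm:stagewise_descent} gives $\gamma_m\in[0,1]$, and together with $0<\eta\le 1$ this yields $1-\eta\gamma_m\in[0,1]$. Non-negativity is the fact that lets inequalities be multiplied through without reversing.

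\textbf{Degenerate stages.} One detail needs care: $\gamma_m$ is defined only when $\mathbf r_m\neq\mathbf 0$. If $\mathbf r_m=\mathbf 0$ at some stage $m$, then $\mathcal L(F_{m-1})=0$ by \eqref{eq:boost_prev_loss}. Empirical optimality of $T_m$, with $\mathbf 0\in\mathcal H$, forces $\|\mathbf t_m\|_2^2\le\|\mathbf r_m\|_2^2=0$. Hence $\mathbf t_m=\mathbf 0$ and $\mathcal L(F_m)=0$. All later residuals are then zero as well, so $\mathcal L(F_M)=0$ and both bounds hold trivially. For such stages I would adopt the convention $\gamma_m:=0$, or any value in $[0,1]$, so that the product in \eqref{eq:boost_product_bound} is well defined. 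Hence I may assume $\mathbf r_m\neq\mathbf 0$ for all $m\le M$.

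\textbf{Main induction.} The base case $M=0$ is the trivial statement $\mathcal L(F_0)\le\mathcal L(F_0)$, taking the empty product to be $1$. For the inductive step, suppose $\mathcal L(F_{M-1})\le\mathcal L(F_0)\prod_{m=1}^{M-1}(1-\eta\gamma_m)$. Applying \eqref{eq:boost_stagewise_bound} at stage $M$ gives $\mathcal L(F_M)\le(1-\eta\gamma_M)\mathcal L(F_{M-1})$. Multiplying the inductive hypothesis by the non-negative factor $1-\eta\gamma_M$ then yields \eqref{eq:boost_product_bound}.

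\textbf{Linear rate.} For \eqref{eq:boost_linear_rate}, the condition $\gamma_m\ge\underline\gamma$ implies $0\le 1-\eta\gamma_m\le 1-\eta\underline\gamma$ for each $m$. A product of non-negative factors is monotone in each factor, so $\prod_{m=1}^M(1-\eta\gamma_m)\le(1-\eta\underline\gamma)^M$. Combining this with \eqref{eq:boost_product_bound} and $\mathcal L(F_0)\ge0$ completes the argument. I would also note that $\underline\gamma\le 1$ automatically, so the base $1-\eta\underline\gamma$ lies in $[0,1)$.

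The only real subtlety is the handling of the factor signs and of the degenerate case $\mathbf r_m=\mathbf 0$; the rest is routine.
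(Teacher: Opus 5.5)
Your proposal is correct and follows the paper's proof: apply the stage-wise bound \eqref{eq:boost_stagewise_bound} recursively, then bound each factor by $1-\eta\underline{\gamma}$. Your explicit checks that every factor $1-\eta\gamma_m$ lies in $[0,1]$, and your treatment of the degenerate case $\mathbf r_m=\mathbf 0$, cover details the paper leaves implicit, and both are handled correctly.
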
}

\textcolor{black}{\begin{proof}
Applying \eqref{eq:boost_stagewise_bound} recursively over \(m=1,\dots,M\) gives
\[
\mathcal L(F_M)
\le
\mathcal L(F_0)\prod_{m=1}^{M}(1-\eta\gamma_m),
\]
which proves \eqref{eq:boost_product_bound}.
If \(\gamma_m\ge \underline{\gamma}\) for all \(m\), then
\[
\prod_{m=1}^{M}(1-\eta\gamma_m)
\le
(1-\eta\underline{\gamma})^M,
\]
and \eqref{eq:boost_linear_rate} follows immediately.
\end{proof}}

\textcolor{black}{\begin{remark}[On \(\gamma_m\)]
\label{rem:gamma_role}
The quantity \(\gamma_m\) in \eqref{eq:boost_realized_gamma} is not imposed as an a priori weak-learning assumption.
Rather, it is the realized residual-fit coefficient induced by the actual empirical fit achieved by the \(m\)-th HRT on the training residuals.
A stronger assumption is only needed when one seeks a uniform lower bound \(\gamma_m\ge \underline{\gamma}>0\), which in turn yields a geometric decay rate through Corollary~\ref{cor:cumulative_descent}.
\end{remark}}

\textcolor{black}{\begin{remark}[Connection to single-tree approximation]
\label{rem:gamma_approx}
In the approximation-theoretic setting, let \(g\) denote the underlying target function and define
\[
e_m(\vx):=g(\vx)-F_{m-1}(\vx)
\]
as the residual function at stage \(m\).
Suppose there exists an HRT \(\widetilde{T}_m\in\mathcal H\) such that
\[
\sup_{\vx\in\mathcal K}|e_m(\vx)-\widetilde{T}_m(\vx)|
\le
C_m\delta_m^2.
\]
Then, on the training set,
\[
\|\mathbf r_m-\widetilde{\mathbf t}_m\|_2^2
\le
N C_m^2\delta_m^4.
\]
Since \(T_m\) is an empirical minimizer of \eqref{eq:hrtboost_baselearner}, it follows that
\[
\|\mathbf r_m-\mathbf t_m\|_2^2
\le
N C_m^2\delta_m^4,
\]
and therefore
\[
\gamma_m
\ge
1-\frac{N C_m^2\delta_m^4}{\|\mathbf r_m\|_2^2}.
\]
This relation makes explicit how improved single-tree approximation of the current residual function translates into a larger stage-wise descent factor.
Note, however, that even if the target function \(g\) is globally \(C^2\), the residual \(e_m=g-F_{m-1}\) need not be globally \(C^2\), because \(F_{m-1}\) is piecewise linear.
Accordingly, the approximation-theoretic lower bounds on \(\gamma_m\) are most naturally formulated under piecewise-smooth regularity of \(e_m\) on a partition refining the current ensemble partition.
\end{remark}}

\begin{figure*}[t]
\vspace{-0.1in}
\begin{center}
\centerline{\includegraphics[width=0.99\textwidth]{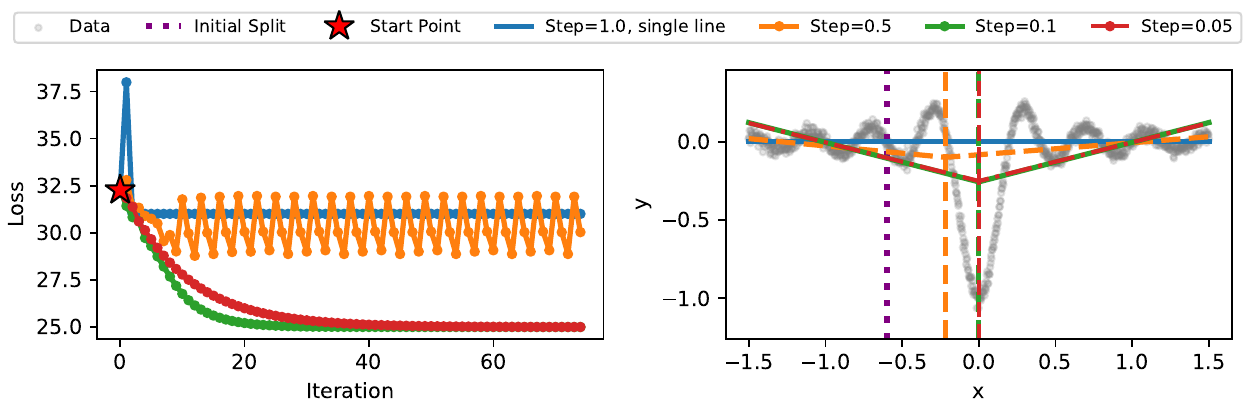}}
\caption{Node-level convergence analysis on the \texttt{sinc} function.
\textbf{Left:} Objective value per iteration for a single internal node (fixed initialization and data subset). The unit step (\(\mu=1.0\), blue) and the large step (\(\mu=0.5\), orange) do not decrease the objective monotonically in this example, and the latter gets trapped in a limit cycle. Smaller damping (\(\mu=0.1\), green; \(\mu=0.05\), red) yields more regular local Newton dynamics at this node.
\textbf{Right:} Final fitted models for this controlled experiment. With large steps, the node effectively collapses to a poor single linear fit, whereas sufficiently damped updates recover a meaningful piecewise linear approximation. Note that this figure illustrates \emph{local} node-level behaviour; full-tree performance with fallback is analysed in Appendix~E.}
\label{fig:analysis_sinc}
\end{center}
\vskip -0.2in
\end{figure*}

\section{Experiments}
\label{sec:experiments}

To validate our algorithm's effectiveness and properties, we conducted experiments addressing four key questions: (1) Does the core splitting algorithm converge efficiently as predicted? (2) Can the resulting tree model effectively approximate complex continuous functions, thereby verifying our universal approximation claim? (3) Does our method perform competitively against established baselines on real-world regression tasks? (4) Does the ensemble extension, HRT-Boost, achieve highly competitive predictive performance while maintaining structural efficiency?

Experiments were conducted using Python~3.11.7 on a workstation equipped with an Intel\textsuperscript{\textregistered} Xeon\textsuperscript{\textregistered} Gold 6530 CPU and eight NVIDIA RTX 4090 GPUs. The GPUs were used to accelerate the deep learning baselines (TabNet \cite{arik2021tabnet} and TabM \cite{gorishniy2025tabm}).

\subsection{Convergence Analysis of the Splitting Algorithm}
\label{subsec:convergence}

\textbf{Objective:} This experiment investigated the convergence dynamics of our splitting algorithm, focusing on the critical role of the step size \(\mu\). We aimed to empirically demonstrate the trade-off between convergence speed and stability by testing the algorithm on two distinct synthetic datasets: one challenging and oscillatory, and another well-behaved yet non-trivial. 
This dual approach validated our theoretical claim that while the algorithm is equivalent to a damped Newton method within fixed partitions, a damped step size (\(\mu < 1\)) is essential for robustness, while a unit step could achieve rapid convergence in stable scenarios.

\begin{figure*}[t] 
\vskip -0.1in
\begin{center}
\centerline{\includegraphics[width=0.99\textwidth]{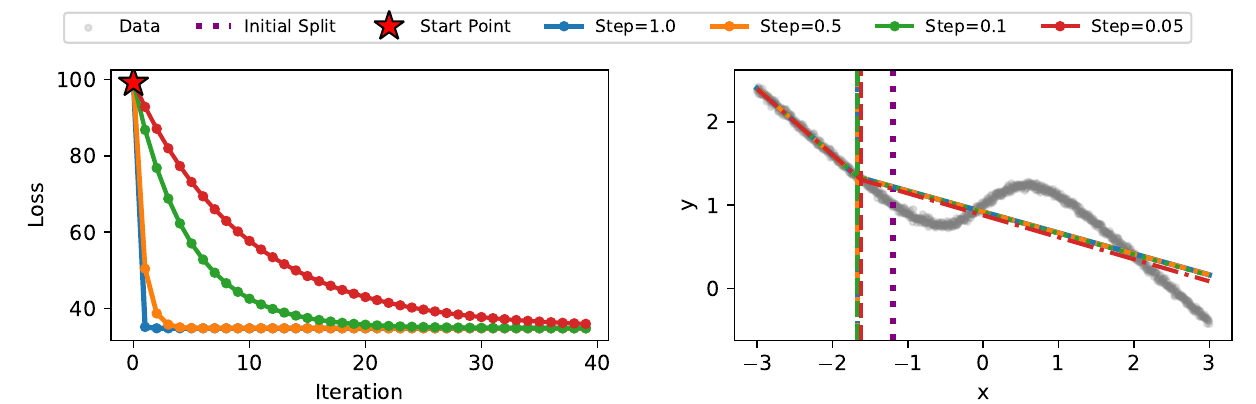}}
\caption{Node-level convergence analysis on the well-behaved \texttt{twisted\_sigmoid} function. 
\textbf{Left:} For this node, all step sizes lead to a monotone decrease of the objective. The unit Newton step (\(\mu=1.0\), blue) reaches the local minimum in the fewest iterations.
\textbf{Right:} All step sizes arrive at essentially the same high-quality piecewise linear fit around the function's inflection point. This illustrates that, on stable problems, even aggressive Newton steps can behave well at the node level.}
\label{fig:analysis_sigmoid} 
\end{center}
\vskip -0.2in
\end{figure*}

\textbf{Setup:} We generated two datasets, each of \(N=1000\) samples with Gaussian noise (\(\epsilon \sim \mathcal{N}(0,1)\)).
For the oscillatory case, data were generated from the \texttt{sinc} function, \(y = -\frac{\sin(5\pi x)}{5\pi x}+0.025\epsilon\), on \(x \in [-1.5, 1.5]\). Its multiple local extrema and sharp oscillations created a challenging optimization landscape.
For the stable case, data were generated from a smooth, nonlinear function with a distinct inflection point, \(y = \frac{2}{1+e^{-3x}} - 0.8x+0.025\epsilon\), on \(x \in [-3, 3]\). This \texttt{twisted\_sigmoid} function is an ideal well-behaved target, as it requires a nonlinear fit but lacks the sharp features that cause instability.
On both datasets, we tested four step sizes: \(\mu \in \{1.0, 0.5, 0.1, 0.05\}\).

\begin{table*}[htbp]

\scriptsize
\caption{Performance comparison on 2D and 3D synthetic functions. All values are reported as mean \( \pm \) standard deviation over 10 (2D functions) and five (3D functions) repetitions. For RMSE and MAE, lower is better (\(\downarrow\)). For R$^2$, higher is better (\(\uparrow\)). Best results are bolded. Significant improvements over the best baseline are marked with \dag{} (\(p<0.05\)).}
\label{tab:synthetic_performance_combined}
\centering
\setlength{\tabcolsep}{3pt}
\begin{tabular}{@{}lccc|lccc@{}}
\toprule
\multicolumn{4}{c|}{\texttt{sinc}} & \multicolumn{4}{c}{\texttt{twisted\_sigmoid}} \\
Model & RMSE ($\downarrow$) & MAE ($\downarrow$) & R$^2$ ($\uparrow$)&Model& RMSE ($\downarrow$) & MAE ($\downarrow$) & R$^2$ ($\uparrow$)\\
\midrule
CART & 0.0325 \( \pm \) 0.0012 & 0.0254 \( \pm \) 0.0009 & 0.9831 \( \pm \) 0.0049
& CART & 0.0312 \( \pm \) 0.0011 & 0.0249 \( \pm \) 0.0009 & 0.9976 \( \pm \) 0.0002 \\
XGBoost  & 0.0289 \( \pm \) 0.0010 & 0.0228 \( \pm \) 0.0009 & 0.9868 \( \pm \) 0.0031
& XGBoost  & 0.0286 \( \pm \) 0.0005 & 0.0226 \( \pm \) 0.0007 & 0.9980 \( \pm \) 0.0001 \\
HRT (ours) & \textbf{0.0280 \( \pm \) 0.0009} & \textbf{0.0222 \( \pm \) 0.0008} & \textbf{0.9876 \( \pm \) 0.0028}
& HRT (ours) & \textbf{0.0258 \( \pm \) 0.0004\dag{}} & \textbf{0.0205 \( \pm \) 0.0003\dag{}} & \textbf{0.9983 \( \pm \) 0.0001\dag{}} \\
\midrule
\multicolumn{4}{c|}{$f_{1}(x_1,x_2)$} & \multicolumn{4}{c}{$f_{2}(x_1,x_2)$} \\
CART & 0.8552 \( \pm \) 0.0148 & 0.6580 \( \pm \) 0.0127 & 0.9945 \( \pm \) 0.0001
& CART & 0.2721 \( \pm \) 0.0028 & 0.1972 \( \pm \) 0.0024 & 0.9301 \( \pm \) 0.0011 \\
XGBoost  & 0.3018 \( \pm \) 0.0088 & 0.2317 \( \pm \) 0.0069 & 0.9993 \( \pm \) 0.0000
& XGBoost  & 0.0859 \( \pm \) 0.0015 & 0.0681 \( \pm \) 0.0013 & 0.9930 \( \pm \) 0.0003 \\
HRT (ours) & \textbf{0.1646 \( \pm \) 0.0796\dag{}} & \textbf{0.1080 \( \pm \) 0.0362\dag{}} & \textbf{0.9998 \( \pm \) 0.0003\dag{}}
& HRT (ours) & \textbf{0.0757 \( \pm \) 0.0020\dag{}} & \textbf{0.0587 \( \pm \) 0.0016\dag{}} & \textbf{0.9946 \( \pm \) 0.0003\dag{}} \\
\midrule
\multicolumn{4}{c|}{$f_{3}(x_1,x_2)$} & \multicolumn{4}{c}{$f_{4}(x_1,x_2)$} \\
CART & 0.0752 \( \pm \) 0.0011 & 0.0587 \( \pm \) 0.0007 & 0.9832 \( \pm \) 0.0005
& CART & 0.0789 \( \pm \) 0.0014 & 0.0575 \( \pm \) 0.0005 & 0.9945 \( \pm \) 0.0002 \\
XGBoost  & 0.0537 \( \pm \) 0.0003 & 0.0428 \( \pm \) 0.0001 & 0.9914 \( \pm \) 0.0002
& XGBoost  & 0.0556 \( \pm \) 0.0005 & 0.0438 \( \pm \) 0.0003 & \textbf{0.9973 \( \pm \) 0.0001} \\
HRT (ours) & \textbf{0.0528 \( \pm \) 0.0004\dag{}} & \textbf{0.0420 \( \pm \) 0.0003\dag{}} & \textbf{0.9917 \( \pm \) 0.0002\dag{}}
& HRT (ours) & \textbf{0.0555 \( \pm \) 0.0007} & \textbf{0.0436 \( \pm \) 0.0004} & \textbf{0.9973 \( \pm \) 0.0001} \\
\bottomrule
\end{tabular}

\end{table*}

\begin{table*}[htbp]
\scriptsize
    \centering
    \caption{Average RMSE  \( \pm \) standard deviation results (lower is better) on regression tasks over five runs. The first column lists the dataset name, followed by the numbers of features, \(N_f\), and samples, \(N_s\). The first seven datasets are from \cite{zharmagambetov2021non}. The reported results for TAO (including TAO-A: axis-aligned and TAO-O: oblique) and CART were taken from \cite{zharmagambetov2021non}; DGT results were from \cite{karthikeyanlearning}; DTSemNet results were from \cite{panda2024vanilla}. For datasets with results reported in the original papers, we directly cite their results. For datasets without reported results, we reproduced the experiments under the same settings. Note that for the Ailerons, D-Elevators, and D-Ailerons datasets, the reported RMSE values are scaled by \(\times 10^{-4}\), \(\times 10^{-3}\), and \(\times 10^{-4}\) respectively. Significant improvements over the best baseline are marked with \dag{} (\(p<0.05\)). } %
    \label{tab:data_comparison} 
    \begin{tabular}
{@{}lcccccccc@{}}
        \toprule
        Dataset (\(N_f, N_s\)) & DTSemNet & DGT  & TAO-A  & TAO-O & CART & \textcolor{black}{M5} & \textcolor{black}{Linear tree} & HRT (ours)\\ 
        \midrule
Abalone (8, 4k) & 2.14 \(\pm\) 0.03 & 2.15 \(\pm\) 0.03 & 2.32 \(\pm\) 0.58 & 2.18 \(\pm\) 0.05 & 2.34 \(\pm\) 0.59 & 2.18 \(\pm\) 0.04 & 2.22 \(\pm\) 0.05 & \textbf{2.11 \(\pm\) 0.05}\\
CPUact (21, 8k) & 2.65 \(\pm\) 0.18 & 2.91 \(\pm\) 0.15 & 3.26 \(\pm\) 0.51 & 2.71 \(\pm\) 0.04 & 3.28 \(\pm\) 0.44 & 4.16 \(\pm\) 2.72 & 3.05 \(\pm\) 0.66 & \textbf{2.56 \(\pm\) 0.05} \\
Ailerons (40, 14k) & {1.66 \(\pm\) 0.01} & 1.72 \(\pm\) 0.02 & 2.55 \(\pm\) 0.00 & 1.76 \(\pm\) 0.02 & 2.85 \(\pm\) 0.57 & 1.68 \(\pm\) 0.00 & 1.75 \(\pm\) 0.00 & \textbf{1.64 \(\pm\) 0.00}\dag{} \\
CTSlice (384, 54k) & 1.45 \(\pm\) 0.12 & 2.30 \(\pm\) 0.17 & 2.66\(\pm\) 0.04 & {1.54 \(\pm\) 0.05} & 2.69 \(\pm\) 0.03 & 3.81 \(\pm\) 0.28 & 2.89 \(\pm\) 0.67 & \textbf{1.41 \(\pm\) 0.15}\\
YearPred (90, 515k) & 8.99 \(\pm\) 0.01 & 9.05 \(\pm\) 0.01 & 9.76\(\pm\)0.11 & 9.11 \(\pm\) 0.05 & 9.79 \(\pm\) 0.54 & 9.43 \(\pm\) 0.03 & 9.15 \(\pm\) 0.01 & \textbf{8.97 \(\pm\) 0.02}\dag{} \\
Concrete (8, 1k) & 7.96 \(\pm\) 0.33 & 7.43 \(\pm\) 0.24 & 7.20\(\pm\)3.17 & 7.17 \(\pm\) 0.43 & 7.22 \(\pm\) 3.13 & \textbf{6.65 \(\pm\) 0.12} & 6.93 \(\pm\) 0.74 & 6.92 \(\pm\) 0.24\\
Airfoil (5, 2k) & 3.83 \(\pm\) 0.16 & 3.72 \(\pm\) 0.10 & 2.73\(\pm\)0.62 & 3.13 \(\pm\) 0.38 & 2.75 \(\pm\) 0.62 & 3.21 \(\pm\) 0.06 & 2.81 \(\pm\) 0.12 & \textbf{2.63 \(\pm\) 0.10}\\
\midrule
Fried (10, 41k) & 1.51 \(\pm\) 0.00 & 2.27 \(\pm\) 0.09 & -- & -- & 1.96 \(\pm\) 0.01 & 1.59 \(\pm\) 0.02 & 1.10 \(\pm\) 0.00 & \textbf{1.09 \(\pm\) 0.01}\dag{} \\
D-Elevators (6, 10k) & 1.46 \(\pm\) 0.00 & 1.46 \(\pm\) 0.01 & -- & -- & 1.50 \(\pm\) 0.02 & 1.43 \(\pm\) 0.02 & 1.45 \(\pm\) 0.02 & \textbf{1.43 \(\pm\) 0.01}\\
D-Ailerons (5, 7k) & 1.76 \(\pm\) 0.00 & 1.75 \(\pm\) 0.02 & -- & -- & 1.83 \(\pm\) 0.04 & \textbf{1.65 \(\pm\) 0.03} & 1.72 \(\pm\) 0.02 & 1.68 \(\pm\) 0.02\\
Kinematics (8, 8k) & 0.168 \(\pm\) 0.000 & 0.135 \(\pm\) 0.003 & -- & -- & 0.200 \(\pm\) 0.003 & 0.175 \(\pm\) 0.004 & 0.141 \(\pm\) 0.003 & \textbf{0.102\(\pm\) 0.003}\dag{} \\
C\&C (127, 2k) & 0.201 \(\pm\) 0.000 & 0.227 \(\pm\) 0.000 & -- & -- & 0.160 \(\pm\) 0.005 & 0.145 \(\pm\) 0.004 & 0.195 \(\pm\) 0.009 & \textbf{0.140 \(\pm\) 0.004} \dag{} \\
        \bottomrule
    \end{tabular}
\end{table*}

\textbf{Results and Analysis:}
The results in Fig. \ref{fig:analysis_sinc} (oscillatory case) and Fig. \ref{fig:analysis_sigmoid} (stable case) show the role of the step size.
The results on the \texttt{sinc} function highlighted the necessity of damping for stability (Fig. \ref{fig:analysis_sinc}). The unit Newton step (\(\mu = 1.0\)) was oscillatory. Its aggressive updates caused a partition collapse within the first few iterations, making the algorithm revert to a poor global linear fit. The large damped step (\(\mu = 0.5\)), while avoiding outright collapse, became trapped in a limit cycle; the model's parameters and the corresponding data partition oscillated between a small set of states without ever converging to a fixed point. Small steps (\(\mu = 0.1, 0.05\)) proved robust, converging to a high-quality piecewise linear model that accurately captures the function's complex structure. This clearly demonstrated that for challenging problems, effective damping is not merely a heuristic but a prerequisite for success.

Results on the \texttt{twisted\_sigmoid} function (see Fig. \ref{fig:analysis_sigmoid} for visualization and detailed analysis) illustrated the efficiency benefits of the Newton updates. Here, all step sizes converged to the same solution around the function's inflection point. The unit Newton step (\(\mu = 1.0\)) exhibited the fastest convergence, reaching the optimal solution in just a few iterations. As the step size decreased, the number of iterations required for convergence predictably increased, showcasing the classic speed-stability trade-off.

These experiments characterized our algorithm's behavior as a damped Newton method.
On complex, oscillatory data, a small step size (\(\mu < 1\)) was essential to ensure robust convergence by preventing partition collapse. On well-behaved problems, a unit step size (\(\mu = 1\)) leveraged the full power of the Newton update, achieving extremely rapid convergence. This confirmed the algorithm's theoretical foundation and provided a clear practical guideline: the step size \(\mu\) serves as a crucial hyperparameter to balance convergence speed against stability, depending on the nature of the problem. 
Additional ablation studies on the fallback mechanism are reported in Appendix~E.

\subsection{Function Approximation on Synthetic Data}
\label{subsec:approximation}

\textbf{Objective:}
This experiment validated the piecewise linear approximation of our proposed oblique regression tree and connected it with Theorem~\ref{thm:approx_rate}, demonstrating its superiority in fitting complex functions.

\textbf{Setup:}
We evaluated our method on both 2D and 3D regression tasks.
For 2D tasks, we used two classic test functions: the \texttt{sinc} function \(y = -\frac{\sin(5\pi x)}{5\pi x}\)  on \(x \in [-1.5, 1.5]\) and the \texttt{twisted\_sigmoid} function \(y = \frac{2}{1+e^{-3x}} - 0.8x\)  on \(x \in [-3, 3]\).
For 3D tasks, we evaluated our method's performance on four oscillatory surface functions with complex characteristics (detailed equations available in Appendix~F), with inputs \(x_1, x_2\) in the range \([-3, 3]\).
All synthetic data were generated by adding independent and identically distributed zero-mean Gaussian noise to the true function values. Specifically, the noise standard deviation was \(0.025\) for 2D tasks and \(0.05\) for 3D tasks.
A detailed rationale for the selection of these synthetic functions and noise levels is provided in Appendix~F.
For each task, we generated a sufficient number of data points (1,000 for 2D tasks, 10,000 for 3D tasks) and split them into 70\% for training and 30\% for testing.

We compared our piecewise linear regression tree with two representative baseline methods, CART and XGBoost.
These baseline models were implemented using the \texttt{scikit-learn} library.
Hyperparameters for all models were optimized via five-fold cross-validation and grid search on the training set, with final performance reported on the testing set.
For detailed hyperparameter configurations, please refer to Appendix~G.

\textbf{2D and 3D Experimental Results:}
 The results, summarized in Table~\ref{tab:synthetic_performance_combined}, demonstrated our method's ability to effectively approximate complex functions across both 2D and 3D tasks. For a detailed visual comparison of 2D approximation performance and visualizations for $f_1$, $f_2$, $f_3$ and $f_4$, please refer to Appendix~F. 
The superior performance of our method across several synthetic functions arose from two fundamental design principles.
First, the combination of piecewise linear modeling and Newton iterative optimization allowed for precise and efficient local approximation. In contrast to traditional piecewise constant tree models, we explicitly fitted linear or planar functions within each partition and employed Newton updates to quickly converge to high-quality solutions.
Second, flexible oblique splits significantly enhanced the adaptability of feature-space partitioning. By overcoming the limitations of axis-parallel splits, they yielded decision boundaries that better aligned with the intrinsic geometry of the data, enabling more efficient and accurate region segmentation.

\subsection{Performance on Real-World Regression Datasets}
\label{subsec:real_world}

\textbf{Objective:} This experiment aimed to evaluate the practical performance of our proposed model on standard benchmark regression datasets and compare it against other well-established methods.

\textbf{Setup:} 
To assess the practical performance of the proposed method, we evaluated it on a diverse suite of publicly available regression datasets, including all seven benchmarks from \cite{zharmagambetov2021non} and several large-scale industrial datasets such as \emph{YearPred}. These datasets cover a wide spectrum of feature dimensionalities (\(N_f\)) and sample sizes (\(N_s\)), from low-dimensional, small-sample tasks to high-dimensional, large-scale scenarios (see Table~\ref{tab:data_comparison}).

We compared against the following strong baselines: DTSemNet \cite{panda2024vanilla}, DGT \cite{karthikeyanlearning}, TAO (oblique and axis-aligned) \cite{carreira2018alternating, zharmagambetov2021non}, CART \cite{breiman1984cart}, \textcolor{black}{M5 model trees \cite{wang1997inducing} and the linear trees implemented in the \texttt{linear-tree} library \cite{cerliani2022lineartree}}.

For the seven datasets from \cite{zharmagambetov2021non}, we directly cited performance figures from that work. For the remaining datasets without published results, we reproduced the results. The only exception is TAO, which is not publicly available; its reported numbers were directly taken from the original paper.
For reproduced experiments, hyperparameters were tuned via five-fold cross-validation on the training set, with final performance reported on the testing set. Specifically, for datasets with predefined train/test splits, we adhered to those original partitions. For datasets without such predefined splits, we first randomly divided the entire dataset in a 0.5:0.5 train/test ratio. For detailed hyperparameter configurations, please refer to Appendix~G.

\begin{table*}[htbp]
\scriptsize
    \centering
    \caption{
Average depths \((\Delta)\) and average number of leaves (L) over five repetitions for regression datasets. 
A dash (--) indicates that the corresponding result was not reported in the original source.
Datasets and results for TAO (including TAO-A: axis-aligned and TAO-O: oblique) and CART are from \cite{zharmagambetov2021non}, DGT results are from \cite{karthikeyanlearning}, and DTSemNet results are from \cite{panda2024vanilla}.
}
    \label{tab:tree_complexity}
    \begin{tabular}{@{}l|cc|cc|cc|cc|cc|cc@{}}
        \toprule
        Dataset (\(N_f, N_s\)) & \multicolumn{2}{c}{DTSemNet} & \multicolumn{2}{c}{DGT} & \multicolumn{2}{c}{TAO-A} & \multicolumn{2}{c}{TAO-O} & \multicolumn{2}{c}{CART} & \multicolumn{2}{c}{HRT (ours)} \\
        \cmidrule(lr){2-3} \cmidrule(lr){4-5} \cmidrule(lr){6-7} \cmidrule(lr){8-9} \cmidrule(lr){10-11} \cmidrule(lr){12-13}
        & \(\Delta\) & L & \(\Delta\) & L & \(\Delta\) & L & \(\Delta\) & L & \(\Delta\) & L & \(\Delta\) & L \\
        \midrule
        Abalone (8, 4k)          & 5.0 & -- & 6.0  & -- & 5.0  & 12.8  & 6.0 & 58.6  & 5.0  & 12.8  & \textbf{2.0} & \textbf{4.0} \\
        CPUact (21, 8k)           & 5.0 & -- & 6.0 & -- & 9.0  & 57.2  & 6.0 & 52.7  & 9.0  & 57.2  & \textbf{2.0} & \textbf{4.0} \\
        Ailerons (40, 14k)         & 5.0 & -- & 6.0 & -- & 7.0  & 15.0  & 6.0 & 60.2  & 7.0  & 15.0  & \textbf{2.0} & \textbf{4.0} \\
        CTSlice (384, 54k)          & 5.0 & -- & 10.0 & -- & 36.0 & 700.0 & \textbf{7.0} & \textbf{74.8} & 36.0 & 700.0 & \textbf{7.0} & 93.2 \\
        YearPred (90, 515k)         & 6.0 & -- & 8.0 & -- & 12.0 & 135.0 & 8.0 & 157.9 & 12.0 & 135.0 & \textbf{4.0} & \textbf{16.0} \\
        Concrete (8, 1k)         & 5.0  & -- & 6.0  & -- & 11.2 & 113.0 & 9.0 & 192.0 & 11.2 & 113.0 & \textbf{3.0} & \textbf{5.8} \\
        Airfoil (5, 2k)          & \textbf{5.0}  & -- & 6.0  & -- & 15.0 & 479.8 & 8.0 & 147.1 & 15.0 & 479.8 & \textbf{5.0} & \textbf{25.0} \\
        \bottomrule
    \end{tabular}
\end{table*}

\begin{table*}[htbp]
\scriptsize
    \centering
    \caption{\textcolor{black}{Training time (in seconds, lower is better) on three regression datasets. The numbers in parentheses represent the average number of iterations for HRT (ours).}}
    \label{tab:training_time_comparison}
    \begin{tabular}
    {@{}lcccccc@{}}
        \toprule
        Dataset (\(N_f, N_s\)) & DTSemNet & DGT & CART  & M5 & Linear tree & HRT (ours)\\ 
        \midrule
        D-Ailerons (5, 7k)& 9.557 & 60.850 & 0.008  & 0.344 & 3.601 & 0.183 (6.7) \\
        Fried (10, 41k)   & 69.979 & 395.243  & 0.171  & 8.322 & 25.743 & 2.807 (45.7) \\
        Kinematics (8, 8k)   & 16.950 & 93.690 & 0.028  & 1.447 & 7.960 & 0.790 (15.8)\\
        \bottomrule
    \end{tabular}
\end{table*}

\textbf{Results and Analysis:} 
As shown in Table~\ref{tab:data_comparison}, our method achieved the best or a highly competitive RMSE on a broad set of datasets. Among the single-tree models considered here, HRT was often among the strongest performers and yielded clear RMSE improvements on several datasets. On large-scale datasets such as YearPred, its performance remained competitive with the best reported results, indicating that the approach scales favorably beyond small and medium-sized benchmarks.
For hyperparameter configurations, please refer to Appendix~G.

In addition to RMSE, Table~\ref{tab:tree_complexity} assesses the structural complexity of the learned models in terms of \emph{average depth} (\(\Delta\)) and \emph{average number of leaves} (L) over five runs. Our method typically produced substantially shallower trees with fewer leaves compared to other single-tree baselines. For instance, on \emph{Concrete}, our model achieved competitive RMSE with a depth of only 3 and 5.8 leaves, while CART required a depth of 11.2 and 113 leaves yet attained a higher error on this dataset. This indicates that the proposed method attains a favorable trade-off between \emph{predictive performance} and \emph{transparency}. Our method also showed efficient training performance on multiple datasets (Table~\ref{tab:training_time_comparison}). The observed training efficiency is consistent with the proposed optimization scheme: each node update reduces to solving two least-squares problems under fixed partitions, while the learned trees are typically shallow and contain relatively few leaves.

\subsection{Evaluation of the HRT-Boost}
\label{subsec:hrt_boost_evaluation}

\textcolor{black}{\textbf{Objective:} Having established the efficacy of a single HRT, we evaluated its ensemble extension, HRT-Boost. The objective was to assess its predictive performance, model complexity, and computational efficiency against strong ensemble baselines.}

\textcolor{black}{\textbf{Setup:} 
We compared HRT-Boost with five widely used and highly optimized tree-based ensemble algorithms: Random Forest (RF) \cite{breiman2001random}, Scikit-Learn's Gradient Boosting Machine (Scikit-GBM) \cite{friedman2001greedy, pedregosa2011scikit}, AdaBoost \cite{freund1997decision}, XGBoost \cite{chen2016xgboost}, and LightGBM \cite{ke2017lightgbm}.
For deep tabular baselines, we included TabM \cite{gorishniy2025tabm} and TabNet \cite{arik2021tabnet}. TabM is a parameter-efficient ensemble-based MLP architecture for tabular deep learning, whereas TabNet uses sequential attention for interpretable feature selection in tabular learning.
The evaluation spanned the same suite of regression datasets utilized in Section~\ref{subsec:real_world}. To ensure a fair comparison, all models underwent rigorous hyperparameter tuning via five-fold cross-validation on the training sets. For our proposed HRT-Boost, we specifically enabled the automatic step size mechanism (\texttt{step\_size='auto'}) across all boosting stages to dynamically adapt to the changing residual landscapes. The detailed optimal hyperparameter configurations for all models and datasets are provided in Appendix~G.}

\textcolor{black}{\textbf{Results and Analysis:}
The empirical results were broadly consistent with the theoretical motivation of HRT-Boost discussed in Section~\ref{sec:hrt_boost}. We analyzed the performance across three key dimensions:}

\begin{table*}[htbp]
\scriptsize
    \centering
    \caption{\textcolor{black}{Average RMSE \( \pm \) standard deviation results (lower is better) on regression tasks over five runs. Note that for the Ailerons, D-Elevators, and D-Ailerons datasets, the reported RMSE values are scaled by \(\times 10^{-4}\), \(\times 10^{-3}\), and \(\times 10^{-4}\) respectively. Significant improvements over the best baseline are marked with \dag{} (\(p<0.05\)).} }
    \label{tab:rmse_comparison} 
    \begin{tabular}{@{}lcccccccc@{}}
        \toprule
        Dataset (\(N_f, N_s\)) & RF & Scikit-GBM & AdaBoost & XGBoost & LightGBM & TabM & TabNet & HRT-Boost (ours) \\ 
        \midrule
Abalone (8, 4k) & 2.19 \( \pm \) 0.00 & 2.20 \( \pm \) 0.01 & 2.41 \( \pm \) 0.01 & 2.21 \( \pm \) 0.00 & 2.20 \( \pm \) 0.00 & 2.12 \( \pm \) 0.00 & 2.35 \( \pm \) 0.14 & \textbf{2.08 \( \pm \) 0.00} \dag{} \\
CPUact (21, 8k) & 2.89 \( \pm \) 0.01 & 2.31 \( \pm \) 0.01 & 3.94 \( \pm \) 0.03 & 2.30 \( \pm \) 0.01 & \textbf{2.29 \( \pm \) 0.00} & 2.43 \( \pm \) 0.01 & 3.83 \( \pm \) 0.25 & 2.40 \( \pm \) 0.01 \\
Ailerons (40, 14k) & 1.77 \( \pm \) 0.00 & 1.63 \( \pm \) 0.00 & 2.23 \( \pm \) 0.02 & 1.69 \( \pm \) 0.01 & 1.63 \( \pm \) 0.00 & 1.61 \( \pm \) 0.00 & 2.20 \( \pm \) 0.58 & \textbf{1.61 \( \pm \) 0.00} \\
CTSlice (384, 54k) & 8.81 \( \pm \) 0.02 & 4.35 \( \pm \) 0.00 & 13.45 \( \pm \) 0.07 & 2.57 \( \pm \) 0.06 & 1.98 \( \pm \) 0.00 & 1.45 \(\pm\) 0.05 & 2.04 \(\pm\) 0.41 &\textbf{1.24 \( \pm \) 0.00} \dag{}\\
YearPred (90, 515k) & 9.89 \( \pm \) 0.00 & 9.11 \( \pm \) 0.00 & 12.52 \( \pm \) 0.30 & 8.96 \( \pm \) 0.00 & 8.91 \( \pm \) 0.00 & \textbf{8.69 \(\pm\) 0.01} & 8.78 \(\pm\) 0.02 & 8.76 \( \pm \) 0.00   \\
Concrete (8, 1k) & 6.23 \( \pm \) 0.04 & \textbf{4.50 \( \pm \) 0.07} & 7.55 \( \pm \) 0.07 & 4.59 \( \pm \) 0.10 & 4.61 \( \pm \) 0.00 &  5.68 \(\pm\) 0.05 & 7.11 \(\pm\) 0.32 & 5.12 \( \pm \) 0.05 \\
Airfoil (5, 2k) & 2.63 \( \pm \) 0.02 & 1.94 \( \pm \) 0.02 & 3.63 \( \pm \) 0.04 & \textbf{1.80 \( \pm \) 0.01} & 2.01 \( \pm \) 0.00 & 2.99 \( \pm \) 0.05 & 3.49 \( \pm \) 0.25 & 1.87 \( \pm \) 0.12 \\
Fried (10, 41k) & 2.00 \( \pm \) 0.00 & 1.09 \( \pm \) 0.01 & 2.00 \( \pm \) 0.01 & 1.08 \( \pm \) 0.00 & 1.09 \( \pm \) 0.00 & 1.05 \( \pm \) 0.02 & 1.09 \( \pm \) 0.01 & \textbf{1.02 \( \pm \) 0.00} \dag{}  \\
D-Elevators (6, 10k) & 1.42 \( \pm \) 0.00 & 1.41 \( \pm \) 0.00 & 1.51 \( \pm \) 0.00 & 1.40 \( \pm \) 0.00 & 1.41 \( \pm \) 0.00 & 1.40 \( \pm \) 0.01 & 1.44 \( \pm \) 0.01 & \textbf{1.40 \( \pm \) 0.00} \\
D-Ailerons (5, 7k) & 1.69 \( \pm \) 0.00 & 1.68 \( \pm \) 0.00 & 1.86 \( \pm \) 0.01 & 1.76 \( \pm \) 0.01 & 1.68 \( \pm \) 0.00 & \textbf{1.66 \( \pm \) 0.00} & 1.73 \( \pm \) 0.01 & 1.68 \( \pm \) 0.00 \\
Kinematics (8, 8k) & 0.171 \( \pm \) 0.001 & 0.152 \( \pm \) 0.001 & 0.202 \( \pm \) 0.001 & 0.126 \( \pm \) 0.001 & 0.120 \( \pm \) 0.000 & 0.081 \( \pm \) 0.001 & 0.107 \( \pm \) 0.002 & \textbf{0.079 \( \pm \) 0.000} \dag{} \\
C\&C (127, 2k) & 0.134 \( \pm \) 0.001 & 0.138 \( \pm \) 0.001 & 0.142 \( \pm \) 0.000 & 0.138 \( \pm \) 0.000 & 0.139 \( \pm \) 0.000 & 0.133 \(\pm\) 0.000 & 0.147 \(\pm\) 0.005 & \textbf{0.133 \( \pm \) 0.001} \\
        \bottomrule
    \end{tabular}
\end{table*}

\textcolor{black}{\textit{1) Predictive Accuracy:} As reported in Table~\ref{tab:rmse_comparison}, HRT-Boost delivered strong and often highly competitive predictive performance across the evaluated datasets. It achieved the lowest average RMSE on several datasets (Abalone, Ailerons, CTSlice, Fried, Kinematics, and C\&C), and remained competitive on the others. On large-scale datasets such as \emph{YearPred} and high-dimensional datasets like \emph{C\&C}, HRT-Boost compared favorably with strong baselines such as XGBoost and LightGBM, while using substantially fewer leaves. 
The base HRT can capture structured local residual patterns effectively, which supports strong stage-wise error reduction in boosting.}

\textcolor{black}{\textit{2) Model Complexity among Tree-based Models:} Beyond raw accuracy, a hallmark of HRT-Boost is its structural efficiency. Table~\ref{tab:leaf_comparison} compares the total number of leaf nodes across the ensemble models, serving as a proxy for model complexity and memory footprint. 
HRT-Boost required substantially fewer leaves than RF, XGBoost, and LightGBM across almost all datasets. For example, on the \emph{Abalone} dataset, HRT-Boost attained the best RMSE in our comparison using only 200 total leaf nodes, whereas XGBoost and LightGBM required 1175 and 1550 leaves, respectively. This compactness is a direct consequence of the hierarchical ReLU-like representation capacity (Section~\ref{sec:relu_logic}). Because each oblique split effectively functions as a nonlinear feature transformation, the base trees can remain shallow while still capturing complex data distributions.}

\textcolor{black}{\textit{3) Computational Efficiency:} To provide a fair, hardware-agnostic evaluation of computational cost, Table~\ref{tab:model_flops_comparison} compares the estimated Floating Point Operations (FLOPs) for both training and single-sample inference (detailed estimation procedures are provided in Appendix~H). 
The analysis reveals a contrast between tree-based methods and deep tabular models: TabNet and TabM require orders of magnitude more FLOPs for both training and inference. 
Within the tree-based family, HRT-Boost demonstrates practical computational efficiency. Its training cost remains in the same magnitude as RF and standard gradient boosting. Its inference FLOPs per sample are lightweight, showing that the structural compactness of HRT-Boost translates directly into efficient deployment.}

\textcolor{black}{In summary, HRT-Boost successfully scales the piecewise linear oblique tree into a powerful functional gradient boosting machine. Overall, HRT-Boost combines strong predictive accuracy with a clear advantage in compactness, making it a competitive alternative to traditional ensemble methods.}

\begin{table*}[htbp]
\scriptsize
    \centering
    \caption{\textcolor{black}{Average total number of leaf nodes for tree-based methods.} }
    \label{tab:leaf_comparison} 
    \begin{tabular}{@{}lcccccc@{}}
        \toprule
        Dataset (\(N_f, N_s\)) & RF & Scikit-GBM & AdaBoost & XGBoost & LightGBM & HRT-Boost (ours) \\ 
        \midrule
        Abalone (8, 4k) & 12610 & 762 & 400 & 1175 & 1550 & \textbf{200} \\
        CPUact (21, 8k) & 9273 & 2223 & \textbf{1071} & 6560 & 4650 & 2077 \\
        Ailerons (40, 14k) & 16752 & 2222 & \textbf{399} & 735 & 4650 & 400 \\
        CTSlice (384, 54k) & 4759 & 2400 & \textbf{1197} & 8525& 9450 & 3194 \\
        YearPred (90, 515k) & 4800 & 2387 & \textbf{395} & 9466 & 9450 & 800 \\
        Concrete (8, 1k) & 16079 & 2231 & \textbf{1193} & 6312 & 4630 & 1646 \\
        Airfoil (5, 2k) & 15261 & 2313 & 1189 & 6407 & 4278 & \textbf{827} \\
        Fried (10, 41k) & 19194 & 2376 & 1200 & 8989 & 4650 & \textbf{1181} \\
        D-Ailerons (5, 7k) & 11752 & 768 & 400 & \textbf{285} & 1550 & 580 \\
        D-Elevators (6, 10k) & 16525 & 780 & \textbf{400} & 1111 & 1550 & 640 \\
        Kinematics (8, 8k) & 16917 & 2270 & \textbf{1198} & 8025 & 9450 & 2275 \\
        C\&C (127, 2k) & 10760 & 772 & 399 & 388 & 2690 & \textbf{284} \\
        \bottomrule
    \end{tabular}
\end{table*}

\begin{table*}[htbp]
\scriptsize
    \centering
    \caption{\textcolor{black}{Computational cost analysis: estimated training FLOPs and single-sample inference FLOPs. (K, M, and B denotes $10^3$, $10^6$, and $10^9$ respectively.)}}
    \label{tab:model_flops_comparison} 
    \begin{tabular}{@{}lcccccccc@{}}
        \toprule
        Dataset & RF & Scikit-GBM & AdaBoost & XGBoost & LightGBM & TabM & TabNet & HRT-Boost (ours) \\ 
        \midrule
        \multicolumn{9}{c}{\textbf{Training FLOPs}} \\
        \midrule
        Abalone (8, 4k) & 874.44M & 262.29M & 256.88M & 211.43M & 291.41M & 690.35B & 11.35B & 349.40M \\
        Airfoil (5, 2k) & 131.14M & 121.48M & 121.04M & 293.95M & 136.72M & 123.01B & 4.84B & 63.88M \\
        Kinematics (8, 8k) & 1.51B & 1.38B & 1.33B & 1.04B & 1.18B & 674.68B & 35.42B & 2.66B \\
        Ailerons (40, 14k) & 14.63B & 13.01B & 4.10B & 4.13B & 8.48B & 1248.78B & 83.22B & 18.50B \\
        D-Ailerons (5, 7k) & 843.80M & 241.59M & 236.24M & 141.17M & 184.21M & 583.76B & 8.91B & 294.37M \\
        D-Elevators (6, 10k) & 1.39B & 400.48M & 392.14M &301.17M & 287.45M & 780.80B & 30.67B & 341.33M \\
        Fried (10, 41k) & 11.77B & 10.29B & 9.92B & 3.45B & 5.17B & 3370.11B & 110.77B & 10.97B \\
        \midrule
        \multicolumn{9}{c}{\textbf{Inference FLOPs / sample}} \\
        \midrule
        Abalone (8, 4k) & 3.13K & 694 & 832 & 2.98K & 1.75K & 2.20M & 36.02K & 3.35K \\
        Airfoil (5, 2k) & 3.20K & 2.08K & 2.73K & 5.39K & 4.93K & 1.09M & 42.73K & 4.85K \\
        Kinematics (8, 8k) & 3.25K & 2.07K & 2.73K & 5.55K & 5.47K & 1.10M & 57.31K & 13.51K \\
        Ailerons (40, 14k) & 3.25K & 2.06K & 832 & 1.32K & 4.45K & 1.16M & 77.15K & 16.45K \\
        D-Ailerons (5, 7k) & 3.12K & 695 & 832 & 781 & 1.47K & 1.09M & 16.57K & 5.47K \\
        D-Elevators (6, 10k) & 3.24K & 697 & 832 & 2.93K & 1.38K & 1.09M & 42.74K & 3.37K \\
        Fried (10, 41k) & 3.30K & 2.10K & 2.73K & 5.65K & 4.22K & 1.10M & 36.02K & 13.29K \\
        \bottomrule
    \end{tabular}
\end{table*}

\section{Conclusion}

In this paper, we introduced the Hinge Regression Tree (HRT), a novel oblique regression tree framework that formulates node splitting as a nonlinear least-squares optimization. By utilizing a hinge-based envelope of linear models, the HRT achieves ReLU-like representation capacity within a compact tree structure. We demonstrated that our iterative splitting procedure is equivalent to a damped Newton method, providing a theoretical foundation for node-level optimization. We established a universal approximation property for the HRT model class, providing an explicit $O(\delta^2)$ convergence rate.

Building upon the single-tree HRT, we proposed HRT-Boost, an ensemble variant that integrates HRTs into a gradient boosting framework.
Our experimental results on both synthetic and real-world benchmarks showed that a single HRT is highly competitive with existing single-tree baselines, while HRT-Boost achieved strong predictive accuracy and often compared favorably with powerful libraries such as XGBoost and LightGBM. Importantly, HRT-Boost often achieved this level of performance with substantially lower model complexity, requiring fewer leaves and shallower structures than traditional boosting methods on many of the evaluated datasets. This combination of predictive performance and structural parsimony makes our approach an alternative for applications where both accuracy and compactness are important.

\bibliographystyle{IEEEtran}
\bibliography{hrt2026}

\begin{IEEEbiography}
[{\includegraphics[width=1in,height=1.25in,clip,keepaspectratio] {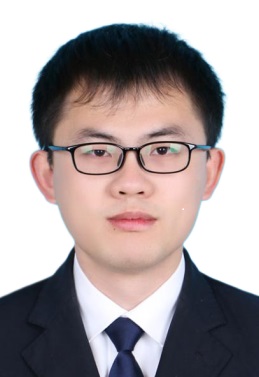}}]{Hongyi Li}
received his B.S. degree in Automation and M.S. degree in Control Science and Engineering from Harbin Institute of Technology, China, in 2020 and 2023, respectively. He is currently working toward the Ph.D. degree in Control Science and Engineering from Harbin Institute of Technology, Shenzhen, China. His research interests include optimization, decision trees, piecewise linear approximation and energy conservation in buildings. \end{IEEEbiography}

\begin{IEEEbiography}[{\includegraphics[width=1in,height=1.25in,clip,keepaspectratio]{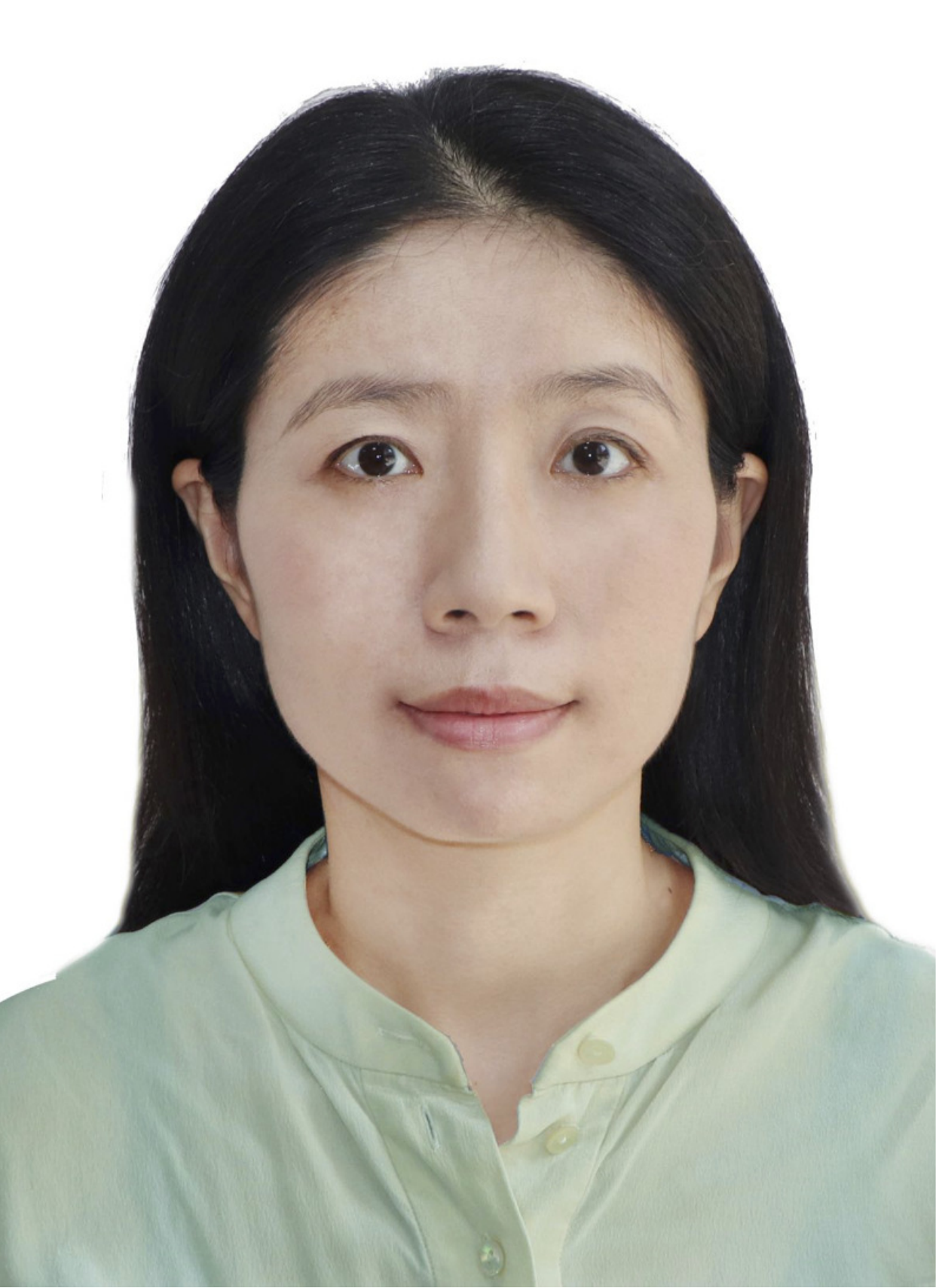}}]{Jun Xu} 
received her B.S. degree in Control Science and Engineering from Harbin Institute of Technology, Harbin, China, in 2005 and Ph.D. degree in Control Science and Engineering from Tsinghua University, China, in 2010. Currently, she is a professor in School of Intelligence Science and Engineering, Harbin Institute of Technology, Shenzhen, China. Her research interests include piecewise linear functions and their applications in machine learning, nonlinear system identification and control.\end{IEEEbiography}

\begin{IEEEbiography}[{\includegraphics[width=1in,height=1.25in,clip,keepaspectratio]{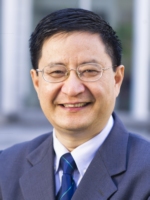}}]{Hong Yan} 
received his PhD degree from Yale University. He was Professor of Imaging Science at the University of Sydney and currently is Wong Chun Hong Professor of Data Engineering and Chair Professor of Computer Engineering at City University of Hong Kong. Professor Yan's research interests include AI, bioinformatics, pattern recognition, and signal and image processing. He has over 600 journal and conference publications in these areas. Professor Yan is an IEEE Fellow and IAPR Fellow. He received the 2016 Norbert Wiener Award from the IEEE SMC Society for contributions to image and biomolecular pattern recognition techniques. He is a Foreign Member of the European Academy of Sciences and Arts and a Fellow of the US National Academy of Inventors.\end{IEEEbiography}

\clearpage
\newpage
\onecolumn

\appendices

\startcontents[appendix]
\printcontents[appendix]{}{1}{}

\onecolumn

\section{Detailed Derivations for Equivalence of Iterative Optimization to Newton Method}
\label{sec:appendix_newton_en}

This appendix provides a detailed mathematical derivation demonstrating the equivalence of our iterative optimization procedure to a Newton method.

Recall from Section~III that at any internal node of the tree $\mathbb{D}_t$, for all $\vx_j \in \mathbb{D}_t$, we aim to minimize the nonlinear least squares objective function:
\[
V(\vtheta) = \frac{1}{2} \sum\limits_{\vx_j \in \mathbb{D}_t} \left( y_j - h(\vx_j, \vtheta) \right)^2,
\]
where \(\vtheta = [\vtheta_{t_1}^T, \vtheta_{t_2}^T]^T \in \R^{2(d+1)}\) is the total parameter vector. The function \(h(\vx_j, \vtheta)\) is defined as \(h(\vx_j, \vtheta) = \max \left( \tilde{\vx}_j^T \vtheta_{t_1}, \tilde{\vx}_j^T \vtheta_{t_2} \right)\) or \(h(\vx_j, \vtheta) = \min \left( \tilde{\vx}_j^T \vtheta_{t_1}, \tilde{\vx}_j^T \vtheta_{t_2} \right)\). For clarity, we will provide the detailed derivation for the \(\max\) form; the derivation for the \(\min\) form follows analogously with adjusted partition definitions. As discussed in Section~III, directly minimizing \(V(\vtheta)\) is challenging due to the non-differentiability of the hinge functions at the switching points.

Our approach can be interpreted as an iterative procedure equivalent to the Newton method within fixed partitions. We first define dynamic partitions of the data based on the current parameters \(\vtheta\), as introduced in Section~III:
\begin{align*}
    \calS_1(\vtheta) &= \{\vx_j \in \mathbb{D}_t \mid \tilde{\vx}_j^T \vtheta_{t_1} \ge \tilde{\vx}_j^T \vtheta_{t_2} \}, \\
    \calS_2(\vtheta) &= \mathbb{D}_t \setminus \calS_1(\vtheta).
\end{align*}
For a fixed partition \((\calS_1, \calS_2)\), the objective function is differentiable with respect to \(\vtheta_{t_1}\) and \(\vtheta_{t_2}\) within their respective domains. Under this condition, we can compute the gradient \(\nabla V\) and the Hessian matrix \(\nabla^2 V\).

\paragraph{Gradient Calculation.}
For almost all \(\vtheta\) (where no data point lies exactly on the boundary \(\tilde{\vx}_j^T \vtheta_{t_1} = \tilde{\vx}_j^T \vtheta_{t_2}\)), the model function \(h(\vx_j, \vtheta)\) is differentiable at each data point. Specifically, for \(h(\vx_j, \vtheta) = \max \left( \tilde{\vx}_j^T \vtheta_{t_1}, \tilde{\vx}_j^T \vtheta_{t_2} \right)\),
\[
\nabla h(\vx_j, \vtheta) = \begin{cases}
    \begin{pmatrix} \tilde{\vx}_j \\ \mathbf{0} \end{pmatrix} & \text{if } \tilde{\vx}_j^T \vtheta_{t_1} > \tilde{\vx}_j^T \vtheta_{t_2}, \\
    \begin{pmatrix} \mathbf{0} \\ \tilde{\vx}_j \end{pmatrix} & \text{if } \tilde{\vx}_j^T \vtheta_{t_1} < \tilde{\vx}_j^T \vtheta_{t_2}.
\end{cases}
\]
Thus, the gradient of the objective function is:
\[
\nabla V(\vtheta) = \sum_{j=1}^{N} (y_j - h(\vx_j, \vtheta)) (-\nabla h(\vx_j, \vtheta)) = - \sum_{j=1}^{N} (y_j - h(\vx_j, \vtheta)) \nabla h(\vx_j, \vtheta).
\]
Substituting \(\nabla h(\vx_j, \vtheta)\) and summing over the respective partitions:
\[
\nabla V(\vtheta) = - \begin{pmatrix}
\sum_{(\vx_j, y_j) \in \calS_1(\vtheta)} \tilde{\vx}_j (y_j - \tilde{\vx}_j^T \vtheta_{t_1}) \\
\sum_{(\vx_j, y_j) \in \calS_2(\vtheta)} \tilde{\vx}_j (y_j - \tilde{\vx}_j^T \vtheta_{t_2})
\end{pmatrix}.
\]

\paragraph{Hessian Matrix Calculation.}
For a generic nonlinear least squares problem, the Hessian matrix is given by:
\[
\nabla^2 V(\vtheta) = \sum_j [(\nabla h(\vx_j, \vtheta))(\nabla h(\vx_j, \vtheta))^T - (y_j - h(\vx_j, \vtheta))\nabla^2 h(\vx_j, \vtheta)].
\]
However, in our specific case, because \(h(\vx_j, \vtheta)\) is locally linear within each fixed partition (i.e., \(h(\vx_j, \vtheta)\) is either \(\tilde{\vx}_j^T \vtheta_{t_1}\) or \(\tilde{\vx}_j^T \vtheta_{t_2}\)), its second derivative \(\nabla^2 h(\vx_j, \vtheta)\) is zero. This causes the second term of the Hessian to vanish, meaning the expression \(\sum_j (\nabla h_j)(\nabla h_j)^T\), often used as the Gauss-Newton approximation, becomes the exact Hessian.

Therefore, the exact Hessian matrix is:
\[
\nabla^2 V(\vtheta) = \begin{pmatrix}
\sum_{(\vx_j, y_j) \in \calS_1(\vtheta)} \tilde{\vx}_j \tilde{\vx}_j^T & \mathbf{0} \\
\mathbf{0} & \sum_{(\vx_j, y_j) \in \calS_2(\vtheta)} \tilde{\vx}_j \tilde{\vx}_j^T
\end{pmatrix},
\]
where \(\mathbf{0}\) is a \((d+1) \times (d+1)\) zero matrix.

\paragraph{Newton Update and OLS Equivalence.}
The Newton update at iteration \(k\) is generally given by \(\vtheta^{(k+1)} = \vtheta^{(k)} - \mu [\nabla^2 V(\vtheta^{(k)})]^{-1} \nabla V(\vtheta^{(k)})\).
Let \(\vtheta^{(k)} = [\vtheta_{t_1}^{(k)T}, \vtheta_{t_2}^{(k)T}]^T\). We examine the updates for \(\vtheta_{t_1}\) and \(\vtheta_{t_2}\) separately.

For \(\vtheta_{t_1}\), the update is:
\begin{align*}
\vtheta_{t_1}^{(k+1)} &= \vtheta_{t_1}^{(k)} - \mu \left( \sum_{(\vx_j, y_j) \in \calS_1(\vtheta^{(k)})} \tilde{\vx}_j \tilde{\vx}_j^T \right)^{-1} \left( - \sum_{(\vx_j, y_j) \in \calS_1(\vtheta^{(k)})} \tilde{\vx}_j (y_j - \tilde{\vx}_j^T \vtheta_{t_1}^{(k)}) \right) \\
&= \vtheta_{t_1}^{(k)} + \mu \left( \sum_{(\vx_j, y_j) \in \calS_1^{(k)}} \tilde{\vx}_j \tilde{\vx}_j^T \right)^{-1} \left( \sum_{(\vx_j, y_j) \in \calS_1^{(k)}} \tilde{\vx}_j y_j - \sum_{(\vx_j, y_j) \in \calS_1^{(k)}} \tilde{\vx}_j \tilde{\vx}_j^T \vtheta_{t_1}^{(k)} \right) \\
&= \vtheta_{t_1}^{(k)} + \mu \left[ \left( \sum_{(\vx_j, y_j) \in \calS_1^{(k)}} \tilde{\vx}_j \tilde{\vx}_j^T \right)^{-1} \left( \sum_{(\vx_j, y_j) \in \calS_1^{(k)}} \tilde{\vx}_j y_j \right) - \vtheta_{t_1}^{(k)} \right].
\end{align*}
Let \(\vtheta_{\text{OLS},1}^{(k)} = \left( \sum_{(\vx_j, y_j) \in \calS_1^{(k)}} \tilde{\vx}_j \tilde{\vx}_j^T \right)^{-1} \left( \sum_{(\vx_j, y_j) \in \calS_1^{(k)}} \tilde{\vx}_j y_j \right)\) be the OLS solution for the data subset \(\calS_1^{(k)}\). Then the Newton update can be written as:
\[
\vtheta_{t_1}^{(k+1)} = \vtheta_{t_1}^{(k)} + \mu (\vtheta_{\text{OLS},1}^{(k)} - \vtheta_{t_1}^{(k)}).
\]
A similar derivation holds for \(\vtheta_2\):
\[
\vtheta_{t_2}^{(k+1)} = \vtheta_{t_2}^{(k)} + \mu (\vtheta_{\text{OLS},2}^{(k)} - \vtheta_{t_2}^{(k)}),
\]
where \(\vtheta_{\text{OLS},2}^{(k)}\) is the OLS solution for the data subset \(\calS_2^{(k)}\).

In our algorithm, the step size \(\mu\) is a tunable parameter. When a unit step size is employed (i.e., \(\mu=1\)), the update formulae simplify to:
\[
\begin{aligned}
    \vtheta_{t_1}^{(k+1)} &= \vtheta_{\text{OLS},1}^{(k)}, \\
    \vtheta_{t_2}^{(k+1)} &= \vtheta_{\text{OLS},2}^{(k)}.
\end{aligned}
\]
This demonstrates that when a unit step size (\(\mu=1\)) is employed, the alternating procedure of model fitting (performing OLS on the current partitions) and partitioning (reassigning data points based on the new model parameters), as presented in Algorithm \ref{alg:split_en}, is precisely equivalent to executing a unit-step Newton method to minimize the original nonlinear objective function \(V(\vtheta)\). This choice of \(\mu=1\) is common for this type of problem due to its direct connection to the optimal OLS solutions within fixed partitions.

\section{Details on Regularization and Fallback Strategies}
\label{sec:appendix_details}

\paragraph{Incorporating Ridge Regression (L2 Regularization).}
To enhance model robustness and effectively handle situations like multicollinearity or high-dimensional data, we introduce optional ridge regression (L2 regularization) in all OLS fitting steps within this algorithm. Ridge regression modifies the OLS objective by adding a penalty on the L2-norm of the model coefficients, i.e., minimizing:
\[ \min_{\vtheta} \frac{1}{2} \sum_{j=1}^{N_k} \left( y_j - \tilde{\vx}_j^T \vtheta \right)^2 + \frac{\alpha}{2} \sum_{i=0}^{d-1} \theta_i^2 \]
where \(\alpha \ge 0\) is the regularization strength parameter. Notably, the bias term \(\theta_d\) (corresponding to the constant 1 in the augmented feature vector) is typically not regularized. Given a design matrix \(X\) (already augmented with a column of ones for the bias) and a target vector \(\mathbf{y}\), the ridge regression solution with regularization parameter \(\alpha \ge 0\) is given by:
\[ \vtheta = (X^T X + \alpha I_0)^{-1} X^T \mathbf{y} \]
where \(I_0\) is an identity matrix with its last diagonal element (corresponding to the bias term) set to zero, ensuring the bias is not regularized. This regularization mechanism effectively stabilizes parameter estimation, especially when the training data matrix \(X^T X\) might be near-singular, which is particularly important in high-dimensional feature spaces or with limited sample sizes. The regularization strength \(\alpha\) is tuned as a hyperparameter during model training.

\paragraph{Fallback Strategy for Non-convergent Nodes.}
If the iterative optimization does not converge within $T_{\max}$ iterations, we fall back to a simple median split. A feature dimension $k$ is chosen at random, and the data are split at the median $m_k$ of this dimension.
This guarantees progress in tree growth and often decomposes a hard global problem
into smaller subproblems that converge quickly under our Newton updates.

\section{Parameter Initialization Strategy and Detailed Algorithms}
\label{sec:appendix_algorithms}
This section provides the detailed pseudocode for the optimal node-splitting and tree-building procedures.


\subsection{Parameter Initialization Strategy}
\label{appendix:initialization}

The first step of Algorithms~\ref{alg:split_min} and~\ref{alg:split_en} requires the initialization of the parameter vectors $\theta_1$ and $\theta_2$. A reasonable and stable initialization is important for the convergence of the subsequent alternating optimization. We employ a simple data-driven heuristic combined with a small fallback procedure.

The initialization procedure is as follows:

\begin{enumerate}
    \item \textbf{Heuristic initial partition.}  
    At the current node, we identify the feature dimension with the \emph{largest range} in the local data. We then use the \emph{median} value of this feature as a pivot to split the samples into two subsets. This aims to create an initial partition along the direction of greatest variation, which typically yields a more balanced and meaningful starting point than a fully random split.
    \item \textbf{Partition-based initial (ridge) regression.}  
    Given these two subsets, we independently solve a (ridge-regularized) least-squares problem on each subset to obtain the initial parameter vectors $\theta_1$ and $\theta_2$. In our implementation we require that each subset contains at least two samples. Otherwise we treat the split as too small for a reliable local regression.
    \item \textbf{Fallback for degenerate or very small partitions.}  
    In edge cases, such as when the node contains very few samples, the features are nearly constant, or one of the subsets after the median split is too small, we fall back to a simple global initialization. Specifically, we first compute a \emph{global} ridge regression solution $\theta_{\text{global}}$ using all samples at the node (or a constant predictor based on the mean response if the design matrix is degenerate). We then construct two initial parameter vectors $\theta_1$ and $\theta_2$ by adding small independent random perturbations to $\theta_{\text{global}}$, which ensures a non-trivial starting point even under poor initial splits.
    \item \textbf{Ensuring parameter diversity.}  
    Finally, we check whether the resulting $\theta_1$ and $\theta_2$ are nearly identical. If so, we add another small perturbation (and, in a rare corner case, a tiny offset to the first coefficient) to break the symmetry, so that the subsequent alternating optimization can proceed effectively.
\end{enumerate}

This initialization strategy provides a data-dependent yet lightweight starting point for the node-wise optimization, and makes the overall splitting procedure more robust to degenerate partitions and small-sample regimes.

\subsection{Detailed Algorithms}

\begin{algorithm}[H]
\caption{Find Optimal Split (Min Variant)}
\label{alg:split_min}
\begin{algorithmic}[1]
\Require Dataset $\mathcal{S}=\{(\vx_j,y_j)\}_{j=1}^N$, Max iterations $T_{\max}$, Step size $\mu$, Tolerance $\epsilon$
\Ensure Optimal parameters $\vtheta_1^*, \vtheta_2^*$
\State Initialize \(\vtheta_1^{(0)}, \vtheta_2^{(0)}\) and partitions \(\mathcal{S}_1, \mathcal{S}_2\).
\For{$t=1,\dots,T_{\max}$}
    \State \textit{// Model Fitting Step}
    \State Compute $\vtheta_{\text{OLS},1}^{(k)}$ and $\vtheta_{\text{OLS},2}^{(k)}$ \textbf{(with optional ridge regularization)}
    \State $\vtheta_1^{(k+1)} \gets \vtheta_1^{(k)} + \mu \big(\vtheta_{\text{OLS},1}^{(k)} - \vtheta_1^{(k)}\big) $ \Comment{Newton update with step size \(\mu\)}
    \State $\vtheta_2^{(k+1)} \gets \vtheta_2^{(k)} + \mu \big(\vtheta_{\text{OLS},2}^{(k)} - \vtheta_2^{(k)}\big)$ \Comment{Newton update with step size \(\mu\)}
    \State \textit{// Partitioning Step (min: assign to the smaller branch)}
    \State $\mathcal{S}_{1,\text{new}}\gets\emptyset,\quad \mathcal{S}_{2,\text{new}}\gets\emptyset$
    \ForAll{$(\vx_j,y_j)\in\mathcal{S}$}
        \If{$\tilde{\vx}_j^T \vtheta_1 \le \tilde{\vx}_j^T \vtheta_2$}
            \State $\mathcal{S}_{1,\text{new}} \gets \mathcal{S}_{1,\text{new}} \cup \{(\vx_j,y_j)\}$
        \Else
            \State $\mathcal{S}_{2,\text{new}} \gets \mathcal{S}_{2,\text{new}} \cup \{(\vx_j,y_j)\}$
        \EndIf
    \EndFor
    
    \If{$\|\vtheta_1^{(k+1)}-\vtheta_1^{(k)}\|+\|\vtheta_2^{(k+1)}-\vtheta_2^{(k)}\|<\epsilon$}\Comment{Check parameter changes}
        \State \textbf{break} \Comment{Convergence}
    \EndIf
    \State $\mathcal{S}_1 \gets \mathcal{S}_{1,\text{new}},\quad \mathcal{S}_2 \gets \mathcal{S}_{2,\text{new}}$
\EndFor
\State \textbf{return} $\vtheta_1,\vtheta_2$
\end{algorithmic}
\end{algorithm}

\begin{algorithm}[H]
\caption{Find Optimal Split (Max Variant)}
\label{alg:split_en}
\begin{algorithmic}[1]
\Require Dataset \(\calS = \{(\vx_j, y_j)\}_{j=1}^N\), Max iterations \(T_{max}\), Step size \(\mu\)
\Ensure Optimal parameters \(\vtheta_1^*, \vtheta_2^*\)
\State Initialize \(\vtheta_1^{(0)}, \vtheta_2^{(0)}\) and partitions \(\mathcal{S}_1, \mathcal{S}_2\).
\For{$t = 1, \dots, T_{max}$}
    \State // \textit{Model Fitting Step }
    \State Compute \(\vtheta_{\text{OLS},1}^{(k)}\) and  \(\vtheta_{\text{OLS},2}^{(k)}\) (\textbf{with optional ridge regularization})
    \State $ \vtheta_1^{(k+1)} \gets \vtheta_1^{(k)} + \mu (\vtheta_{\text{OLS},1}^{(k)} - \vtheta_1^{(k)})$ \Comment{Newton update with step size \(\mu\)}
    \State $ \vtheta_2^{(k+1)} \gets \vtheta_2^{(k)} + \mu (\vtheta_{\text{OLS},2}^{(k)} - \vtheta_2^{(k)})$ \Comment{Newton update with step size \(\mu\)}
    \State // \textit{Partitioning Step (max: assign to the larger branch)}
    \State $\calS_{1, \text{new}} \gets \emptyset, \quad \calS_{2, \text{new}} \gets \emptyset$
    \ForAll{$(\vx_j, y_j) \in \calS$}
        \If{$\tilde{\vx}_j^T\vtheta_1 \ge \tilde{\vx}_j^T\vtheta_2$}
            \State $\calS_{1, \text{new}} \gets \calS_{1, \text{new}} \cup \{(\vx_j, y_j)\}$
        \Else
            \State $\calS_{2, \text{new}} \gets \calS_{2, \text{new}} \cup \{(\vx_j, y_j)\}$
        \EndIf
    \EndFor
    \If{$||\vtheta_1^{(k+1)}-\vtheta_1^{(k)}||+||\vtheta_2^{(k+1)}-\vtheta_2^{(k)}||<\epsilon$} \Comment{Check parameter changes}
        \State \textbf{break} \Comment{Convergence}
    \EndIf
    \State $\calS_1 \gets \calS_{1, \text{new}}, \quad \calS_2 \gets \calS_{2, \text{new}}$
\EndFor

\State \textbf{return} $\vtheta_1, \vtheta_2$
\end{algorithmic}
\end{algorithm}

\begin{algorithm}[H]
\caption{Find Optimal Split (Min-or-Max by RMSE)}
\label{alg:select_minmax}
\begin{algorithmic}[1]
\Require Dataset $\mathcal{S}=\{(\vx_j,y_j)\}_{j=1}^N$, $T_{\max}$, Step size $\mu$, Tolerance $\epsilon$
\Ensure Best parameters $(\vtheta_1^*,\vtheta_2^*)$, model type $\in\{\text{min},\text{max}\}$
\State $(\vtheta^{\max}_1,\vtheta^{\max}_2) \gets$ run Algorithm \ref{alg:split_en} (max variant) on $\mathcal{S}$ with $(T_{\max},\mu,\epsilon)$
\State $(\vtheta^{\min}_1,\vtheta^{\min}_2) \gets$ run Algorithm \ref{alg:split_min} (min variant) on $\mathcal{S}$ with $(T_{\max},\mu,\epsilon)$
\State Compute $\mathrm{RMSE}_{\max} \gets \sqrt{\frac{1}{N}\sum_{j=1}^N \big(y_j - \max(\tilde{\vx}_j^T\vtheta^{\max}_1,\ \tilde{\vx}_j^T\vtheta^{\max}_2)\big)^2}$
\State Compute $\mathrm{RMSE}_{\min} \gets \sqrt{\frac{1}{N}\sum_{j=1}^N \big(y_j - \min(\tilde{\vx}_j^T\vtheta^{\min}_1,\ \tilde{\vx}_j^T\vtheta^{\min}_2)\big)^2}$
\If{$\mathrm{RMSE}_{\min} < \mathrm{RMSE}_{\max}$}
    \State \textbf{return} $\vtheta^{\min}_1,\vtheta^{\min}_2$
\Else
    \State \textbf{return} $\vtheta^{\max}_1,\vtheta^{\max}_2$
\EndIf
\end{algorithmic}
\end{algorithm}

\begin{algorithm}[H]
\caption{Build Oblique Regression Tree }
\label{alg:build_en}
\begin{algorithmic}[1]
\Require Dataset \(\calS\), current depth \(d\), hyperparameters (max depth \(D_{max}\), min samples \(N_{min}\), RMSE threshold \(\tau_{RMSE}\), step size \(\mu\))
\Ensure Root node of a (sub)tree

\State // \textit{Check stopping criteria for creating a leaf}
\State Fit a single linear model for the current node: \(\vtheta_{leaf} \gets \argmin_{\vtheta} \sum_{(\vx,y) \in \calS} (y - \tilde{\vx}^T\vtheta)^2\) \Comment{Solve via OLS (\textbf{with optional ridge regularization})}
\State Calculate RMSE: \( \text{RMSE} = \sqrt{\frac{1}{|\calS|} \sum_{(\vx,y) \in \calS} (y - \tilde{\vx}^T\vtheta_{leaf})^2} \)

\If{\(d \ge D_{max}\) or \(|\calS| < N_{min}\) or \(\text{RMSE} < \tau_{RMSE}\)} \Comment{Check depth, samples, or RMSE}
    \State Create a leaf node.
    \State Store \(\vtheta_{leaf}\) in the node and \textbf{return} the node.
\EndIf
\State

\State // \textit{Proceed with splitting the node}
\State \(\vtheta_1^*, \vtheta_2^* \gets \texttt{FindOptimalSplit}(\calS, \mu)\)
\State \(\calS_L \gets \{ (\vx,y) \in \calS \mid \tilde{\vx}^T\vtheta_1^* \ge \tilde{\vx}^T\vtheta_2^* \}\)
\State \(\calS_R \gets \calS \setminus \calS_L\)
\State

\If{\(|\calS_L| < N_{min}\) or \(|\calS_R| < N_{min}\)} \Comment{Ineffective split, create leaf}
    \State Create a leaf node.
    \State Store the pre-computed \(\vtheta_{leaf}\) from line 3 and \textbf{return} the node.
\EndIf
\State

\State Create an internal node, store split rule \((\vtheta_1^*, \vtheta_2^*)\)
\State \(N.\text{left\_child} \gets \texttt{BuildTree}(\calS_L, d+1, D_{max}, N_{min}, \tau_{RMSE}, \mu)\)
\State \(N.\text{right\_child} \gets \texttt{BuildTree}(\calS_R, d+1, D_{max}, N_{min}, \tau_{RMSE}, \mu)\)
\State \textbf{return} the internal node
\end{algorithmic}
\end{algorithm}

\section{Time Complexity Analysis}
\label{sec:appendix_complexity_en}

\subsection{Algorithm 1: Find Optimal Split}
\begin{itemize}
    \item \textbf{Overview:} Algorithm 1 is an iterative algorithm designed to find the optimal splitting parameters \(\vtheta_1\) and \(\vtheta_2\). It alternates between model fitting and partition updating, running for at most \(T_{\text{max}}\) iterations.
    \item \textbf{Per Iteration:}
    \begin{itemize}
        \item \textbf{Model Fitting:} Solves OLS for two partitions \(\calS_1\) and \(\calS_2\). These OLS solutions can incorporate ridge regression. The complexity of each OLS problem is \(O(N_k \cdot d^2)\), where \(N_k\) is the number of samples in the partition and \(d\) is the feature dimension. Since \(N = N_1 + N_2\), the total complexity is \(O(N \cdot d^2)\).
        \item \textbf{Partition Updating:} For each of the \(N\) samples, computes two linear functions \(\tilde{\vx}_j^T \vtheta_1\) and \(\tilde{\vx}_j^T \vtheta_2\) (each \(O(d)\)) and reassigns partitions, yielding a complexity of \(O(N \cdot d)\).
        \item \textbf{Total Per Iteration:} \(O(N \cdot d^2) + O(N \cdot d) = O(N \cdot d^2)\).
    \end{itemize}
    \item \textbf{Total Iterations:} At most \(T_{\text{max}}\), a user-defined maximum.
    \item \textbf{Total Time Complexity:} \(O(T_{\text{max}} \cdot N \cdot d^2)\).
    \item \textbf{Note:} In practice, convergence may occur before \(T_{\text{max}}\), but the worst-case complexity assumes the full number of iterations.
\end{itemize}

\subsection{Algorithm 2: Build Oblique Regression Tree}
\begin{itemize}
    \item \textbf{Overview:} Algorithm 2 recursively constructs an oblique regression tree with a maximum depth \(D_{\text{max}}\) and a minimum of \(N_{\text{min}}\) samples per leaf. Each internal node invokes Algorithm 1, while leaf nodes fit a final linear model.
    \item \textbf{Internal Nodes:}
    \begin{itemize}
        \item \textbf{Splitting:} Each internal node calls Algorithm 1, with complexity \(O(T_{\text{max}} \cdot N_{\text{node}} \cdot d^2)\), where \(N_{\text{node}}\) is the number of samples at the node.
        \item \textbf{Worst-Case Assumption:} The tree is balanced, with each split dividing the data approximately in half.
        \item \textbf{Level Analysis:} At level \(k\), there are \(2^k\) nodes, each with approximately \(N / 2^k\) samples. The total complexity per level is:
        \[
        2^k \cdot O(T_{\text{max}} \cdot (N / 2^k) \cdot d^2) = O(T_{\text{max}} \cdot N \cdot d^2).
        \]
        \item \textbf{Total Levels:} With \(D_{\text{max}}\) levels, the complexity is \(O(D_{\text{max}} \cdot T_{\text{max}} \cdot N \cdot d^2)\).
    \end{itemize}
    \item \textbf{Leaf Nodes:}
    \begin{itemize}
        \item \textbf{Fitting:} Each leaf fits an OLS model, which can also incorporate ridge regression, with complexity \(O(N_{\text{leaf}} \cdot d^2)\), where \(N_{\text{leaf}}\) is the number of samples in the leaf.
        \item \textbf{Total Leaves:} Up to \(O(2^{D_{\text{max}}})\) leaves, but the total sample count across all leaves is \(N\). Thus, the total complexity is \(O(N \cdot d^2)\).
    \end{itemize}
    \item \textbf{Total Time Complexity:} \(O(D_{\text{max}} \cdot T_{\text{max}} \cdot N \cdot d^2) + O(N \cdot d^2) = O(D_{\text{max}} \cdot T_{\text{max}} \cdot N \cdot d^2)\).
    \item \textbf{Note:} In practical settings, \(T_{\text{max}}\) and \(D_{\text{max}}\) are often fixed constants, simplifying the complexity to \(O(N \cdot d^2)\). In theory, if \(D_{\text{max}}\) scales with \(N\), the complexity increases accordingly.
\end{itemize}

\section{Ablation Study on Step Size: Stability, Efficiency, and Fallback}
\label{apendix_fallback}

The step size $\mu$ (damping factor) is a key hyperparameter in our node-splitting algorithm. It controls how aggressively each Newton update moves toward the local OLS solution and therefore directly affects stability, efficiency, and final performance. To examine these effects empirically, we performed an ablation over $\mu$ on four datasets with different characteristics: the oscillatory \texttt{sinc} function, the smoother \texttt{twisted\_sigmoid} function, and the real-world \texttt{Kinematics} and \texttt{D-Ailerons} datasets. For the two real-world datasets, we also evaluated an automatic line-search step-size strategy (denoted by \texttt{auto}). Results are reported in Tables~\ref{tab:ablation_sinc}--\ref{tab:ablation_delta_ailerons}. 

\paragraph{1. Damping and the fallback mechanism.}
On the synthetic datasets, smaller step sizes led to very few fallbacks, while more aggressive steps could trigger the fallback mechanism more often. For \texttt{sinc}, the fallback rate was essentially negligible at $\mu=0.01$ (\textbf{0.39\%}), modest at $\mu=0.05$ (\textbf{4.74\%}), and rose to \textbf{32.86\%}, \textbf{60.94\%}, and \textbf{29.55\%} at $\mu=0.10$, $0.50$, and $1.00$, respectively. A similar but milder trend appeared with \texttt{twisted\_sigmoid}: the fallback rate increased from \textbf{0.67\%} at $\mu=0.01$ to \textbf{34.67\%} at $\mu=0.50$, before dropping to \textbf{19.33\%} at $\mu=1.00$. On the real-world datasets, fallback rates were moderate and relatively stable across $\mu$ for \texttt{Kinematics} (between \textbf{6.87\%} and \textbf{10.82\%}, with \textbf{9.51\%} for \texttt{auto}), and were zero on \texttt{D-Ailerons} for all settings, including \texttt{auto}. Overall, these results indicated that damping does help keep the fallback mechanism rare on more challenging synthetic objectives, while some benign regression problems remain stable even for relatively aggressive step sizes.

\paragraph{2. Step size, iterations, and training time.}
Larger step sizes did not automatically translate into faster convergence in wall-clock time. On the \texttt{Kinematics} datasets, small or moderately damped step sizes typically led to stable and fairly regular Newton dynamics, so the node-wise optimisation terminated after only a few iterations on most splits. In contrast, aggressive step sizes could induce oscillatory or irregular behavior at some nodes: the objective value might fail to decrease monotonically, and the parameters and partitions could keep moving without settling down quickly. Such oscillatory runs required many more iterations before the stopping criterion was met, which substantially increased both the average iteration count and the total training time. As a result, the dependence of fit time on~\(\mu\) was non-monotone: large steps could be very fast when they converged cleanly, but they might be markedly less efficient once the cost of these unstable cases was taken into account.

\paragraph{3. Effect on predictive accuracy.}
The impact of $\mu$ on RMSE was dataset dependent but generally modest, with a few notable cases. On \texttt{sinc}, the smallest step size $\mu=0.01$ achieved the lowest error (\textbf{0.02796}), while $\mu=0.10$ was clearly suboptimal (\textbf{0.05310}); larger steps ($\mu=0.50$ and $1.00$) recovered competitive errors but at the cost of higher fallback rates and iteration counts. On \texttt{twisted\_sigmoid}, all choices of $\mu$ yielded similar performance, with a slight advantage around $\mu=0.50$ and $1.00$ (RMSE $\approx \textbf{0.0258}$--\textbf{0.0283}). On \texttt{Kinematics}, performance improved as $\mu$ increased up to $0.50$ (RMSE decreasing from \textbf{0.13055} at $\mu=0.01$ to \textbf{0.10266} at $\mu=0.50$), and the \texttt{auto} strategy yielded the lowest error (\textbf{0.10185}) while using far fewer iterations than $\mu=0.50$. On \texttt{D-Ailerons}, all step-size choices, including \texttt{auto}, achieved essentially identical RMSE (about \textbf{$1.68\times10^{-4}$}), indicating that accuracy was largely insensitive to the damping factor on this dataset.

\paragraph{4. Overall takeaway.}
Taken together, these experiments suggest that the damping factor $\mu$ mainly controls a trade-off between stability, efficiency, and the need for fallbacks, with only moderate influence on accuracy in most settings. Small or moderate fixed step sizes tended to keep fallbacks rare and iterations moderate, especially on synthetic problems such as \texttt{sinc} and \texttt{twisted\_sigmoid}. Extremely aggressive fixed steps could lead to increased fallback usage and substantially more iterations, particularly on more structured or high-dimensional data such as \texttt{Kinematics}. The automatic line-search step size (\texttt{auto}) consistently provided a robust choice on the real-world datasets: it attained strong accuracy, moderate iteration counts, and stable fallback behavior without requiring manual tuning of~$\mu$.

\begin{table}[htbp]
\scriptsize
\centering
\caption{Ablation study on the \texttt{sinc} dataset. Metrics are averaged over 10 runs.}
\label{tab:ablation_sinc}
\begin{tabular}{@{}cccccccc@{}} 
\toprule
Step size (\(\mu\)) & RMSE \(\downarrow\) & Leaves & Avg. Iters & Fit Time (s) & Avg. Fallbacks & Avg. Splits & Fallback Rate (\%) \\ %
\midrule
0.01 & 0.02796 & 26.7 & 2.56 & 0.166 & 0.1 & 25.7 & 0.39\% \\
0.05 & 0.02885 & 24.2 & 4.46 & 0.105 & 1.1 & 23.2 & 4.74\% \\
0.10 & 0.05310 & 22.3 & 5.97 & 0.107 & 7.0 & 21.3 & 32.86\% \\
0.50 & 0.03069 & 26.6 & 4.30 & 0.112 & 15.6 & 25.6 & 60.94\% \\
1.00 & 0.02926 & 23.0 & 6.65 & 0.141 & 6.5 & 22.0 & 29.55\% \\
\bottomrule
\end{tabular}%
\end{table}

\begin{table}[htbp]
\scriptsize
\centering
\caption{Ablation study on the \texttt{twisted\_sigmoid} dataset. Metrics are averaged over 10 runs.}
\label{tab:ablation_twisted_sigmoid}
\begin{tabular}{@{}cccccccc@{}} %
\toprule
Step size (\(\mu\)) & RMSE \(\downarrow\) & Leaves & Avg. Iters & Fit Time (s) & Avg. Fallbacks & Avg. Splits & Fallback Rate (\%) \\ %
\midrule
0.01 & 0.02653 & 16.0 & 3.17 & 0.045 & 0.1 & 15.0 & 0.67\% \\
0.05 & 0.03011 & 16.0 & 4.44 & 0.083 & 0.6 & 15.0 & 4.00\% \\
0.10 & 0.02831 & 15.6 & 5.28 & 0.063 & 3.0 & 14.6 & 20.55\% \\
0.50 & 0.02578 & 16.0 & 6.13 & 0.107 & 5.2 & 15.0 & 34.67\% \\
1.00 & 0.02577 & 16.0 & 7.54 & 0.121 & 2.9 & 15.0 & 19.33\% \\
\bottomrule
\end{tabular}%
\end{table}

\begin{table}[htbp]
\scriptsize
\centering
\caption{Ablation study on the \texttt{Kinematics} dataset. Metrics are averaged over five runs.}
\label{tab:ablation_kinematics}
\begin{tabular}{@{}cccccccc@{}} 
\toprule
Step size (\(\mu\)) & RMSE \(\downarrow\) & Leaves & Avg. Iters & Fit Time (s) & Avg. Fallbacks & Avg. Splits & Fallback Rate (\%) \\ %
\midrule
\textcolor{black}{0.01} & 0.13055 & 57.0 & \textcolor{black}{10.89} & \textcolor{black}{0.49} & 4.6 & 56.0 & 8.21\% \\
0.05 & 0.11367 & 59.2 & 19.53 & 0.62 & 4.0 & 58.2 & 6.87\% \\
0.10 & 0.10631 & 57.2 & 26.65 & 0.72 & 6.0 & 56.2 & 10.68\% \\
0.50 & 0.10266 & 56.6 & 59.77 & 1.53 & 5.8 & 57.2 & 10.14\% \\
\textcolor{black}{1.00} & 0.10702 & 48.6 & \textcolor{black}{83.43} & \textcolor{black}{1.88} & 5.8 & 53.6 & 10.82\% \\
\textcolor{black}{auto} & 0.10185 & 48.6 & \textcolor{black}{15.76} & \textcolor{black}{0.79} & 5.4 & 56.8 & 9.51\% \\
\bottomrule
\end{tabular}%
\end{table}

\begin{table}[htbp]
\scriptsize
\centering
\caption{Ablation study on the \texttt{D-Ailerons} dataset. Metrics are averaged over five runs. The reported RMSE values are scaled by \(\times 10^{-4}\).}
\label{tab:ablation_delta_ailerons} 
\begin{tabular}{@{}cccccccc@{}} 
\toprule
Step size (\(\mu\)) & RMSE \(\downarrow\) & Leaves & Avg. Iters & Fit Time (s) & Avg. Fallbacks & Avg. Splits & Fallback Rate (\%) \\ 
\midrule
0.01 & 1.68 & 8 & 1.57 & 0.105 & 0 & 7 & 0\% \\
0.05 & 1.68 & 8 & 32.43 & 0.178 & 0 & 7 & 0\% \\
0.10 & 1.68 & 8 & 30.09 & 0.179 & 0 & 7 & 0\% \\
0.50 & 1.68 & 8 & 57.69 & 0.345 & 0& 7 & 0\% \\
1.00 & 1.68 & 8 & 92.60 & 0.525 & 0 & 7 & 0\% \\
auto & 1.68 & 8 & 6.71 & 0.183 & 0 & 7 & 0\% \\
\bottomrule
\end{tabular}%
\end{table}

\clearpage
\newpage

\section{Synthetic Data Description}
\label{sec:appendix_synthetic_functions}

This appendix provides the detailed mathematical definitions of the 3D synthetic functions, the rationale for the selection of these synthetic functions and noise levels, and additional visualizations of our method's 2D and 3D function approximation capabilities (Figures \ref{fig:model_comparison} and \ref{fig:3d_model_comparison}).

For all 3D functions, the inputs \(x_1, x_2\) range from \([-3, 3]\).

\paragraph{3D Oscillatory Surface Functions:}
\begin{itemize}
    \item \( f_1(x_1,x_2)=\left( 0.5\, x_{1}^{3} - 2\, x_{1} x_{2}^{2} + 3 \sin\left( 4 x_{1} \right) \cos\left( 2 x_{2} \right) + 0.1\, e^{-\left( x_{1}^{2} + x_{2}^{2} \right)} \right) \)
    \item \( f_2(x_1,x_2)=\sin(3x_{1}) + \cos(2x_{2}) + 0.5 \, \sin(5x_{1}) \cos(4x_{2}) \)
    \item \( f_3(x_1, x_2) = \frac{x_1^2 - x_2^2}{0.5 + r^2} + \sin(r) e^{-r} \), where \(r = \sqrt{x_1^2 + x_2^2} + 10^{-6}\).
    \item \( f_4(x_1, x_2) = 2 \cdot \exp\!\left(-\frac{(x_1 - 1)^2 + (x_2 - 1)^2}{0.5}\right) - 3 \cdot \exp\!\left(-\frac{(x_1 + 1)^2 + (x_2 + 1.5)^2}{0.3}\right) + 0.5 \, x_1 \)
\end{itemize}

\textbf{Rationale for Selection of Synthetic Functions:}
 These functions exhibited diverse nonlinear characteristics, including high-frequency oscillations (\texttt{sinc}, 3D oscillatory functions), sharp changes, and inflection points (\texttt{twisted\_sigmoid}). Successfully approximating such a wide range of nonlinearities, as demonstrated by our empirical results, provides strong empirical support for the universal approximation capability of our piecewise linear HRT model.
 The \texttt{sinc} function, with its numerous local extrema and oscillations, presents a particularly challenging landscape for optimization algorithms. It helps to expose potential instabilities (e.g., non-monotonic convergence, limit cycles) in the splitting mechanism, thereby highlighting the necessity and effectiveness of damped Newton updates for robust convergence. The \texttt{twisted\_sigmoid} function, while non-trivial, is smoother and less prone to pathological behaviors, allowing us to demonstrate the efficiency of our Newton updates in a more stable environment.

\textbf{Rationale for Noise Levels:}
Introducing independent and identically distributed zero-mean Gaussian noise (with standard deviations of \(0.025\) for 2D tasks and \(0.05\) for 3D tasks) simulated inherent uncertainty. This ensured that our model's performance was evaluated not just on ideal, noiseless functions, but on data that more closely mimic practical applications.
The selected noise levels were not set so high that the primary challenge for the models remained the accurate approximation of the underlying functional relationship, rather than primarily having to distinguish signal from overwhelming noise. This allowed us to clearly observe and compare the models' capacities to fit the true, underlying nonlinear structures.

\begin{figure*}[htbp]
    \centering
    \begin{subfigure}{0.48\textwidth}
        \centering
        \includegraphics[width=\textwidth]{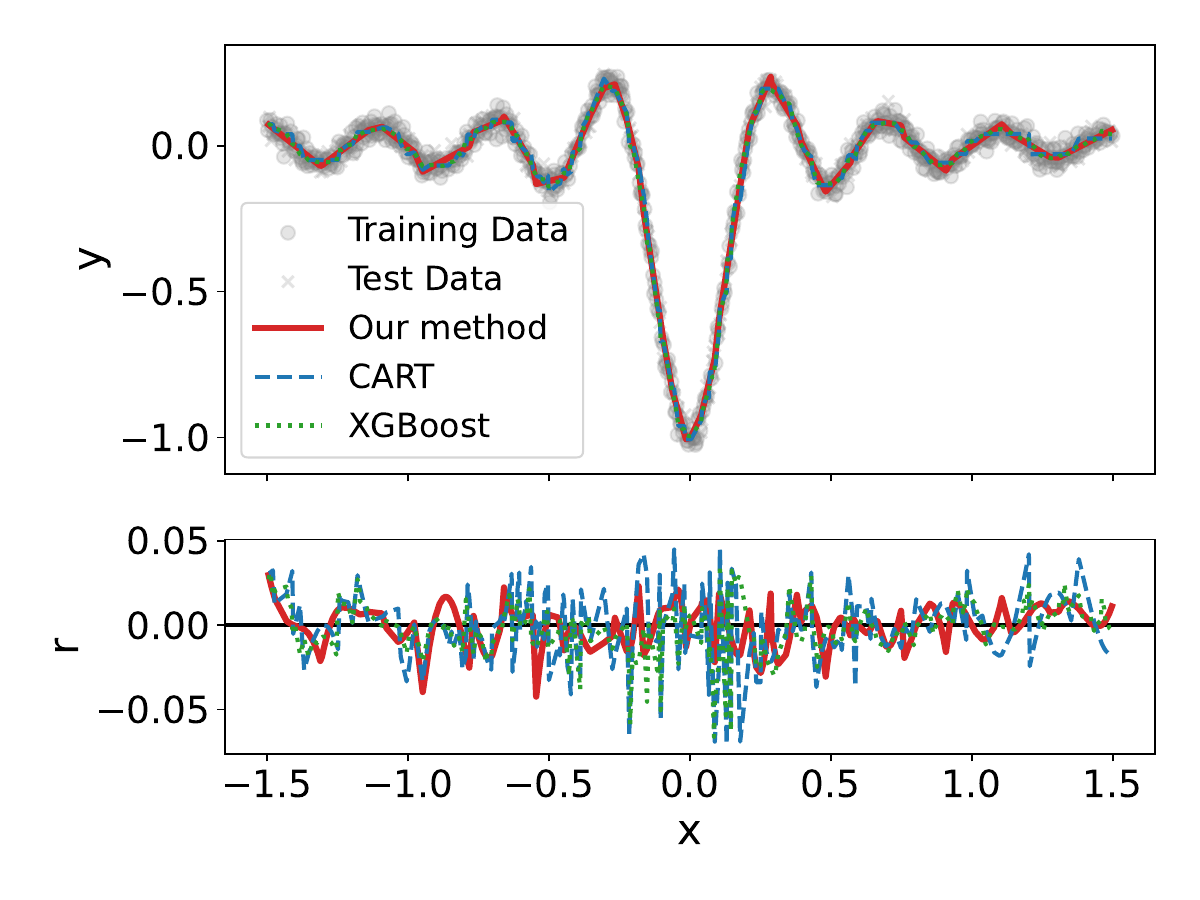}
        \caption{ \(y = -\frac{\sin(5\pi x)}{5\pi x} + 0.025\epsilon\)}
        \label{sinc}
    \end{subfigure}
    \hfill
    \begin{subfigure}{0.48\textwidth}
        \centering
        \includegraphics[width=\textwidth]{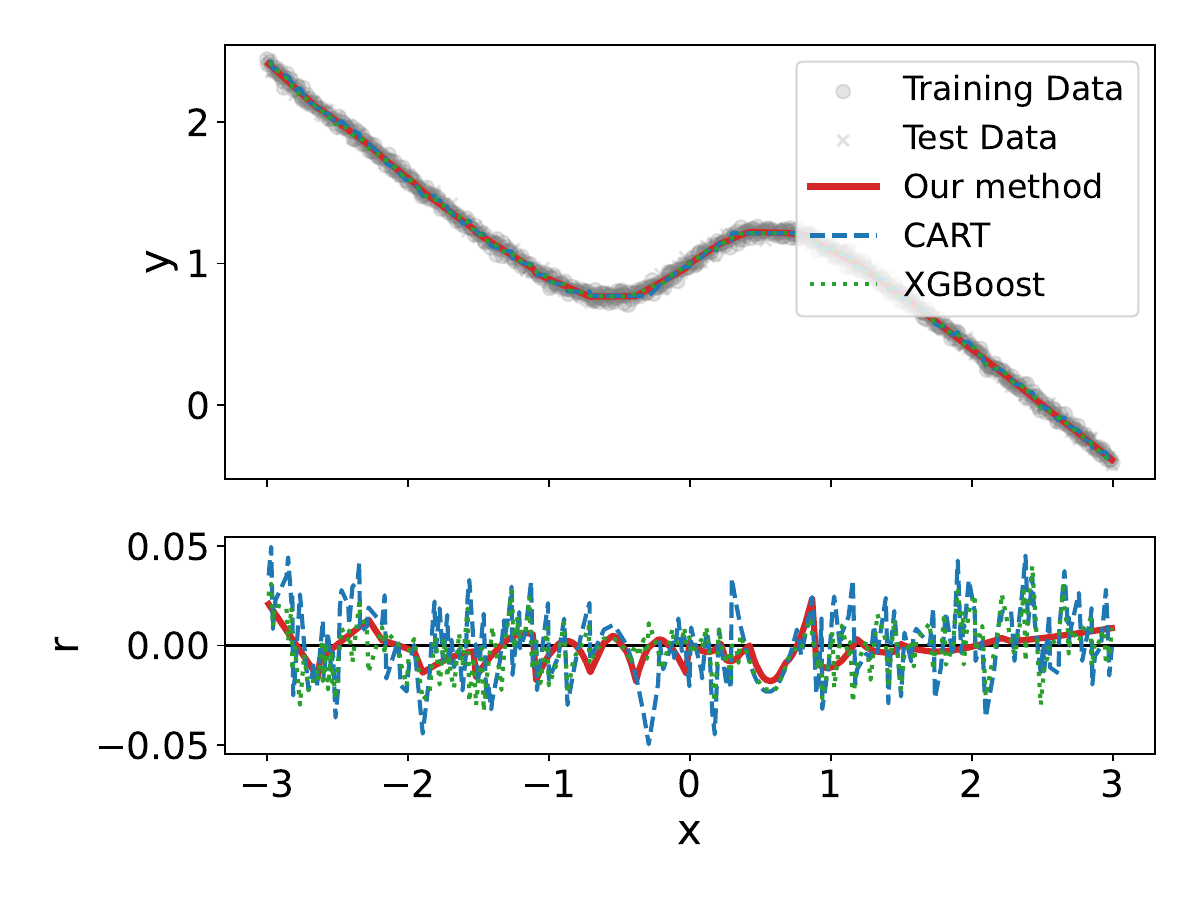}
        \caption{  \(y = \frac{2}{1+e^{-3x}} - 0.8x + 0.025\epsilon\)}
        \label{twist}
    \end{subfigure}
    \caption{Top: Performance of various methods approximating \texttt{sinc} and \texttt{twisted\_sigmoid} functions. 
    Bottom: Residuals \( r = y_{\text{pred}} - y_{\text{true}} \), representing the difference between the predicted and true values for each method.}
    \label{fig:model_comparison}
\end{figure*}

\begin{figure*}[t]
\vskip -0.1in
    \centering
    \begin{subfigure}{0.48\textwidth}
        \centering
        \includegraphics[width=\textwidth]{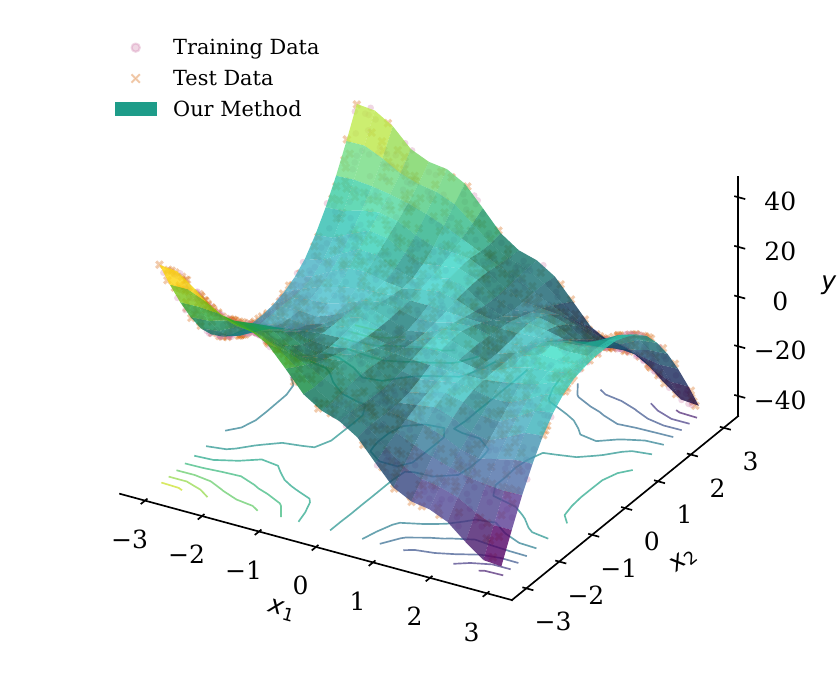}
        \caption{ Our Method approximating \( y=f_1(x_1,x_2)+0.05\epsilon\).}
    \end{subfigure}
    \hfill
    \begin{subfigure}{0.48\textwidth}
        \centering
        \includegraphics[width=\textwidth]{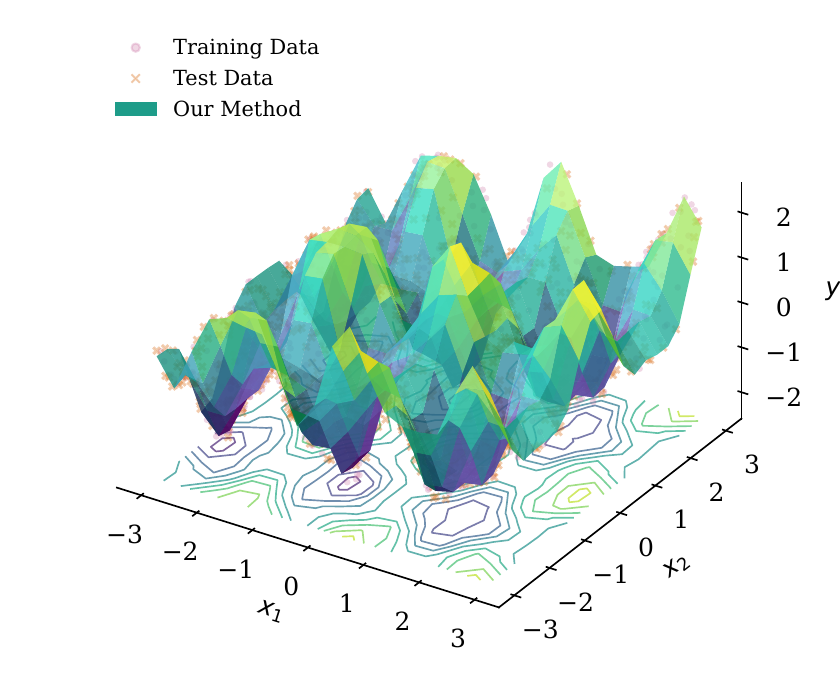}
        \caption{Our Method approximating \( y=f_2(x_1,x_2)+0.05\epsilon\).}
    \end{subfigure}
      \hfill
    \begin{subfigure}{0.48\textwidth}
        \centering
        \includegraphics[width=\textwidth]{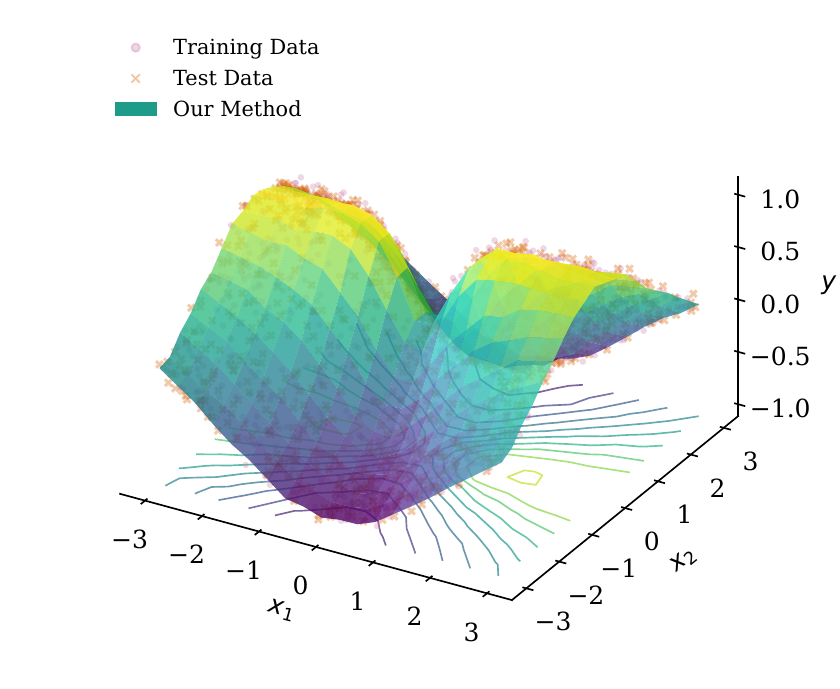}
        \caption{ Our Method approximating \( y=f_3(x_1,x_2)+0.05\epsilon\).}
        \label{f3_plot_appendix}
    \end{subfigure}
    \hfill
    \begin{subfigure}{0.48\textwidth}
        \centering
        \includegraphics[width=\textwidth]{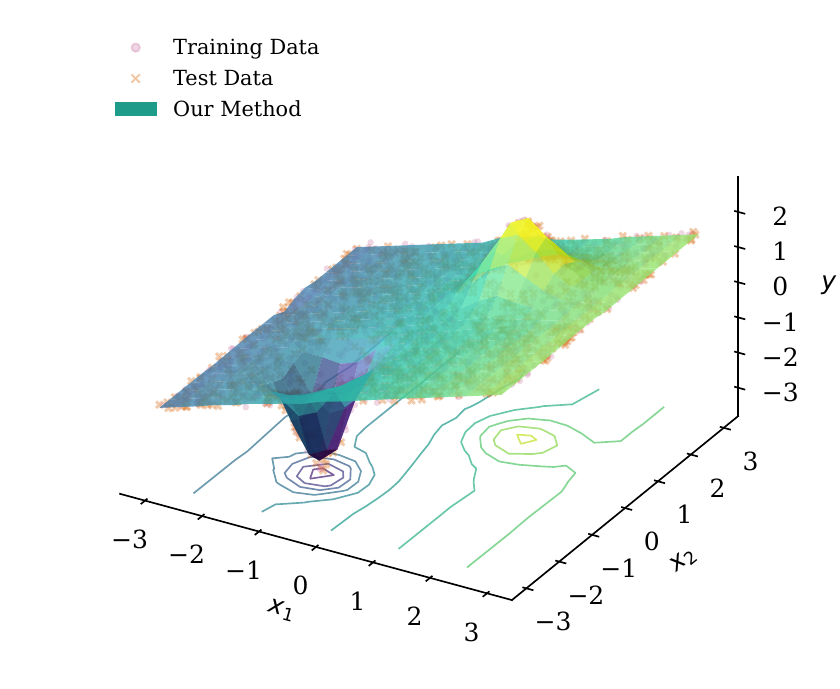}
        \caption{Our Method approximating \( y=f_4(x_1,x_2)+0.05\epsilon\).}
        \label{f4_plot_appendix}
    \end{subfigure}
    \caption{Approximation of 3D functions.
    We visualize the learned piecewise linear surface of our method along with training/testing points. Contours on the floor help read the depth. For clarity, only the fitted surface of our Method is shown; detailed quantitative comparisons with baseline methods are provided in the main text.
    }
    \label{fig:3d_model_comparison}.
    \vskip -0.3in
\end{figure*}

\clearpage
\newpage

\section{Hyperparameters}
\label{sec:appendix_hyperparameters}

This appendix details the hyperparameter search grids and tuning procedures used for both the 2D and 3D function approximation experiments and the real-world regression tasks. For each model, a grid search approach was employed to find the optimal hyperparameters that minimize the validation error. The specific search ranges for the baseline 3D approximation models are listed in Table \ref{tab:hyperparameter_grids}. Hyperparameters for all models were tuned via five-fold cross-validation on the training sets, with the final performance reported on the testing sets.

\begin{table}[!ht]
\small
    \centering
    \caption{Hyperparameter Search Grids for 3D Function Approximation Models}
    \label{tab:hyperparameter_grids}
    \begin{tabular}{lll}
        \toprule
        Model & Hyperparameter & Search Grid \\
        \midrule
        \multirow{4}{*}{HRT} 
                                    & \texttt{max\_depth} & \( \{4, 6, 8, 10, 12\} \) \\
                                    & \texttt{threshold} & \( \{0.01, 0.03, 0.05\} \) \\
                                    & \texttt{step\_size} & \( \{0.01, 0.5, 1, \texttt{'auto'}\} \)\\
                                    & \texttt{ridge\_alpha} & \( \{0.001, 0, 1\} \) \\
        \midrule
        \multirow{3}{*}{CART}       & \texttt{max\_depth} & \( \{3, 5, 7, 9, 11\} \) \\
                                    & \texttt{min\_samples\_split} & \( \{2, 5, 10\} \) \\
                                    & \texttt{min\_samples\_leaf} & \( \{1, 2, 4\} \) \\
        \midrule
        \multirow{4}{*}{XGBoost}    & \texttt{n\_estimators} & \( \{50, 100, 200\} \) \\
                                    & \texttt{learning\_rate} & \( \{0.01, 0.1, 0.2\} \) \\
                                    & \texttt{max\_depth} & \( \{2, 3, 5\} \) \\
                                    & \texttt{subsample} & \( \{0.7, 1.0\} \) \\
        \bottomrule
    \end{tabular}
\end{table}

For the single-tree models, we carefully constrained the search space to ensure fair complexity comparisons. For HRT, we tuned the \texttt{threshold} parameter, which controls the splitting criterion, and \texttt{max\_depth}, which limits the maximum depth of the constructed tree structure. We also varied the \texttt{ridge\_alpha} parameter, which adjusts the regularization strength in ridge regression to mitigate overfitting, and the \texttt{step\_size}, which determines the granularity of updates during optimization and affects convergence speed and stability. For CART, we explored variations in \texttt{max\_depth}, the minimum number of samples required to split an internal node (\texttt{min\_samples\_split}), and the minimum number of samples required to be at a leaf node (\texttt{min\_samples\_leaf}). 

For the ensemble models evaluated on the real-world datasets, we expanded our grid search to thoroughly benchmark HRT-Boost against highly optimized baseline methods. For our proposed HRT-Boost, we tuned the number of estimators (\texttt{n\_estimators}), maximum depth (\texttt{max\_depth}), learning rate (\texttt{learning\_rate}), and the ridge penalty (\texttt{ridge\_alpha}). For Random Forest (RF), we optimized the maximum depth, the number of estimators, the minimum samples per leaf, and the feature subsampling strategy (\texttt{max\_features}). For AdaBoost, we searched over the learning rate, the loss function (\texttt{loss} $\in \{\texttt{'linear'}, \texttt{'square'}\}$), and the number of estimators. For Scikit-GBM, we tuned the learning rate, maximum depth, number of estimators, and subsampling ratio. For XGBoost, we optimized the number of boosting rounds (\texttt{n\_estimators}), the step size shrinkage (\texttt{learning\_rate}), the maximum depth of a tree (\texttt{max\_depth}), and the subsample ratio of the training instances (\texttt{subsample}). For LightGBM, our grid included the learning rate, number of estimators, maximum leaves per tree (\texttt{num\_leaves}), and L1 regularization (\texttt{reg\_alpha}). 

Tables~\ref{tab:all_datasets_FHyperparameters} and \ref{tab:all_datasets_Hyperparameters} summarize the chosen hyperparameters via 5-fold cross-validation for the function approximation tasks and single-tree real-world benchmarks, respectively. The tuned hyperparameters for the neural network baselines (DTSemNet and DGT) are listed in Table~\ref{tab:hyperparameters of DGT and DTSemNet}, and optimal configurations for all ensemble methods across the diverse regression datasets are detailed in Table~\ref{tab:all_datasets_Hyperparameters2} and Table~\ref{tab:optimal_hyperparameters}.

\begin{table*}[htbp] 
\tiny
    \centering
    \caption{Optimal Hyperparameters Across Various Functions for Different Models} 
    \label{tab:all_datasets_FHyperparameters}
    \begin{tabular}{l l l} 
        \toprule
        \textbf{Function} & \textbf{Model} & \textbf{Hyperparameters} \\
        \midrule 
        \multirow{3}{*}{sinc} 
        & CART & \texttt{\{'max\_depth': 9, 'min\_samples\_leaf': 2, 'min\_samples\_split': 2\}} \\
        & XGB & \texttt{\{'learning\_rate': 0.1, 'max\_depth': 3, 'n\_estimators': 200, 'subsample': 0.7\}} \\
        & HRT & \texttt{\{'max\_depth': 6, 'ridge\_alpha': 0.001, 'step\_size': 0.01, 'threshold': 0.03\}} \\
                \midrule 
        \multirow{3}{*}{twisted\_sigmoid} 
        & CART & \texttt{\{'max\_depth': 9, 'min\_samples\_leaf': 2, 'min\_samples\_split': 5\}} \\
        & XGB & \texttt{\{'learning\_rate': 0.1, 'max\_depth': 3, 'n\_estimators': 100, 'subsample': 0.7\}} \\
        & HRT & \texttt{\{'max\_depth': 4, 'ridge\_alpha': 0.001, 'step\_size': 0.5, 'threshold': 0.01\}} \\
                \midrule 
        \multirow{3}{*}{$f_1(x_1,x_2)$} 
        & CART & \texttt{\{'max\_depth': 11, 'min\_samples\_leaf': 1, 'min\_samples\_split': 2\}} \\
        & XGB & \texttt{\{'learning\_rate': 0.2, 'max\_depth': 5, 'n\_estimators': 200, 'subsample': 0.7\}} \\
        & HRT & \texttt{\{'max\_depth': 12, 'threshold': 0.01, 'ridge\_alpha': 0, 'step\_size': 1\}} \\
                \midrule 
                \multirow{3}{*}{ $f_2(x_1,x_2)$} 
        & CART & \texttt{\{'max\_depth': 11, 'min\_samples\_leaf': 1, 'min\_samples\_split': 2\}} \\
        & XGB & \texttt{\{'learning\_rate': 0.2, 'max\_depth': 5, 'n\_estimators': 200, 'subsample': 0.7\}} \\
        & HRT & \texttt{\{'max\_depth': 12, 'threshold': 0.01, 'ridge\_alpha': 0, 'step\_size': 1\}} \\
                        \midrule 
        \multirow{3}{*}{$f_3(x_1,x_2)$} 
        & CART & \texttt{\{'max\_depth': 11, 'min\_samples\_leaf': 4, 'min\_samples\_split': 2\}} \\
        & XGB & \texttt{\{'learning\_rate': 0.1, 'max\_depth': 5, 'n\_estimators': 200, 'subsample': 0.7\}} \\
        & HRT & \texttt{\{'max\_depth': 8, 'threshold': 0.05, 'ridge\_alpha': 0, 'step\_size': 1\}} \\
                                \midrule 
        \multirow{3}{*}{$f_4(x_1,x_2)$} 
        & CART & \texttt{\{'max\_depth': 11, 'min\_samples\_leaf': 1, 'min\_samples\_split': 2\}} \\
        & XGB & \texttt{\{'learning\_rate': 0.1, 'max\_depth': 5, 'n\_estimators': 200, 'subsample': 0.7\}} \\
        & HRT & \texttt{\{'max\_depth': 12, 'threshold': 0.05, 'ridge\_alpha': 0, 'step\_size': 1\}} \\
        \bottomrule
    \end{tabular}
\end{table*}

\begin{table*}[htbp] 
\tiny
    \centering
    \caption{Optimal Hyperparameters Across Various Datasets for Different Models} 
    \label{tab:all_datasets_Hyperparameters}
    \begin{tabular}{l l l} 
        \toprule
        \textbf{Dataset } & \textbf{Model} & \textbf{Hyperparameters} \\
        \midrule 
        \multirow{3}{*}{ Abalone } 
        & M5 & \texttt{\{'M': 10.0\}} \\
        & Linear tree & \texttt{\{'max\_depth': 3, 'min\_samples\_leaf': 40\}} \\
        & HRT & \texttt{\{'max\_depth': 2, 'ridge\_alpha': 0, 'step\_size': 1, 'threshold': 1\}} \\
                \midrule 
        \multirow{3}{*}{CPUact } 
        & M5 & \texttt{\{'M': 4.0\}} \\
        & Linear tree & \texttt{\{'max\_depth': 3, 'min\_samples\_leaf': 10\}} \\
        & HRT & \texttt{\{'max\_depth': 2, 'ridge\_alpha': 0, 'step\_size': 0.5, 'threshold': 0.5\}} \\
                \midrule 
        \multirow{3}{*}{Ailerons } 
        & M5 & \texttt{\{'M': 40.0 \}} \\
        & Linear tree & \texttt{\{'max\_depth': 3, 'min\_samples\_leaf': 10\}} \\
        & HRT & \texttt{\{'max\_depth': 2, 'ridge\_alpha': 1, 'step\_size': 1, 'threshold': 5e-05\}} \\
         \midrule 
            \multirow{3}{*}{CTSlice } 
                & M5 & \texttt{\{'M': 20.0\}} \\
        & Linear tree & \texttt{\{'max\_depth': 5, 'min\_samples\_leaf': 10\}} \\
        & HRT & \texttt{\{'max\_depth': 7, 'ridge\_alpha': 10, 'step\_size': 1, 'threshold': 0.2\}} \\
                \midrule 
        \multirow{3}{*}{YearPred } 
        & M5 & \texttt{\{'M': 200.0\}} \\
        & Linear tree & \texttt{\{ max\_depth=5, min\_samples\_leaf=2000 \}} \\
        & HRT & \texttt{\{'max\_depth': 4, 'ridge\_alpha': 0, 'step\_size': ‘auto’, 'threshold': 0.5\}} \\
                \midrule 
        \multirow{3}{*}{Concrete } 
        & M5 & \texttt{\{'M': 4.0\}} \\
        & Linear tree & \texttt{\{'max\_depth': 3, 'min\_samples\_leaf': 40\}} \\
        & HRT & \texttt{\{'max\_depth': 3, 'ridge\_alpha': 0.1, 'step\_size': 0.5, 'threshold': 6\}} \\
                \midrule 
        \multirow{3}{*}{Airfoil } 
        & M5 & \texttt{\{ M': 4.0\}} \\
        & Linear tree & \texttt{\{max\_depth': 6, 'min\_samples\_leaf': 50 \}} \\
        & HRT & \texttt{\{'max\_depth': 5, 'ridge\_alpha': 0.01, 'step\_size': ‘auto’, 'threshold': 1.5\}} \\
                \midrule 
        \multirow{4}{*}{Fried } 
        & CART & \texttt{\{'max\_depth': 11, 'min\_samples\_leaf': 4, 'min\_samples\_split': 10\}} \\
        & M5 & \texttt{\{'M': 4.0\}} \\
        & Linear tree & \texttt{\{'max\_depth': 5, 'min\_samples\_leaf': 10\}} \\
        & HRT & \texttt{\{'max\_depth': 5, 'ridge\_alpha': 0.1, 'step\_size': 0.1, 'threshold': 0\}} \\
                                \midrule 
        \multirow{4}{*}{D-Elevators } 
        & CART & \texttt{\{'max\_depth': 5, 'min\_samples\_leaf': 2, 'min\_samples\_split': 10\}} \\
        & M5 & \texttt{\{'M': 20.0\}} \\
        & Linear tree & \texttt{\{'max\_depth': 5, 'min\_samples\_leaf': 20\}} \\
        & HRT & \texttt{\{'max\_depth': 5, 'ridge\_alpha': 10, 'step\_size': 1, 'threshold': 0\}} \\
                                \midrule 
        \multirow{4}{*}{D-Ailerons } 
        & CART & \texttt{\{'max\_depth': 5, 'min\_samples\_leaf': 4, 'min\_samples\_split': 10\}} \\
        & M5 & \texttt{\{'M': 20.0\}} \\
        & Linear tree & \texttt{\{'max\_depth': 3, 'min\_samples\_leaf': 20\}} \\
        & HRT & \texttt{\{'max\_depth': 3, 'ridge\_alpha': 10, 'step\_size': 'auto', 'threshold': 0\}} \\
                                \midrule 
        \multirow{4}{*}{Kinematics } 
        & CART & \texttt{\{'max\_depth': 9, 'min\_samples\_leaf': 4, 'min\_samples\_split': 10\}} \\
        & M5 & \texttt{\{'M': 4.0\}} \\
        & Linear tree & \texttt{\{'max\_depth': 7, 'min\_samples\_leaf': 40\}} \\
        & HRT & \texttt{\{'max\_depth': 6, 'ridge\_alpha': 1, 'step\_size': 'auto', 'threshold': 0\}} \\
                                        \midrule 
        \multirow{4}{*}{C\&C } 
        & CART & \texttt{\{'max\_depth': 3, 'min\_samples\_leaf': 4, 'min\_samples\_split': 2\}} \\
        & M5 & \texttt{\{'M': 40.0\}} \\
        & Linear tree & \texttt{\{'max\_depth': 9, 'min\_samples\_leaf': 10\}} \\
        & HRT & \texttt{\{'max\_depth': 1, 'ridge\_alpha': 300, 'step\_size': 0.01, 'threshold': 0.0\}} \\
        \bottomrule
    \end{tabular}
\end{table*}

\begin{table}[htbp]
\small
	\centering
        \caption{Hyperparameters for DGT and DTSemNet}
         \label{tab:hyperparameters of DGT and DTSemNet}  
         \begin{tabular}{lcccccccc} 
         \toprule
                Parameters & DTSemNet & DGT \\ 
         \midrule
            Height & 5 & 6 \\
            Learning Rate & 0.005 & 0.01\\
            Batch Size & 32 & 128\\
            Num Epochs & 80 & 200\\
            Optimizer & Adam & RMSprop\\
         \bottomrule
         \end{tabular}
\end{table}

\begin{table*}[htbp] 
\scriptsize
    \centering
    \caption{Optimal Hyperparameters Across Various Datasets for Different Models} 
    \label{tab:all_datasets_Hyperparameters2}\resizebox{\textwidth}{!}{
    \begin{tabular}{l l l} 
        \toprule
        \textbf{Dataset } & \textbf{Model} & \textbf{Hyperparameters} \\
        \midrule 
        \multirow{6}{*}{ Abalone } 
        & HRT-Boost & \texttt{{'learning\_rate': 0.1, 'max\_depth': 2, 'n\_estimators': 50, 'ridge\_alpha': 1.0}} \\
        & RF & \texttt{{'max\_depth': 7, 'max\_features': None, 'min\_samples\_leaf': 4, 'n\_estimators': 150}} \\
        & AdaBoost & \texttt{{'learning\_rate': 0.1, 'loss': 'linear', 'n\_estimators': 50}} \\
        & Scikit-GBM & \texttt{{'learning\_rate': 0.1, 'max\_depth': 4, 'n\_estimators': 50, 'subsample': 0.8}} \\
        & XGBoost & \texttt{{'colsample\_bytree': 0.8, 'learning\_rate': 0.1, 'max\_depth': 3, 'n\_estimators': 150, 'subsample': 0.8}} \\
        & LightGBM & \texttt{{'learning\_rate': 0.1, 'n\_estimators': 50, 'num\_leaves': 31, 'reg\_alpha': 0.1}} \\
        \midrule 
        \multirow{6}{*}{ CPUact } 
        & HRT-Boost & \texttt{{'learning\_rate': 0.1, 'max\_depth': 4, 'n\_estimators': 150, 'ridge\_alpha': 1.0}} \\
        & RF & \texttt{{'max\_depth': 7, 'max\_features': None, 'min\_samples\_leaf': 4, 'n\_estimators': 150}} \\
        & AdaBoost & \texttt{{'learning\_rate': 0.1, 'loss': 'linear', 'n\_estimators': 150}} \\
        & Scikit-GBM & \texttt{{'learning\_rate': 0.1, 'max\_depth': 4, 'n\_estimators': 150, 'subsample': 0.8}} \\
        & XGBoost & \texttt{{'colsample\_bytree': 0.8, 'learning\_rate': 0.1, 'max\_depth': 6, 'n\_estimators': 150, 'subsample': 0.8}} \\
        & LightGBM & \texttt{{'learning\_rate': 0.1, 'n\_estimators': 150, 'num\_leaves': 31, 'reg\_alpha': 0.1}} \\
        \midrule 
        \multirow{6}{*}{ Ailerons } 
        & HRT-Boost & \texttt{{'learning\_rate': 0.1, 'max\_depth': 3, 'n\_estimators': 50, 'ridge\_alpha': 1.0}} \\
        & RF & \texttt{{'max\_depth': 7, 'max\_features': None, 'min\_samples\_leaf': 4, 'n\_estimators': 150}} \\
        & AdaBoost & \texttt{{'learning\_rate': 0.5, 'loss': 'linear', 'n\_estimators': 50}} \\
        & Scikit-GBM & \texttt{{'learning\_rate': 0.1, 'max\_depth': 4, 'n\_estimators': 150, 'subsample': 0.8}} \\
        & XGBoost & \texttt{{'colsample\_bytree': 0.8, 'learning\_rate': 0.1, 'max\_depth': 6, 'n\_estimators': 150, 'subsample': 0.8}} \\
        & LightGBM & \texttt{{'learning\_rate': 0.1, 'n\_estimators': 150, 'num\_leaves': 31, 'reg\_alpha': 0.0}} \\
        \midrule 
        \multirow{5}{*}{ CTSlice } 
        & HRT-Boost & \texttt{{'learning\_rate': 0.1, 'max\_depth': 6, 'n\_estimators': 50, 'ridge\_alpha': 10}} \\
        & XGBoost & \texttt{{'colsample\_bytree': 0.8, 'learning\_rate': 0.1, 'max\_depth': 6, 'n\_estimators': 150, 'subsample': 0.8}} \\
        & RF & \texttt{{'max\_depth': 5, 'max\_features': None, 'min\_samples\_leaf': 4, 'n\_estimators': 150}} \\
        & AdaBoost & \texttt{{'learning\_rate': 0.5, 'loss': 'square', 'n\_estimators': 150}} \\
        & Scikit-GBM & \texttt{{'learning\_rate': 0.1, 'max\_depth': 4, 'n\_estimators': 150, 'subsample': 1.0}} \\
        & LightGBM & \texttt{{'learning\_rate': 0.1, 'n\_estimators': 150, 'num\_leaves': 63, 'reg\_alpha': 0.0}} \\
        \midrule 
        \multirow{6}{*}{ YearPred } 
        & HRT-Boost & \texttt{{'learning\_rate': 0.1, 'max\_depth': 4, 'n\_estimators': 50, 'ridge\_alpha': 0}} \\
        & RF & \texttt{{'max\_depth': 5, 'max\_features': None, 'min\_samples\_leaf': 4, 'n\_estimators': 150}} \\
        & AdaBoost & \texttt{{'learning\_rate': 0.5, 'loss': 'linear', 'n\_estimators': 150}} \\
        & Scikit-GBM & \texttt{{'learning\_rate': 0.1, 'max\_depth': 4, 'n\_estimators': 150, 'subsample': 1.0}} \\
        & XGBoost & \texttt{{'colsample\_bytree': 1.0, 'learning\_rate': 0.1, 'max\_depth': 6, 'n\_estimators': 150, 'subsample': 1.0}} \\
        & LightGBM & \texttt{{'learning\_rate': 0.1, 'n\_estimators': 150, 'num\_leaves': 63, 'reg\_alpha': 0.0}} \\
        \midrule 
        \multirow{6}{*}{ Concrete } 
        & HRT-Boost & \texttt{{'learning\_rate': 0.1, 'max\_depth': 4, 'n\_estimators': 150, 'ridge\_alpha': 1.0}} \\
        & RF & \texttt{{'max\_depth': 7, 'max\_features': None, 'min\_samples\_leaf': 1, 'n\_estimators': 150}} \\
        & AdaBoost & \texttt{{'learning\_rate': 1.0, 'loss': 'square', 'n\_estimators': 150}} \\
        & Scikit-GBM & \texttt{{'learning\_rate': 0.1, 'max\_depth': 4, 'n\_estimators': 150, 'subsample': 0.8}} \\
        & XGBoost & \texttt{{'colsample\_bytree': 0.8, 'learning\_rate': 0.1, 'max\_depth': 6, 'n\_estimators': 150, 'subsample': 0.8}} \\
        & LightGBM & \texttt{{'learning\_rate': 0.1, 'n\_estimators': 150, 'num\_leaves': 31, 'reg\_alpha': 0.0}} \\
        \midrule 
        \multirow{6}{*}{ Airfoil } 
        & HRT-Boost & \texttt{{'learning\_rate': 0.1, 'max\_depth': 4, 'n\_estimators': 150, 'ridge\_alpha': 0.1}} \\
        & RF & \texttt{{'max\_depth': 7, 'max\_features': None, 'min\_samples\_leaf': 1, 'n\_estimators': 150}} \\
        & AdaBoost & \texttt{{'learning\_rate': 1.0, 'loss': 'square', 'n\_estimators': 150}} \\
        & Scikit-GBM & \texttt{{'learning\_rate': 0.1, 'max\_depth': 4, 'n\_estimators': 150, 'subsample': 0.8}} \\
        & XGBoost & \texttt{{'colsample\_bytree': 1.0, 'learning\_rate': 0.1, 'max\_depth': 6, 'n\_estimators': 150, 'subsample': 0.8}} \\
        & LightGBM & \texttt{{'learning\_rate': 0.1, 'n\_estimators': 150, 'num\_leaves': 31, 'reg\_alpha': 0.1}} \\
        \midrule 
        \multirow{6}{*}{ Fried } 
        & HRT-Boost & \texttt{{'learning\_rate': 0.1, 'max\_depth': 3, 'n\_estimators': 150, 'ridge\_alpha': 1.0}} \\
        & RF & \texttt{{'max\_depth': 7, 'max\_features': None, 'min\_samples\_leaf': 4, 'n\_estimators': 150}} \\
        & AdaBoost & \texttt{{'learning\_rate': 1.0, 'loss': 'square', 'n\_estimators': 150}} \\
        & Scikit-GBM & \texttt{{'learning\_rate': 0.1, 'max\_depth': 4, 'n\_estimators': 150, 'subsample': 0.8}} \\
        & XGBoost & \texttt{{'colsample\_bytree': 0.8, 'learning\_rate': 0.1, 'max\_depth': 6, 'n\_estimators': 150, 'subsample': 0.8}} \\
        & LightGBM & \texttt{{'learning\_rate': 0.1, 'n\_estimators': 150, 'num\_leaves': 31, 'reg\_alpha': 0.0}} \\
        \midrule 
        \multirow{6}{*}{  D-Elevators } 
        & HRT-Boost & \texttt{{'learning\_rate': 0.1, 'max\_depth': 4, 'n\_estimators': 50, 'ridge\_alpha': 1.0}} \\
        & RF & \texttt{{'max\_depth': 7, 'max\_features': None, 'min\_samples\_leaf': 1, 'n\_estimators': 150}} \\
        & AdaBoost & \texttt{{'learning\_rate': 0.1, 'loss': 'linear', 'n\_estimators': 50}} \\
        & Scikit-GBM & \texttt{{'learning\_rate': 0.1, 'max\_depth': 4, 'n\_estimators': 50, 'subsample': 0.8}} \\
        & XGBoost & \texttt{{'colsample\_bytree': 0.8, 'learning\_rate': 0.1, 'max\_depth': 3, 'n\_estimators': 150, 'subsample': 1.0}} \\
        & LightGBM & \texttt{{'learning\_rate': 0.1, 'n\_estimators': 50, 'num\_leaves': 31, 'reg\_alpha': 0.0}} \\
        \midrule 
        \multirow{6}{*}{ D-Ailerons } 
        & HRT-Boost & \texttt{{'learning\_rate': 0.1, 'max\_depth': 2, 'n\_estimators': 150, 'ridge\_alpha': 1.0}} \\
        & RF & \texttt{{'max\_depth': 7, 'max\_features': None, 'min\_samples\_leaf': 4, 'n\_estimators': 150}} \\
        & AdaBoost & \texttt{{'learning\_rate': 0.1, 'loss': 'linear', 'n\_estimators': 50}} \\
        & Scikit-GBM & \texttt{{'learning\_rate': 0.1, 'max\_depth': 4, 'n\_estimators': 50, 'subsample': 0.8}} \\
        & XGBoost & \texttt{{'colsample\_bytree': 0.8, 'learning\_rate': 0.1, 'max\_depth': 3, 'n\_estimators': 150, 'subsample': 0.8}} \\
        & LightGBM & \texttt{{'learning\_rate': 0.1, 'n\_estimators': 50, 'num\_leaves': 31, 'reg\_alpha': 0.0}} \\
        \midrule 
        \multirow{6}{*}{ Kinematics } 
        & HRT-Boost & \texttt{{'learning\_rate': 0.1, 'max\_depth': 4, 'n\_estimators': 150, 'ridge\_alpha': 1.0}} \\
        & RF & \texttt{{'max\_depth': 7, 'max\_features': None, 'min\_samples\_leaf': 4, 'n\_estimators': 150}} \\
        & AdaBoost & \texttt{{'learning\_rate': 1.0, 'loss': 'square', 'n\_estimators': 150}} \\
        & Scikit-GBM & \texttt{{'learning\_rate': 0.1, 'max\_depth': 4, 'n\_estimators': 150, 'subsample': 0.8}} \\
        & XGBoost & \texttt{{'colsample\_bytree': 1.0, 'learning\_rate': 0.1, 'max\_depth': 6, 'n\_estimators': 150, 'subsample': 0.8}} \\
        & LightGBM & \texttt{{'learning\_rate': 0.1, 'n\_estimators': 150, 'num\_leaves': 63, 'reg\_alpha': 0.1}} \\
        \midrule  
        \multirow{6}{*}{ C\&C } 
        & HRT-Boost & \texttt{{'learning\_rate': 0.1, 'max\_depth': 3, 'n\_estimators': 50, 'ridge\_alpha': 300}} \\
        & RF & \texttt{{'max\_depth': 7, 'max\_features': 'sqrt', 'min\_samples\_leaf': 4, 'n\_estimators': 150}} \\
        & AdaBoost & \texttt{{'learning\_rate': 0.1, 'loss': 'linear', 'n\_estimators': 50}} \\
        & Scikit-GBM & \texttt{{'learning\_rate': 0.1, 'max\_depth': 4, 'n\_estimators': 50, 'subsample': 0.8}} \\
        & XGBoost & \texttt{{'colsample\_bytree': 1.0, 'learning\_rate': 0.1, 'max\_depth': 3, 'n\_estimators': 50, 'subsample': 1.0}} \\
        & LightGBM & \texttt{{'learning\_rate': 0.1, 'n\_estimators': 50, 'num\_leaves': 63, 'reg\_alpha': 0.1}} \\
        \bottomrule
    \end{tabular}}
\end{table*}

\begin{table*}[htbp]
\centering
\caption{Optimal Hyperparameters for TabM and TabNet Across Various Datasets}
\label{tab:optimal_hyperparameters}\resizebox{\textwidth}{!}{
\begin{tabular}{lll}
\toprule
\textbf{Dataset} & \textbf{Model} & \textbf{ Hyperparameters } \\ 
\midrule
\multirow{2}{*}{Abalone} & TabM & \texttt{\{'d\_block': 128, 'dropout': 0.1, 'k': 16, 'learning\_rate': 0.002\}} \\
 & TabNet & \texttt{\{'learning\_rate': 0.02, 'n\_a': 16, 'n\_d': 8, 'n\_steps': 3\}} \\ \midrule
 \multirow{2}{*}{CPUact} & TabM & \texttt{\{'d\_block': 128, 'dropout': 0.0, 'k': 16, 'learning\_rate': 0.002\}} \\
 & TabNet & \texttt{\{'learning\_rate': 0.02, 'n\_a': 8, 'n\_d': 16, 'n\_steps': 4\}} \\ \midrule
\multirow{2}{*}{Ailerons} & TabM & \texttt{\{'d\_block': 128, 'dropout': 0.1, 'k': 8, 'learning\_rate': 0.002\}} \\
 & TabNet & \texttt{\{'learning\_rate': 0.02, 'n\_a': 16, 'n\_d': 16, 'n\_steps': 3\}} \\ \midrule
 \multirow{2}{*}{CTSlice} & TabM & \texttt{\{'d\_block': 128, 'dropout': 0.1, 'k': 16, 'learning\_rate': 0.001\}} \\
 & TabNet & \texttt{\{'learning\_rate': 0.02, 'n\_a': 16, 'n\_d': 16, 'n\_steps': 3\}} \\ \midrule
  \multirow{2}{*}{YearPred} & TabM & \texttt{\{'d\_block': 128, 'dropout': 0.1, 'k': 16, 'learning\_rate': 0.001\}} \\
 & TabNet & \texttt{\{'learning\_rate': 0.02, 'n\_a': 8, 'n\_d': 16, 'n\_steps': 3\}} \\ \midrule
 \multirow{2}{*}{Concrete} & TabM & \texttt{\{'d\_block': 128, 'dropout': 0.0, 'k': 8, 'learning\_rate': 0.002\}} \\
 & TabNet & \texttt{\{'learning\_rate': 0.02, 'n\_a': 16, 'n\_d': 16, 'n\_steps': 3\}} \\ \midrule
\multirow{2}{*}{Airfoil} & TabM & \texttt{\{'d\_block': 128, 'dropout': 0.0, 'k': 8, 'learning\_rate': 0.002\}} \\
 & TabNet & \texttt{\{'learning\_rate': 0.02, 'n\_a': 16, 'n\_d': 8, 'n\_steps': 4\}} \\ \midrule
\multirow{2}{*}{Fried} & TabM & \texttt{\{'d\_block': 128, 'dropout': 0.1, 'k': 8, 'learning\_rate': 0.002\}} \\
 & TabNet & \texttt{\{'learning\_rate': 0.01, 'n\_a': 16, 'n\_d': 8, 'n\_steps': 3\}} \\ \midrule

\multirow{2}{*}{D-Elevators} & TabM & \texttt{\{'d\_block': 128, 'dropout': 0.1, 'k': 8, 'learning\_rate': 0.002\}} \\
 & TabNet & \texttt{\{'learning\_rate': 0.02, 'n\_a': 8, 'n\_d': 16, 'n\_steps': 4\}} \\ \midrule
 \multirow{2}{*}{D-Ailerons} & TabM & \texttt{\{'d\_block': 128, 'dropout': 0.1, 'k': 8, 'learning\_rate': 0.002\}} \\
 & TabNet & \texttt{\{'learning\_rate': 0.02, 'n\_a': 8, 'n\_d': 8, 'n\_steps': 3\}} \\ \midrule

\multirow{2}{*}{Kinematics} & TabM & \texttt{\{'d\_block': 128, 'dropout': 0.0, 'k': 8, 'learning\_rate': 0.002\}} \\
 & TabNet & \texttt{\{'learning\_rate': 0.02, 'n\_a': 16, 'n\_d': 16, 'n\_steps': 3\}} \\ \midrule

\multirow{2}{*}{C\&C} & TabM & \texttt{\{'d\_block': 128, 'dropout': 0.1, 'k': 16, 'learning\_rate': 0.001\}} \\
 & TabNet & \texttt{\{'learning\_rate': 0.02, 'n\_a': 16, 'n\_d': 8, 'n\_steps': 3\}} \\ 
\bottomrule
\addlinespace[1ex]
\multicolumn{3}{l}{\footnotesize \textbf{Common Parameters:}} \\
\multicolumn{3}{l}{\footnotesize \textbullet \textbf{TabM}: \texttt{ 'batch\_size': 256, 'max\_epochs': 50, 'n\_blocks': 2, 'weight\_decay': 0.0003}} \\
\multicolumn{3}{l}{\footnotesize \textbullet \textbf{TabNet}: \texttt{'max\_epochs': 50, 'patience': 10}} \\
\end{tabular}}
\end{table*}

\clearpage
\newpage

\section{Estimation of model size and FLOPs}
\label{appendix_FLOPs}

We reported the single-sample inference FLOPs, and training FLOPs for each fitted model. All estimates were computed after preprocessing, using the transformed feature dimension. The reported FLOPs are analytic estimates derived from the fitted model structure, rather than hardware-profiler measurements.

HRT inference FLOPs were computed along a single root-to-leaf path. A dot product of length $p$ was counted as $p$ multiplications and $p-1$ additions, and each internal split also incurred one comparison. The reported HRT inference FLOPs were averaged over all root-to-leaf paths. For HRT-boost, parameters and inference FLOPs were summed over all HRT base learners, with additional scalar operations for scaling and accumulating the boosting outputs.

For tree-based baselines, including random forests, AdaBoost, gradient boosting, XGBoost, and LightGBM, inference FLOPs were estimated from the average number of split decisions visited by one sample, together with the scalar operations required to aggregate trees in ensemble models. For XGBoost and LightGBM, the estimates additionally accounted for missing-value or default-direction handling.

Training FLOPs for tree-based methods were estimated from their fitted structures and training settings. For sklearn-style trees, the main split-search cost at a node was taken to be proportional to
\[
    n_v d_v \log_2(n_v + 1),
\]
where $n_v$ is the number of samples reaching node $v$ and $d_v$ is the effective number of candidate features considered at that node. Additional terms were included for sample routing, leaf-value estimation, residual or gradient updates, sample-weight updates, and ensemble aggregation when applicable. For histogram-based boosted trees, including XGBoost and LightGBM, the estimates included gradient and Hessian computation, histogram construction, bin-wise split scanning, sample routing, and leaf-weight updates.

Deep learning baselines include TabM and TabNet.  TabM inference FLOPs were estimated from the transformed feature dimension, the number of ensemble heads, the number of blocks, and the hidden width, including input projection, block-wise transformations, element-wise operations, output projection, and head aggregation. TabNet inference FLOPs were estimated from the transformed feature dimension, the number of decision steps, and the decision and attention widths, including feature transformations, attentive-transformer operations, feature masking, sparse normalization, and decision aggregation. Neural training FLOPs were estimated as three times the forward-pass cost over all training examples and epochs, plus an optimizer-update cost proportional to the number of trainable parameters and the number of minibatches.

\section{Dataset Details}

$\bullet$ \textbf{Abalone:} This dataset was designed to predict the age of abalone, given eight features including categorical and numerical measurements describing physical characteristics.

$\bullet$ \textbf{CPUact:} This dataset was designed to predict the portion of time that CPUs run in user mode, given numerical features describing system performance measurements.

$\bullet$ \textbf{Ailerons:} This dataset was designed to predict the aileron control command, given numerical features describing the status of the aircraft.

$\bullet$ \textbf{CTSlice:} This dataset was designed to predict the relative location of CT slices on the axial axis of the human body, given numerical features describing bone and air distribution patterns extracted from CT images.

$\bullet$ \textbf{YearPred:} This dataset was designed to predict the release year of a song, given numerical features describing audio characteristics.

$\bullet$ \textbf{Concrete:} This dataset was designed to predict the compressive strength of concrete, given eight quantitative mixture components and the age of the concrete as input features.

$\bullet$ \textbf{Airfoil:} This dataset was designed to predict the scaled sound pressure level of airfoils, given five numerical features describing aerodynamic and geometric properties.

$\bullet$ \textbf{Fried:} This dataset was designed to predict a continuous target variable, given ten numerical features generated independently and uniformly.

$\bullet$ \textbf{D-Elevators:} This dataset was designed to predict the variation of elevator control signals for an F16 aircraft, given six numerical features describing the aircraft's state.

$\bullet$ \textbf{D-Ailerons:} This dataset was designed to predict the variation of aileron control signals for an F16 aircraft, given numerical features describing the aircraft's state.

$\bullet$ \textbf{Kinematics:} This dataset was designed to predict the forward kinematics of an 8-link robot arm, given numerical features describing joint configurations. The variant used (8nm) is highly nonlinear and moderately noisy.

$\bullet$ \textbf{C\&C (Communities \& Crime):} This dataset combines census, law-enforcement, and crime statistics to predict community crime rates. It is high-dimensional and heterogeneous, serving as a standard benchmark for socio-demographic prediction.

\begin{table}[htbp]
\small
	\centering
        \caption{Dataset Details }
         \label{tab:data_details}  
         \begin{tabular}{lcccccccc} 
         \toprule
                Dataset  & features & Total sample number   & Source\\ 
         \midrule
            Abalone & 8 & 4177  &  UCI\footnotemark[1]\\
            CPUact  & 21 & 8192  & Delve\footnotemark[2]\\
            Ailerons  & 40 &13750 & LIACC\footnotemark[3] \\
            CTSlice  & 384 &53500 &  UCI\footnotemark[1]\\
            YearPred  & 90 & 515345 &  UCI\footnotemark[1]\\
                        Concrete  & 8 & 1030 &  UCI\footnotemark[1]\\
            Airfoil  & 5 & 1503  &  UCI\footnotemark[1]\\
            Fried & 10 & 40768 & LIACC\footnotemark[3]\\
            D-Elevators  & 6 & 9517 & LIACC\footnotemark[3]\\
           D-Ailerons  & 5 & 7129 & LIACC\footnotemark[3]\\
            Kinematics  & 8 & 8192 & LIACC\footnotemark[3]\\
            C\&C & 127 & 1994  &  UCI\footnotemark[1]\\
         \bottomrule
         \end{tabular}
\end{table}

\footnotetext[1]{\url{https://archive.ics.uci.edu/}}
\footnotetext[2]{\url{https://www.cs.toronto.edu/~delve/data/comp-activ/desc.html}}
\footnotetext[3]{\url{https://www.dcc.fc.up.pt/~ltorgo/Regression/DataSets.html}}

\end{document}